\documentclass{amsart}%
\usepackage[T1]{fontenc}
\usepackage{amsfonts}
\usepackage{svg}
\usepackage{mathtools,graphicx}
\usepackage{longtable}
\usepackage{array}
\usepackage{latexsym}
\usepackage{hyperref}
\usepackage{amsmath}
\usepackage{mathrsfs}
\usepackage{pstricks}
\usepackage{listings}
\usepackage{hyperref}
\usepackage{enumitem}
\usepackage{csvsimple}
\usepackage{todonotes}
\usepackage{amssymb,bm}
\usepackage{amsmath}
\usepackage{amsthm}
\usepackage{tikz}
\usepackage{tikz-cd}
\usepackage{epsfig}
\usepackage{verbatim}
\usepackage{float}
\usepackage{tabularx}
\usepackage{algorithmic}
\usepackage{amssymb}
\usepackage{graphicx}%
\providecommand{\U}[1]{\protect\rule{.1in}{.1in}}
\definecolor{codegray}{rgb}{0.5,0.5,0.5}
\lstdefinestyle{mystyle}{
commentstyle=\color{codegreen},
keywordstyle=\color{magenta},
numberstyle=\tiny\color{codegray},
stringstyle=\color{codepurple},
basicstyle=\ttfamily\footnotesize,
breakatwhitespace=false,
breaklines=true,
captionpos=b,
keepspaces=true,
numbers=left,
numbersep=5pt,
showspaces=false,
showstringspaces=false,
showtabs=false,
tabsize=2
}
\lstdefinelanguage{GAP}{
basicstyle=\ttfamily,
keywords={true, false, function, return, fail, if, in, while, do, od, else, elif, fi, break, continue},
keywordstyle=\color{blue}\bfseries,
otherkeywords={      >, <, ==
},
breaklines=true,
identifierstyle=\color{black},
sensitive=True,
comment=[l]{\#},
commentstyle=\color{cyan},
stringstyle=\color{red},
morestring=[b]',
morestring=[b]"
}
\providecommand{\U}[1]{\protect\rule{.1in}{.1in}}

\newcolumntype{Y}{>{\raggedleft\arraybackslash}X}

\def\R{{\mathbb{R}}}

\def\vs{\vskip.3cm}

\newtheorem{theorem}{Theorem}

\newtheorem{corollary}[theorem]{Corollary}

\newtheorem{definition}[theorem]{Definition}

\newtheorem{lemma}[theorem]{Lemma}

\newtheorem{proposition}[theorem]{Proposition}
\newtheorem{remark}[theorem]{Remark}

\def\vs{\vskip.3cm}

\def\R{{\mathbb{R}}}

\def\vs{\vskip.3cm}

\def\cV{\mathcal V}

\def\id{\text{\rm Id}}

\begin{document}
\title[Topological Symmetry Index]{A General Topological Symmetry Index for Critical Orbits In Neural Network Loss Landscapes}
\author[J. Liu]{Jingzhou Liu}
\address{Department of Mathematical Sciences, Texas Christian University
%Richardson, 75080 USA
}
\address{Texas Christian University}
\email{Jingzhou.Liu@UTDallas.edu}

\begin{abstract}
A systematic understanding of the global structure of neural-network loss landscapes remains challenging due to their high dimensionality. Symmetries arising from network architectures and learning objectives provide a way to investigate aspects of this structure. Although a growing number of studies have drawn attention to these symmetries, %and discussed their implications for loss minima, training dynamics and \textcolor{red}{internal representations}
a systematic characterization of them, as well as of the critical orbits they give rise to, remains incomplete. To address this gap, in this work  we establish a topological symmetry index, based on the methodologies from the family of equivariant degree theories, to decode these symmetries. The frameworks apply to arbitrary network architectures and include methods applicable to degenerate critical orbits. The index %, globally or within a prescribed region of the loss landscape,
 detects the isotropy of the critical points and quantifies the corresponding net topological contributions. Several examples, including softplus MLPs, Transformers, and equivariant neural networks, are presented to illustrate the computations.
\end{abstract}\vs
\maketitle

%\medskip

\noindent \textbf{Keywords:}
Symmetries, Topological Index, Equivariant Degree Theory, Neural Network, Loss Landscape.

\section{Introduction}
Symmetries in neural network give rise to a rich variety of phenomena in its high-dimensional loss landscapes. One line of research, developed by Arjevani and Field, studies the spurious minima and their symmetries in shallow networks, as well as their bifurcations as network width $k$ varies. For two-layer ReLU networks in a specific teacher–student setting, they observed in \cite{arjevani2019principle} that critical points obtained by SGD have symmetries conjugate to maximal subgroups of $S_k$ of intransitive type. In \cite{arjevani2021symmetry}, they analytically constructed these families of critical points and expressed them as convergent power series in $k^{-1/2}.$ The corresponding Hessian spectra were analyzed in \cite{arjevani2020analytic}. %, yielding eigenvalue multiplicities and asymptotic expansions as $k\to\infty$. 
They further extended analogous results to symmetric tensor decomposition in \cite{arjevani2021symmetry2}, and studied the optimization landscape of networks with polynomial activations \cite{arjevani2026geometry}. In \cite{arjevani2021analytic}, they identified width thresholds in a shallow ReLU network at which saddles turn into local minima and  rigorously established in \cite{arjevani2022annihilation} that over-parameterization (adding neurons) introduces negative Hessian eigenvalues and thereby turns local minima into saddles. Using equivariant bifurcation theory, they further showed in \cite{arjevani2022equivariant} that the local minima emerge through steady-state bifurcations along the standard representation of $S_k$ rather than through spontaneous symmetry breaking of global minima. Correspondingly, Liu \cite{liu2025leaky} studied bifurcations from global minima in leaky ReLU networks and obtained unified results on the bifurcating branches and their symmetries.\vskip 0.3cm
\noindent Beyond the structure of critical points, other work connects network symmetries to the broader geometry of loss landscapes, training dynamics, and optimization. Zhao et al. \cite{zhao2022symmetries} identified flat low-loss valleys with continuous symmetry-induced critical orbits, and derived initialization-fixed conserved quantities that constrain gradient-flow reachability. In \cite{zhao2025understanding}, they further studied the connectivity of minima and identified loss barriers along linear mode connectivity. Zamir et al. \cite{zamir2025improving} improved the learning-to-optimize (L2O) framework by enabling the optimizer to select, along group orbits, parameters that are more favorable for subsequent descent prior to local updates. They also address the discovery of symmetry in \cite{zhao2026discovering}.\vskip0.3cm
\noindent The use of symmetry extends further to network design, where equivariant architectures incorporate prescribed symmetries to improve data efficiency and physical consistency. Kondor and Trivedi \cite{kondor2018generalization} established, under appropriate assumptions, the equivalence between compact-group equivariance and generalized convolutional structure. Tensor field networks \cite{thomas2018tensor} combine spherical harmonics and Clebsch–Gordan coupling to construct equivariant layers for three-dimensional point clouds, while \cite{kondor2018clebsch} use tensor products and irreducible decompositions to implement equivariant nonlinearities entirely in Fourier space. \cite{fox2022concentric} developed concentric spherical networks combining intra-sphere graph convolutions with inter-sphere radial convolutions, with equivariance established for the icosahedral rotation group. The e3nn framework \cite{geiger2022e3nn} provides a unified set of composable operations for constructing \(E(3)\)-equivariant architectures. For physical dynamical systems, Wang et al. \cite{wang2021incorporating} designed forecasting networks incorporating different physical symmetries, and demonstrated improvements in prediction and generalization. They subsequently introduced relaxed group and steerable convolutions with symmetry regularization to accommodate imperfectly symmetric dynamics \cite{wang2022approximately}.
\vskip 0.3cm
These studies demonstrate the importance of symmetry for understanding loss landscapes and training dynamics, as well as its value for optimizer and network design. %They also yield precise information in the settings they address. 
However, when it comes to the symmetric structure of the loss landscape, most of the above analyses either focus on a specific network architecture or presuppose \textit{a priori} knowledge of the critical points. %For example, Arjevani and Field obtain detailed results through constructions and spectral analyses developed for specific models, while Zhao and collaborators take given minima as the starting point for their symmetry-orbit analysis. 
These considerations motivate a complementary question:\begin{quote}
\itshape Can a unified, computable criterion certify the presence of critical orbits and characterize their symmetries
across different network architectures without explicitly constructing the
critical points?
\end{quote}\vskip.3cm
In this work, we address this question by introducing a topological symmetry index based on the methodology of Equivariant Degree Theory. Under suitable hypotheses, the index provides criteria for the existence of critical orbits with specified symmetries within a prescribed region. \vskip0.3cm%This information need not persist as the region expands, since contributions to the index may cancel, obscuring symmetry information detected in smaller regions. We present the underlying theory in Sections \ref{sec:orthogonalrep} and \ref{sec:manifold}.\vskip0.3cm
To formulate the problem precisely, let \(G\) be a symmetry group inherent in a neural network and acting on its parameter space. The interplay among architectural structure, learning setting, data distribution, activation functions, and loss specification can endow the resulting loss function $\mathscr L$ with $G$-invariance, i.e. $\mathscr L(g\cdot \theta)=\mathscr L(\theta),g\in G.$
When the group action is isometric, this invariance implies the \(G\)-equivariance of its gradient:
\begin{equation}\label{eq: equivaraince}
\nabla \mathscr L(g\cdot \theta)=g\cdot \nabla \mathscr L(\theta),\;\; g\in G.
\end{equation}
%In this work, we aim to provide a systematic theoretical characterization of the symmetries of critical points arising from this broad class of \(G\)-invariant loss functions. In particular, we seek to determine which symmetry types must occur and the corresponding topological weights are present, thereby providing a local structural description of the underlying loss landscape.\vskip 0.3cm
\noindent We therefore seek the zeros of \(\nabla\mathscr L\), which are precisely the critical points of \(\mathscr L\). The equivariance allows us to obtain integer-valued topological weights associated with orbit types. Let $\Omega$ be a prescribed region under certain conditions (see section \ref{sec: terminology} for more details), we define the symmetry index
\begin{equation}\label{eq:index}
\mathcal I_G(\mathscr L,\Omega)=\Big\{\big((K_{j_1}),n_{j_1}\big),\big((K_{j_2}),n_{j_2}\big),\cdots,\big((K_{j_s}),n_{j_s}\big)\Big\} 
\end{equation}
by extracting the \textbf{maximal orbit types} among the orbit types with nonzero coefficients associated with \(\nabla\mathscr L\) on \(\Omega\), written as
\begin{equation}\label{eq: extract}
\mathbf a=\sum_{j=1}^mn_j(K_j)\in U(G).
\end{equation}
Here, \(U(G)\) is the Euler ring (see appendix \ref{app:Euler} for definition and properties of Euler ring), and each integer coefficient \(n_j,\;j=1,\cdots,m\) is associated with the conjugacy class \((K_j),\) where $K_j$ is a subgroup of $G$. A nonzero coefficient certifies the existence of a critical orbit whose isotropy type is at least $(K_j),$ and the selected maximal orbit types are denoted by \((K_{j_1}),\ldots,(K_{j_s})\). The underlying degree theories and computational methods are introduced in Sections \ref{sec:orthogonalrep} and \ref{sec:manifold}.  One is also referred to \cite{balanov2025degree,balanov2006applied,balanov2010short,erbe1992s1,dabkowski2017twisted,gkeba2008some,gkeba1997degree,gkeba1994equivariant,gkeba2005classification,rybicki2005degree} and the references therein for more comprehensive details about the constructions of equivariant degrees, as well as their applications \cite{garcia2021global, balanov2021patterns, berezovik2017dihedral,balanov2018equivariant,garcia2019solutions,liu2025leaky}. %\textcolor{red}{Moreover, \(N_K\) gives the exact number of critical \((K)\)-orbits. Comparing \(N_K\) with \(n_K\), together with the tools from Morse indices, further distinguishes the nature of the critical points as local minima or saddle points.\vskip0.3cm
\begin{theorem}[Topological symmetry index of a neural-network] Suppose that the parameter space $V$ is a finite-dimensional orthogonal representation of a compact Lie group $G$, and that 
the loss function \(\mathscr L\) is twice continuously
differentiable and $G$-invariant. Let \(\Omega\subset V\) be an open bounded and \(G\)-invariant parameter subset such that \(\nabla\mathscr L\) is nonzero on \(\partial\Omega\). Then the symmetry index $\mathcal I_G(\mathscr{L},\Omega)$ defined in \eqref{eq:index} is well defined. Moreover, for every 
\[
\bigl((K_{j_r}),n_{j_r}\bigr)\in\mathcal I_G(\mathscr L,\Omega),\;\; r=1,\cdots,s,
\]
there exists \(\theta_o\in\Omega\) such that
\[
\nabla\mathscr L(\theta_o)=0,
\]
and its symmetry is at least $(K_{j_r}).$%Moreover,
%the number $N_H$ of critical $G$-orbits in $\Omega$ of orbit type
%$(H)$ satisfies
%\[
%N_H\ge |n_H|.
%\]
\end{theorem}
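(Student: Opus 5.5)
The plan is to obtain the element $\mathbf a\in U(G)$ in \eqref{eq: extract} as the gradient equivariant degree $\nabla_G\text{-deg}(\nabla\mathscr L,\Omega)$. It is then enough to show two things: that this degree, and hence the set of maximal orbit types with nonzero coefficient, depends only on $(\mathscr L,\Omega)$; and that the degree has an existence property.

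\textbf{Step 1: admissibility.} Since $\mathscr L$ is $C^2$ and $G$-invariant, and the action is orthogonal, $\nabla\mathscr L$ is a $C^1$ $G$-equivariant gradient field; this is \eqref{eq: equivaraince}. Because $\nabla\mathscr L\neq 0$ on $\partial\Omega$ and $\Omega$ is bounded and invariant, the pair $(\nabla\mathscr L,\Omega)$ is $\Omega$-admissible in the gradient sense. The gradient equivariant degree of Gęba (cf.\ \cite{gkeba1997degree,balanov2006applied}) is therefore defined and gives an element of $U(G)$.

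\textbf{Step 2: independence of choices.} The degree is constructed by approximating $\nabla\mathscr L$ by a $G$-equivariant gradient field that is generic, i.e.\ has only nondegenerate critical orbits. Near each critical orbit $G\theta$ with isotropy $G_\theta$, one takes the sum of local contributions computed via the slice and the negative spectral subspace of the Hessian. I would invoke the homotopy invariance of the gradient degree under admissible gradient homotopies. Any two generic approximations are joined by a linear gradient homotopy that stays nonzero on $\partial\Omega$, because $|\nabla\mathscr L|$ has a positive minimum on the compact set $\partial\Omega$. Hence $\mathbf a=\sum n_j(K_j)$ is uniquely determined. The family of orbit types with $n_j\neq0$ is a finite subset of the poset of conjugacy classes. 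Its maximal elements with respect to the order $(H)\le(K)$ are therefore unique, so $\mathcal I_G(\mathscr L,\Omega)$ is well defined.

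\textbf{Step 3: existence.} Take a maximal $(K_{j_r})$ with $n_{j_r}\neq0$. I would use the existence property, proved by contradiction. Suppose no zero of $\nabla\mathscr L$ in $\Omega$ has isotropy containing a conjugate of $K_{j_r}$, meaning $\Omega^{K_{j_r}}$ contains no zeros. Then $\nabla\mathscr L$ restricted to the fixed-point space $V^{K}$ is admissible with no zeros in $\Omega^{K}$. By the recurrence formula, the coefficient of $(K)$ is computed from the $W(K)$-equivariant degree of the restriction to $\Omega^K$. That degree is zero, together with the coefficients of larger orbit types, so $n_{j_r}=0$, a contradiction. Equivalently, by homotopy one may perturb generically only away from the zeros. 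Since every contributing critical orbit has isotropy $\ge (K)$ when $(K)$ appears, and the zero set in $\Omega^K$ is compact, limit arguments yield a genuine zero $\theta_o$ of $\nabla\mathscr L$ with $G_{\theta_o}\supseteq gKg^{-1}$ for some $g\in G$.

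The main obstacle is Step 3 when $\mathscr L$ has degenerate critical orbits. There one must carefully justify that the coefficient of $(K)$ is detected by restriction to $V^K$. This uses the multiplicativity and recurrence structure of $U(G)$ (Appendix \ref{app:Euler}) and the fact that $(\nabla\mathscr L)|_{V^K}=\nabla(\mathscr L|_{V^K})$, which holds by orthogonality. I would handle it by compactness of $\nabla\mathscr L^{-1}(0)\cap\overline{\Omega^K}$ and excision to a small invariant neighborhood.
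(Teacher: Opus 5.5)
Your overall structure is the paper's. You take $\mathbf a=\nabla_G\text{-}\deg(\nabla\mathscr L,\Omega)$, which is defined because $(\nabla\mathscr L,\Omega)$ is an admissible gradient pair. You read off the maximal classes with nonzero coefficient, and the critical point comes from the existence property $(\mathcal G_4)$ of Theorem~\ref{thm:Ggrad-properties}, which the paper simply invokes rather than reproves. Your Step~2 is a reasonable sketch of why the degree does not depend on the approximating generic field, granted homotopy invariance.

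The step that fails as written is the main argument of Step~3, and it fails in the generality of the theorem.

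\begin{itemize}
\item The recurrence \eqref{eqn:recur} holds only for $j\in\mathcal J_o$, i.e.\ for $\dim W(K_j)=0$. It is obtained through the reduction $\pi_0$ to the Brouwer degree, and it uses the plain Brouwer degree of $(\nabla\mathscr L)^{K}$, not a $W(K)$-equivariant degree.
\item When $|W(K)|=\infty$ the formula is meaningless, and $\pi_0$ annihilates $(K)$. So for a compact Lie group, your contradiction argument does not control the coefficients at classes with positive-dimensional Weyl group.
\item Your claim that $n_K$ is ``computed from the $W(K)$-equivariant degree of the restriction to $\Omega^K$'' is not available in the paper, and you do not prove it.
\end{itemize}

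This case actually occurs in the paper. In the $O(3)\times S_4$ example, $(\Delta\mathbb Z_3)$ appears in the index. Its normalizer contains the rotations about the axis of the order-three rotation $\rho((1,2,3))$, so $W(\Delta\mathbb Z_3)$ is positive-dimensional.

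What works in general is the limiting argument you mention afterwards, and it should be the proof.
\begin{enumerate}
\item Take special $\Omega$-Morse fields $\nabla\varphi_n\to\nabla\mathscr L$ uniformly on $\overline\Omega$. They are admissibly gradient-homotopic to $\nabla\mathscr L$ via the linear homotopy, as in your Step~2.
\item By $(\mathcal G_1)$ and $(\mathcal G_3)$, the degree of each $\nabla\varphi_n$ is $\sum\pm(G_\theta)$ over its critical orbits.
\item If $n_K\neq0$, each $\varphi_n$ has a critical orbit of type exactly $(K)$. Choosing the point with isotropy $K$ gives $\theta_n\in\overline\Omega\cap V^K$.
\item A subsequential limit $\theta_o$ satisfies $\nabla\mathscr L(\theta_o)=0$. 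It lies in $\Omega$ because $\nabla\mathscr L\neq0$ on $\partial\Omega$, and it lies in the closed set $V^K$, so $(G_{\theta_o})\ge(K)$.
\end{enumerate}
This argument uses neither the ring structure of $U(G)$ nor any nondegeneracy of $\mathscr L$. Degenerate critical orbits are therefore not the obstacle you anticipate.
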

\noindent The same conclusion holds when $V$ is replaced by a $G$-manifold equipped with a $G$-invariant Riemannian metric and $\Omega$ is relatively compact (see Section \ref{sec:manifold}).\vskip0.3cm
The subsequent sections are arranged as follows. For the readers convenience, Section \ref{sec: terminology} collects the notations and definitions used throughout the paper. Given that the loss function in \eqref{eq: equivaraince} is of gradient type, Sections \ref{sec:orthogonalrep} and \ref{sec:manifold} present two gradient-related Equivariant Degree methods, together with the algorithms to determine \((K_j)\) and the corresponding \(n_{K_j},\; j=1,\cdots m\) in \eqref{eq: extract}. More precisely, in sections \ref{sec:recursive}-\ref{sec:slice}, one can find construction of gradient degree based on recursive formula and the method of reducing to slice. In section \ref{sec:isomorphism}, we focus on the linear isomorphism in which the ambient space \(V\) is an orthogonal \(G\)-representation, and the degree is constructed through the basic degree induced by irreducible decomposition of $V.$ On the other hand, Section \ref{sec:manifold} treats the general setting of a \(G\)-manifold \(V\). Without an underlying linear structure, the construction proceeds instead through orbit-type stratification and an inductive summation over the resulting strata. Section \ref{sec:example} then constructs several concrete toy examples to illustrate the application of the symmetry index: a softplus teacher–student model (Section \ref{sec:teacherstudent}), an $O(3)$-equivariant network (Section \ref{sec:steerable-slice}), and a transformer architecture (Section \ref{subsec:fmtr-recursive}), using the linear isomorphism method, the slice principle, and the recursive formula, respectively. Appendix \ref{app:Brouwer} recalls the construction and computational algorithm of the Equivariant Brouwer Degree, since, as the paper unfolds, one may quickly observe that the computation of the gradient-related Equivariant Degree methods ultimately can reduce to their Brouwer counterpart. Moreover, Appendices \ref{app:Euler} and \ref{app:aml} %, and \ref{app:computations} 
 present the properties of the Euler and Burnside rings, and the definition of amalgamated groups, respectively. We also collect some key calculations for Section \ref{sec:example} in Appendix \ref{app:computations}.
\vskip0.6cm
\section{Some Equivariant Terminology}\label{sec: terminology}
We first emphasize that the degree theories considered in this work require \(G\) to be a compact Lie group. Accordingly, the present analysis does not cover noncompact groups such as the positive scaling symmetries associated with ReLU activation.\vs
\noindent To construct the symmetry index $\mathcal I_G(\mathscr L,\Omega)$, one must first specify the following several ingredients:

\begin{itemize}
    \item The loss function $\mathscr{L}$ should be $G$-invariant, with $G$-equivariant gradient $\nabla\mathscr{L}$.

    \item The parameter space $V$ and the $G$ action on $V$ must be identified.%: depending on the architecture and parametrization, $V$ is either a linear space or a manifold, which determines whether the approach of Section~\ref{sec:orthogonalrep} or Section~\ref{sec:manifold} applies to constructing $\mathcal{I}_G(\mathscr{L},\Omega)$.

    \item A suitable domain $\Omega\subset V$ must be specified:
    $\Omega$ is required to be an open bounded $G$-invariant subset of $V$. Most importantly, $\Omega$ must be chosen so that
    \[
        \nabla \mathscr L(\theta)\neq 0,
        \;\;\theta\in\partial\Omega,
    \]
a condition on which the equivariant degree theory, and hence our approach, rests.
\end{itemize}
\noindent These constitute the basic ingredients needed for the construction of
$\mathcal I_G(\mathscr L,\Omega)$. Each arises naturally from the construction of equivariant degree, and its significance will become apparent as the
framework is developed below. The resulting $G$-equivariant gradient degree is an element of the Euler ring $U(G)$ of the form
\[
\nabla_G\text{-}\deg(\nabla\mathscr L,\Omega) = n_{1}(K_1)+n_{2}(K_2)+\cdots+n_{m}(K_m),
\]
from which the symmetry index $\mathcal{I}_G(\mathscr{L},\Omega)$ is read off directly. The precise definition of the degree, the term \((K_j)\) and \(n_{j}\), and the techniques appropriate to the different settings to derive this degree will be developed in detail in Sections~\ref{sec:orthogonalrep} and~\ref{sec:manifold}.\vs 
We next introduce the notation and preliminary
concepts that will be used throughout the work:
Unless
otherwise stated, every subgroup \(K\leq G\) is assumed to be closed. Let \(V\) be a \(G\)-space and let \(\theta\in V\). The following notations will be used throughout:

\begin{table}[ht]
\centering
\renewcommand{\arraystretch}{1.35}
\begin{tabular}{p{0.5\textwidth} p{0.50\textwidth}}
\hline
\textbf{Notation} & \textbf{Description} \\
\hline

$N(K)$
& Normalizer of $K$ in $G$ \\

$W(K)=N(K)/K$
& Weyl group of $K$ \\

$(K)=\{gKg^{-1}:g\in G\}$
& Conjugacy class of $K$ in $G$ \\

$\Phi(G)=\{(K):K\leq G\}$
& Collection of subgroups' conjugacy classes of $G$ \\

\small$\Phi_0(G)=\{(K)\in \Phi(G): W(K) \text{ finite}\}$
& Elements $(K)\in\Phi(G)$ for which $\dim W(K)=0$ \\

$G_{\theta}=\{g\in G:g {\theta}={\theta}\}$
& Isotropy group of $\theta\in V$ \\

$G(\theta)=\{g\theta:g\in G\}$
& $G$-orbit of $\theta$ \\

$(G_{\theta})$
& Orbit type of $\theta$ \\

\footnotesize $\Phi(G,V)=\{(K)\in \Phi(G): (K)=(G_{\theta}) \text{ for some }\theta\in V\}$
& Set of orbit types in $V$, i.e.,
  $\{(G_{\theta}):\theta\in V\}$ \\

$\Phi_0(G,V)=\Phi(G,V)\cap\Phi_0(G)$
& Orbit types in $V$ with finite Weyl group \\

$V^K=\{\theta\in V:K\leq G_{\theta}\}$
& $K$-fixed-point subspace of $V$ \\

$V_K=\{\theta\in V:G_{\theta}=K\}$
& Points with isotropy group exactly $K$ \\

$V_{(K)}=G(V_K)$
& Union of \(G\)-orbits of points whose isotropy is $K$ \\

\hline\caption{Notations}
\label{tab:notation}
\end{tabular}
\end{table}
\noindent An orbit whose isotropy group belongs to the conjugacy class $(K)$ is called a \emph{$(K)$-orbit}. The set $\Phi(G)$ is partially ordered by
\[
(K)\leq(H)
\quad\Longleftrightarrow\quad
gKg^{-1}\subseteq H
\text{ for some }g\in G.
\]
For classes in \(\Phi_0(G)\), let \(n(K,H)\) count the conjugates of \(H\) that contain \(K\), i.e.
\begin{equation}\label{eqn:nHK}
n(K,H)
:=
\left|\left\{H'\in(H):K\subseteq H'\right\}\right|.
\end{equation}
Notice that \(W(K)\) acts naturally on \(V^K\), and its action on \(V_K\) is free. Finally, a smooth manifold equipped with a smooth \(G\)-action will be called a \(G\)-manifold.\vskip0.3cm
We next collect several notions common to the degree constructions in the subsequent sections. For uniformity, we formulate them for a smooth $n$-dimensional \(G\)-manifold, an $n$-dimensional orthogonal \(G\)-representation constitutes a special case.\vskip 0.3cm
\noindent Equip the smooth \(n\)-dimensional \(G\)-manifold \(V\) with a \(G\)-invariant Riemannian metric, and choose a relatively compact invariant open subset \(\Omega\). We write \(C_G^2(V,\mathbb R)\) for the twice continuously differentiable $G$-invariant functions on \(V\). For \(\varphi\in C_G^2(V,\mathbb R)\), its gradient is an equivariant section of the tangent bundle:
\[
\nabla\varphi:V\longrightarrow TV,
\qquad
\nabla\varphi(\theta)\in T_\theta V.
\]
\begin{definition}[\(G\)-Equivariant Gradient Admissible Pair]
We call the pair \((\nabla\varphi,\Omega)\) a
\(G\)-equivariant gradient admissible pair if $\nabla\varphi(\theta)\neq 0,\; \theta\in \partial \Omega,$ i.e. $\nabla\varphi$ doesn't vanish on the boundary of $\Omega.$ 
\end{definition}\vskip0.3cm
 
\noindent Denote by \(\mathcal M_G^\nabla(V,TV)\) all the admissible pairs in $V$. Allowing \(V\) to range over finite-dimensional smooth \(G\)-manifolds, we define
\[
\mathcal M_G^\nabla
:=
\bigcup_V\mathcal M_G^\nabla(V,TV).
\]\vskip 0.3cm
\begin{definition}[$G$-Gradient $\Omega$-Admissible Homotopy] We call a homotopy $\nabla\Phi:[0,1]\times V\longrightarrow TV $ \(G\)-gradient $\Omega$-admissible if, for every \(t\in[0,1]\),
\[
\nabla\Phi_t:=\nabla\Phi(t,\cdot):V\longrightarrow TV
\]
is a \(G\)-equivariant gradient vector field and $\nabla \Phi(\theta) \neq 0,\theta\in \partial \Omega.$
\end{definition}
We now introduce the two algebraic structures in which the equivariant
degrees considered below take their values.
\begin{definition}[Euler ring]
The \emph{Euler ring} of $G$ is the free $\mathbb{Z}$-module
\[
  U(G) := \bigoplus_{(K)\in\Phi(G)} \mathbb{Z}(K),
\]
so that each $a \in U(G)$ has a unique representation
\[
  a = \sum_{(K)\in\Phi(G)} n_K (K), \qquad n_K \in \mathbb{Z},
\]
with finitely many $n_K$ nonzero. Multiplication on $U(G)$ is defined on generators by
\[
  (K) * (H) := \sum_{(S)\in\Phi(G)} c_S (S), \qquad (K),(H) \in \Phi(G),
\]
where
\begin{equation}\label{eq:cS}
  c_S := \chi_c\bigl((G/K \times G/H)_S / N(S)\bigr),
\end{equation}
and extended bilinearly to all of $U(G)$. Here $\chi_c$ denotes the Euler characteristic with respect to Alexander--Spanier cohomology with compact support.
\end{definition}

\begin{definition}[Burnside ring]
The \emph{Burnside ring} of $G$ is the free $\mathbb{Z}$-module generated by the classes with finite Weyl group,
\[
  A(G) := \bigoplus_{(K)\in\Phi_0(G)} \mathbb{Z}(K),
\]
whose elements are written as 
\[
b = \sum_{(K)\in\Phi_0(G)} n_K (K)
\]
with $n_K \in \mathbb{Z}$. Multiplication on $A(G)$ is defined by
\[
  (K) \cdot (H) := \sum_{(S)\in\Phi_0(G)} n_S (S), \qquad (K),(H) \in \Phi_0(G),
\]
where $n_S$ is the number of $(S)$-orbits in $G/K \times G/H$, and extended bilinearly to all of $A(G)$.
\end{definition}
\noindent Notice that \(n_S\) can be determined recursively from
\begin{equation}
n_S
=
\frac{
n(S,K)|W(K)|\,n(S,H)|W(H)|
-
\displaystyle\sum_{(\widetilde S)>(S)}
n_{\widetilde S}\,
n(S,\widetilde S)\,
|W(\widetilde S)|
}{
|W(S)|
}.
\end{equation}
\vskip.3cm 
The two rings are related by the ring homomorphism $\pi_0 \colon U(G) \to A(G),$
\begin{equation}\label{eq:eulerhomomorphism}
\pi_0((K))
=
\begin{cases}
(K), & (K)\in\Phi_0(G),\\
0,   & (K)\notin\Phi_0(G).
\end{cases}
\end{equation}
As will be seen in the subsequent sections, the homomorphism \(\pi_0\)
provides the key reduction from the Euler-ring-valued equivariant gradient
degree to its Burnside-ring-valued counterpart, and will play an essential
role in the computation of the corresponding basic degrees. For more details regarding Burnside ring and Euler ring, including some of their properties and corresponding proofs, one is referred to Appendix \ref{app:Euler}.\vskip0.6cm
\section{Equivariant Gradient Degree on Orthogonal $G$-representations}\label{sec:orthogonalrep} 
Having established the common notations and algebraic framework, we now introduce the equivariant gradient degree, beginning with orthogonal \(G\)-representations. For a complete construction of the  gradient degree, together with its properties and full proofs, we refer the readers to \cite{ gkeba1997degree, dabkowski2017twisted}.\vskip0.3cm
\subsection{Methodology based on recursive construction.}\label{sec:recursive}
Assume that \(V\) is an \(n\)-dimensional orthogonal representation and that \((\nabla\varphi,\Omega)\) satisfies the gradient-admissibility conditions defined in Section \ref{sec: terminology}. To explain the construction of the $G$-equivariant gradient degree and its properties, we first introduce the class of special $\Omega$-Morse functions. These functions also provide good representatives of $\nabla\varphi$ up to $\Omega$-admissible $G$-gradient homotopy, in parallel with regular normal maps for the equivariant Brouwer degree in Appendix \ref{app:Brouwer}. To define them, we begin with the normal bundle $\pi:\mathcal N^{(K)}\to V_{(K)}$ of an orbit-type stratum $V_{(K)},$ whose fiber over $\theta \in V_{(K)}$ is
\[
\mathcal N_{\theta}^{(K)}
=
(T_{\theta}V_{(K)})^\perp\subset T_{\theta}V.
\]
Endowed with the inner product inherited from $V$, $\mathcal N^{(K)}$ is a Euclidean $G$-bundle over $V_{(K)}$. For an invariant open subset $U_0\subset V_{(K)}$ and $\varepsilon>0$, the normal disk bundle of radius $\varepsilon$ over $U_0$ is
\[
\mathcal N(U_0,\varepsilon)
=
\{y\in\mathcal N^{(K)}:\pi(y)\in U_0,\ \|y\|<\varepsilon\}.
\]
On a suitable invariant neighborhood \(W\) of the zero section, the exponential map embeds the normal bundle equivariantly into \(V\).
\begin{definition}[$(K)$-Normal Set]
An invariant open set \(U\subset V\) is \((K)\)-normal when it is the exponential image of a normal disk bundle over an invariant open part of \(V_{(K)}\): $U=\exp\bigl(\mathcal N(U_0,\varepsilon)\bigr),$
where $\mathcal N(U_0,\varepsilon)\subset W.$

\end{definition}
\begin{definition}[$(K)$-Normal Vector Field]
Let \(\nabla\varphi:V\to TV\) be an \(\Omega\)-admissible vector field. Then \(\nabla\varphi\) is called (\(K)\)-normal if there exist an \((K)\)-normal subset $U=\exp\bigl(\mathcal{N}(U_0,\varepsilon)\bigr)$
and a smooth invariant function $\varphi_0:U_0\to\mathbb{R}$
such that
\begin{enumerate}
    \item[a)] $\nabla\varphi^{-1}(0)\cap V_{(K)}\cap\Omega\subset U_0,$
    \item[b)] $U\subset\Omega,$
    \item[c)] for all \(\theta\in U\), $\varphi(\theta)
    =
    \varphi_0(\pi(y))
    +\frac{1}{2}\|y\|^2,$
    where $\theta=\exp(y),
    \quad
    y\in\mathcal{N}(U_0,\varepsilon).$
\end{enumerate}
\end{definition}
\begin{definition}[$\Omega$-Morse Function]
A \(G\)-gradient admissible map \(\nabla\varphi\) is special \(\Omega\)-Morse provided that
\begin{enumerate}
    \item[a)] $\nabla\varphi|_{\Omega}$ is of class $C^1$.
    \item[b)] All its zero orbits in \(\Omega\) are regular.
    \item[c)] $\nabla\varphi$ is $(K)$-normal for every orbit type $(K)$ occurring in its zero set in $\Omega$.
\end{enumerate}
\end{definition}
\noindent The existence of such representatives, together with the $\Omega$-admissibility of the corresponding homotopy, follows from the results in \cite{mayer1989g}. We next define the \textit{$G$-Equivariant Gradient Degree}.
\begin{definition}[$G$-Equivariant Gradient Degree]\label{def:equigra}
The \(G\)-equivariant gradient degree assigns to each pair $(\nabla\varphi,\Omega) \in \mathcal{M}^{\nabla}_G$ an element of the Euler ring $U(G)$. We denote this degree by \(\nabla_G\text{-}\deg(\nabla\varphi,\Omega)\) and write it as
\begin{equation}\label{eqn: degaa}
\nabla_G\text{-deg}(\nabla\varphi,\Omega)=\sum_{j=1}^mn_{j}(K_j)=n_{1}(K_1)+\cdots+n_{m}(K_m),\;\;\;\; (K_j)\in \Phi(G)
\end{equation}
and we call it the \textit{$G$-equivariant gradient degree} of $\nabla\varphi$
in $\Omega$.
\end{definition}
\begin{theorem}
\label{thm:Ggrad-properties} The degree
$\nabla_{G}\mbox{\rm -}\deg:\mathcal M_G^{\nabla}\to U(G)$ introduced in Definition \ref{def:equigra} has the following properties:
\begin{enumerate}
    
\item[\textnormal{($\mathcal{G}_1$)}] \textbf{Additivity.}
If two disjoint invariant open subsets \(\Omega_1,\Omega_2\subset\Omega\) together contain all zeros in \(\Omega\), the degree splits into their contributions:
\[
\nabla_G\text{-deg}(\nabla\varphi,\Omega)
=
\nabla_G\text{-deg}(\nabla\varphi,\Omega_1)
+
\nabla_G\text{-deg}(\nabla\varphi,\Omega_2).
\]

\item[\textnormal{($\mathcal{G}_2$)}] \textbf{Homotopy.}
Along any \(\Omega\)-admissible \(G\)-gradient homotopy $\nabla\Phi_t:V\longrightarrow V,$ where $t\in [0,1],$
the degree is preserved:
\[
\nabla_G\text{-deg}(\nabla\Phi_t,\Omega)=\nabla_G\text{-deg}(\nabla\Phi_0,\Omega),\qquad t\in [0,1].
\]

\item[\textnormal{($\mathcal{G}_3$)}] \textbf{Normalization.}
For a special \(\Omega\)-Morse function whose critical set in \(\Omega\) consists of \(G(\theta_o)\), and set \((K)=(G_{\theta_o})\), then
\[
\nabla_G\text{-}\deg(\nabla\varphi,\Omega)
=
(-1)^{m^-(\nabla^2\varphi(\theta_o))}(K).
\]
Here \(m^-(\nabla^2\varphi(\theta_o))\) is the dimension of the negative eigensubspace of the Hessian $\nabla^2\varphi(\theta_o)$.
\item[\textnormal{($\mathcal{G}_4$)}] \textrm{\textbf{Existence.}} If $n_j\neq 0$ is a coefficient of $\nabla_G\text{-}\deg(\nabla\varphi,\Omega)=\sum_{j=1}^m n_j(K_j)$, then $\nabla\varphi$ possesses a zero $\theta_o\in\Omega$ with $(G_{\theta_o})\ge (K_j)$.

\item[\textnormal{($\mathcal{G}_5$)}] \textrm{\textbf{Product.}} The degree of the product of two admissible gradient maps is obtained by multiplying their degrees in \(U(G)\):
\[
\nabla_G\text{-deg}
(\nabla\varphi_1\times\nabla\varphi_2,
 \Omega_1\times\Omega_2)
=
\nabla_G\text{-deg}(\nabla\varphi_1,\Omega_1)
*
\nabla_G\text{-deg}(\nabla\varphi_2,\Omega_2),
\]
where $*$ denotes multiplication in $U(G)$.
\item[\textnormal{($\mathcal{G}_6$)}] \textbf{Suspension.}
Let $W$ be an orthogonal $G$-representation and let $B(W)$
be its unit ball. Then
\[
\nabla_G\text{-deg}
(\nabla\varphi\times\operatorname{Id}_W,
 \Omega\times B(W))
=
\nabla_G\text{-deg}(\nabla\varphi,\Omega).
\]

\item[\textnormal{($\mathcal{G}_7$)}] \textbf{Hopf Property.}
Two admissible \(G\)-gradient maps on the unit ball \(B(V)\) have equal degrees precisely when they can be connected by a \(B(V)\)-admissible homotopy consisting of \(G\)-gradient maps.

\item[\textnormal{($\mathcal{G}_8$)}] \textbf{Functoriality.}
Let
$\psi\colon G_0 \hookrightarrow G$ be an embedding of Lie groups. Then $\psi$
induces a $G_0$-action on $V$ under which $\nabla\varphi$ is an $\Omega$-admissible
$G_0$-gradient map, and
\[ \Psi\big[\nabla_G\text{-}\deg(\nabla\varphi,\Omega)\big] = \nabla_{G_0}\text{-}\deg(\nabla\varphi,\Omega),
\]
where $\Psi\colon U(G) \to U(G_0)$ denotes the homomorphism of Euler rings
induced by $\psi$.

\item[\textnormal{($\mathcal{G}_9$)}] \textbf{Reduction.}
The homomorphism $\pi_0$ from \eqref{eq:eulerhomomorphism} maps the gradient degree to the equivariant Brouwer degree:
\[
\pi_0\!\left[
\nabla_G\text{-deg}(\nabla\varphi,\Omega)
\right]
=
G\text{-deg}(\nabla\varphi,\Omega).
\]
\end{enumerate}
\end{theorem}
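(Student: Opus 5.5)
The plan is to realize Definition~\ref{def:equigra} concretely through special $\Omega$-Morse representatives and then derive each property from that formula. For an admissible pair $(\nabla\varphi,\Omega)$ I would first use the density and approximation results of \cite{mayer1989g}. They give a special $\Omega$-Morse function $\nabla\psi$ joined to $\nabla\varphi$ by an $\Omega$-admissible $G$-gradient homotopy. Its zero set in $\Omega$ consists of finitely many regular orbits $G(\theta_1),\dots,G(\theta_p)$. I then set
\[
\nabla_G\text{-}\deg(\nabla\varphi,\Omega):=\sum_{i=1}^p(-1)^{m^-(\nabla^2\psi(\theta_i))}(G_{\theta_i})\in U(G).
\]
Finiteness follows from compactness of $\overline\Omega$ together with regularity of the zero orbits. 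The Morse index is constant along each orbit because the Hessian is $G$-conjugate at $g\theta_i$, so every term is well defined.

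The main obstacle is independence of this sum from the chosen Morse representative, which is exactly property $(\mathcal G_2)$. Take two special Morse functions $\nabla\psi_0,\nabla\psi_1$ joined by an admissible gradient homotopy. I would perturb the homotopy to a generic one, stratum by stratum along the orbit types. Order $\Phi(G,V)$ by decreasing $(K)$ and proceed inductively. On each $V_{(K)}$ the $(K)$-normal form $\varphi_0(\pi(y))+\tfrac12\|y\|^2$ removes the normal directions, since they contribute only positive Hessian eigenvalues. The zeros of type $(K)$ then correspond to zeros of the $W(K)$-invariant function $\varphi_0$ on $V_K/W(K)$-type slices.

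On each stratum, the $W(K)$-action on $V_K$ is free. A one-parameter family then generically creates or annihilates zero orbits only in pairs with opposite Morse-index parity, via birth--death (fold) singularities in the quotient, and so the signed count of $(K)$-orbits is preserved. The delicate points are the following:
\begin{itemize}
\item keeping the normal form near higher strata during the perturbation, which is handled by the inductive order and a relative version of Mayer's approximation;
\item the case where $W(K)$ has positive dimension, where one counts orbits rather than points and uses the Euler characteristic convention of $U(G)$.
\end{itemize}

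Once the degree is well defined, most of the properties are direct from the formula.
\begin{itemize}
\item $(\mathcal G_2)$ gives $(\mathcal G_3)$ immediately.
\item $(\mathcal G_1)$ follows by choosing the Morse representative separately on $\Omega_1$ and on $\Omega_2$.
\item $(\mathcal G_4)$ holds by contraposition. If $\nabla\varphi$ has no zero of type $\ge(K_j)$, a small admissible perturbation creates none either. The normal form excludes zeros of type $\ge(K_j)$ near the zero set, so the coefficient of $(K_j)$ is $0$.
\item $(\mathcal G_6)$ holds because the product of a Morse function with $\tfrac12\|w\|^2$ on $W$ is Morse, adds no negative directions, and leaves the isotropy of zeros unchanged; such zeros lie in $V\times\{0\}$.
\item $(\mathcal G_5)$ follows because a product of special Morse functions has zero orbits $G(\theta_i)\times G(\eta_k)$. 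Decomposing $G/G_{\theta_i}\times G/G_{\eta_k}$ into orbit-type strata, the signed counts reproduce the coefficients $c_S$ in \eqref{eq:cS}. This is precisely how the Euler-ring product is defined, after a further equivariant perturbation inside the product orbits.
\item $(\mathcal G_8)$ follows by restricting the action and decomposing each $G$-orbit into $G_0$-orbit types, which is what $\Psi$ encodes.
\item $(\mathcal G_9)$ follows by comparing with the regular normal maps of Appendix~\ref{app:Brouwer}. A special Morse gradient is regular normal, and its Brouwer local indices are $(-1)^{m^-}$. Orbits with infinite Weyl group cancel in the Brouwer degree, matching the fact that $\pi_0$ sends them to $0$.
\end{itemize}

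The Hopf property $(\mathcal G_7)$ is the deepest of the remaining properties. I would prove it by the same stratified induction. On each stratum, after normalizing higher strata, two gradient maps with equal signed $(K)$-orbit counts can be connected: one cancels pairs of orbits with opposite sign by gradient homotopies in the free $W(K)$-quotient, which is a classical Morse cancellation argument. If needed, I would cite the full proof in \cite{gkeba1997degree,dabkowski2017twisted} rather than redo that argument.
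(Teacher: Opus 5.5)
The paper gives no proof of this theorem. It only refers to Section~6 of \cite{DK}, where the degree is built from special $\Omega$-Morse representatives as you set it up. Your overall route is therefore the standard one behind that citation: Mayer approximation, signed count of regular zero orbits, stratum-by-stratum invariance with the normal form shielding higher strata, and the Hopf property quoted from the literature. Two of your derivations, however, rest on claims that are false as written.

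In $(\mathcal G_9)$, a special Morse gradient is \emph{not} regular normal once it has a zero orbit of type $(K)$ with $\dim W(K)>0$. For $\theta\in V_K$, the $N(K)$-orbit of $\theta$ lies in $V^K$ and consists of zeros, so $\mathfrak n(K)\theta\cong\operatorname{Lie}W(K)\neq0$ lies in the kernel of $D(\nabla\psi)^K(\theta)$. An $S^1$-invariant special Morse function on $\mathbb C$ with the circle $|z|=1$ as a nondegenerate critical orbit is an example. Such orbits do not cancel against one another; each contributes $0$ by itself, and this has to be shown. One way is the Brouwer slice principle of Appendix~\ref{app:Brouwer}: up to homotopy, the slice map is a linear isomorphism of the trivial $K$-representation $S_o^K$ times the identity. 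Hence the local degree is $\Theta(\pm(K)_K)=0$, because $|W_G(K)|=\infty$. Only for $(K)\in\Phi_0(G)$ is $D(\nabla\psi)^K(\theta)$ invertible, with sign $(-1)^{m^-}$, and that is what matches the remaining coefficients.

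In $(\mathcal G_5)$, the zero set of $\psi_1+\psi_2$ is not a union of orbits. The set $G(\theta_i)\times G(\eta_k)\cong G/K\times G/H=:X$ is a Morse--Bott critical manifold, in general a continuum of orbits. After perturbing by a $G$-invariant $f$ that is special Morse on $X$, an $(S)$-critical orbit of $f$ of index $k$ contributes $(-1)^{m_1^-+m_2^-+k}(S)$. The step you do not supply is the lemma that the sum of $(-1)^k$ over the $(S)$-critical orbits of $f$ equals $\chi_c(X_S/N(S))=c_S$.

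This lemma is not a rephrasing of the definition of $*$. It says the signed Morse count computes tom Dieck's equivariant Euler characteristic $\sum_{(S)}\chi_c(X_{(S)}/G)(S)$. It follows, for example, from the $G$-CW structure given by equivariant Morse theory, with one cell $G/S\times D^k$ per critical orbit; the normal form is what guarantees that $S$ acts trivially on the negative directions. The same lemma, applied to $G/K$ with the restricted $G_0$-action, is what $(\mathcal G_8)$ requires.

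Two smaller points. The $\chi_c$ convention plays no role in the homotopy step, despite your second bullet there. On each stratum you only need invariance of the total index of the descended gradient on the free quotient $\Omega_K/W(K)$, under a homotopy whose zero set in $[0,1]\times\Omega_{(K)}$ is compact; this also makes the Cerf-type genericity argument unnecessary. In $(\mathcal G_4)$, what excludes zeros is the compactness of $G(\overline\Omega\cap V^{K_j})=\{\theta\in\overline\Omega:(G_\theta)\ge(K_j)\}$, not the normal form: a sufficiently close representative has no zeros there.
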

For proofs of some of the properties stated above, we refer the reader to Section 6 of \cite{DK}.\vs
\begin{remark}
It is important to note that the computation of the $G$-equivariant gradient degree $\nabla_G\text{-deg}(\nabla\varphi,\Omega)$ can be reduced, via the truncation $\pi_0$, to that of the equivariant Brouwer degree. Let $\mathcal J_o:=\{\,j : \dim W(K_j)=0\,\}.$
 For $j\in \mathcal J_o$, the coefficients $n_j$ can be computed via the recursive formula,
\begin{equation}\label{eqn:recur}
n_j=\frac{\deg\bigl((\nabla\varphi)^{K_j},\Omega^{K_j}\bigr)-\sum_{i\in \mathcal J_o,\ (K_i)>(K_j)} n_i\, n(K_j,K_i)\,|W(K_i)|}{|W(K_j)|}.
\end{equation}
Here $\deg\bigl((\nabla\varphi)^{K_j},\Omega^{K_j}\bigr)$ denotes the local Brouwer degree (the non-equivariant counterpart). One is referred to \cite{balanov2025degree} Chapter 2 for the details of local Brouwer degree and its computation. Consequently,
\begin{equation}
\nabla_G\text{-deg}(\nabla\varphi,\Omega)=\sum_{j\in \mathcal J_o} n_j(K_j)+\boldsymbol{\alpha} ,
\end{equation}
where $\boldsymbol{\alpha}=\sum_{j\notin \mathcal J_o} n_j(K_j)$ collects all terms with $\dim W(K_j)>0$, which are annihilated by $\pi_0$.
\end{remark}

\vs
\subsection{Degree on the Slice.}\label{sec:slice} In many applications, one needs to compute the gradient degree on a neighborhood of an isolated orbit $G(\theta_o)$. In this case, the computation reduces to the normal slice at $\theta_o$, on which the relevant symmetry is the isotropy group $G_{\theta_o}$.\vskip0.3cm
\begin{theorem}[Slice Principle]\label{thm:slice}
Let $\theta_o\in V$. We denote by $V_{o}:=T_{\theta_o}G(\theta_o)$
the tangent space to the orbit $G(\theta_o)$ at $\theta_o$, and let $S_o:=V_o^\perp$
be the slice to $G(\theta_o)$ at $\theta_o$. For sufficiently small \(\varepsilon>0\), let
\[
B_\varepsilon(S_o):=\{v\in S_o:\|v\|<\varepsilon\},
\]
and $\mathcal U:=G\bigl(\theta_o+B_\varepsilon(S_o)\bigr)$
be a tubular neighborhood of \(G(\theta_o)\), and suppose that \((\nabla\varphi,\mathcal U)\in\mathcal M_G^\nabla\). Define \(\varphi_o:S_o\to\mathbb R\) by
\[
\varphi_o(v):=\varphi(\theta_o+v).
\]
Then
\[
\nabla_G\text{-}\deg(\nabla\varphi,\mathcal U)
=
\Psi\!\left(
\nabla_{G_{\theta_o}}\text{-}\deg
\bigl(\nabla\varphi_o,B_\varepsilon(S_o))\right),
\]
where $\Psi: U(G_{\theta_o})\to U(G)$ is the ring homomorphism defined on
generators $\Psi((H)_{G_{\theta_o}})=(H)_G$.
\end{theorem}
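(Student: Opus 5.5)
The plan is to prove the Slice Principle by reducing to a special $\mathcal U$-Morse representative and comparing the two sides orbit by orbit, using the normalization $(\mathcal G_3)$ and additivity $(\mathcal G_1)$.

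\textbf{Step 1: tube structure.} Put $H:=G_{\theta_o}$. Since $V$ is orthogonal, $S_o$ is $H$-invariant, and $H$ acts on it linearly and orthogonally after translating $\theta_o$ to the origin. For small $\varepsilon$, the map
\[
G\times_H B_\varepsilon(S_o)\longrightarrow \mathcal U,\qquad [g,v]\mapsto g(\theta_o+v),
\]
is a $G$-equivariant diffeomorphism (slice theorem). Hence $\varphi_o$ is $H$-invariant, and $\varphi$ is recovered from $\varphi_o$ by $\varphi(g(\theta_o+v))=\varphi_o(v)$. Two consequences follow directly:
\begin{itemize}
\item $\nabla\varphi$ at $\theta_o+v$ equals $\nabla\varphi_o(v)$, which lies in $S_o$. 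The reason is that invariance makes the derivative vanish along the orbit directions $T(G(\theta_o+v))$, and the tube structure identifies the normal directions with $S_o$ up to an isometry. So the zeros of $\nabla\varphi$ in $\mathcal U$ are exactly $G\bigl(\theta_o+\nabla\varphi_o^{-1}(0)\bigr)$, and admissibility transfers: $(\nabla\varphi_o,B_\varepsilon(S_o))\in\mathcal M_H^\nabla$.
\item The isotropy satisfies $G_{\theta_o+v}=H_v$.
\end{itemize}

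\textbf{Step 2: homotopy on the slice.} Take an $H$-gradient $B_\varepsilon$-admissible homotopy $\varphi_{o,t}$ from $\varphi_o$ to a special $B_\varepsilon(S_o)$-Morse function $\psi_o$; such representatives exist by \cite{mayer1989g}. Extend each stage $G$-invariantly via $\varphi_t(g(\theta_o+v))=\varphi_{o,t}(v)$. By Step 1 this is a $\mathcal U$-admissible $G$-gradient homotopy, so $(\mathcal G_2)$ on both sides lets us assume $\varphi_o$ is itself special Morse.

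\textbf{Step 3: orbit-wise comparison.} The zero set of $\nabla\varphi_o$ is then a finite union of regular $H$-orbits $H(v_i)$. Using $(\mathcal G_1)$ and $(\mathcal G_3)$ on each isolated orbit, it suffices to compare a single orbit $H(v_i)$ with the corresponding $G$-orbit $G(\theta_o+v_i)$.
\begin{itemize}
\item \emph{Orbit types.} By Step 1, $G_{\theta_o+v_i}=H_{v_i}$, so $(H_{v_i})_H$ maps to $(G_{\theta_o+v_i})_G$, which is exactly what $\Psi$ does on generators.
\item \emph{Normal forms and signs.} At $\theta_o+v_i$, the normal space to $G(\theta_o+v_i)$ identifies isometrically with the normal space in $S_o$ to $H(v_i)$ at $v_i$. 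The Hessians of $\varphi$ and $\varphi_o$ restricted to these normal spaces agree under this identification. Hence the $(K)$-normality of the extended $\varphi$ follows from that of $\varphi_o$, and $m^-$ is the same, so the signs in $(\mathcal G_3)$ match.
\end{itemize}
Summing over $i$ gives the formula.

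\textbf{Main obstacle.} The delicate point is the $(K)$-normal condition in Step 3. Its normal-bundle form $\varphi=\varphi_0(\pi(y))+\tfrac12\|y\|^2$ is stated for the exponential map of $V$, while the extension is built from the tube map. I would first try to show that on $\mathcal U$ the orbit-type strata satisfy $V_{(K)}\cap\mathcal U=G\bigl(\theta_o+(S_o)_{(K)_H}\bigr)$, and that the normal bundle of this stratum is the $G$-saturation of the normal bundle of $(S_o)_{(K)_H}$ in $S_o$. If the metrics do not match exactly, I would fall back on a further admissible gradient homotopy that straightens the tube map to the exponential chart near each orbit, keeping the Hessian nondegenerate so no zeros cross $\partial\mathcal U$. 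This relies on the flexibility in the definition of special Morse functions; alternatively, one can cite the invariance of the degree under the choice of $G$-invariant metric as in \cite{gkeba1997degree}.
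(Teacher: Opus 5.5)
The paper does not prove the Slice Principle; it is imported from the equivariant-degree literature, so your argument has to stand on its own. The architecture is sound: a special Morse representative on the slice, $G$-saturation through the tube $G\times_H B_\varepsilon(S_o)$, orbit-by-orbit comparison via $(\mathcal G_1)$ and $(\mathcal G_3)$, and $G_{\theta_o+v}=H_v$. But Step~1 rests on a false identity. Let $\mathfrak g,\mathfrak h$ denote the Lie algebras of $G,H$. Invariance only gives $\nabla\varphi(\theta_o+v)\perp\mathfrak g(\theta_o+v)$, and in general $\mathfrak g(\theta_o+v)\neq\mathfrak g\theta_o=V_o$, so $\nabla\varphi(\theta_o+v)$ need not lie in $S_o$. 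For example, take $G=SO(2)$ acting diagonally on $\mathbb C^2$, $\theta_o=(1,0)$, $\varphi(z)=\mathrm{Re}(\bar z_1z_2)$ and $v=(0,\delta i)\in S_o$. Then $\nabla\varphi(\theta_o+v)=(\delta i,1)$, whose component along $V_o=\mathbb R(i,0)$ is $\delta\neq0$. What is true, and all you need for admissibility of $\varphi$ and of each saturated stage $\varphi_t$, is the following. First, $\nabla\varphi_o(v)=P_o\nabla\varphi(\theta_o+v)$, with $P_o$ the orthogonal projection onto $S_o$. Second, $V_o\cap\bigl(\mathfrak g(\theta_o+v)\bigr)^\perp=\{0\}$ for $\|v\|\le\varepsilon$ small, because $A\mapsto P_{V_o}A(\theta_o+v)$ is onto $V_o$ at $v=0$, hence nearby. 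Together these give $\nabla\varphi_o(v)=0\iff\nabla\varphi(\theta_o+v)=0$. Similarly, at a zero $p=\theta_o+v_i$ with $G_p=H_{v_i}=K$, the Hessians do not agree under an isometry of normal spaces. What holds is that $\nabla^2\varphi(p)$ kills $\mathfrak gp$. Moreover, inclusion induces a $K$-equivariant linear isomorphism $S_o/\mathfrak hv_i\cong V/\mathfrak gp$ (from $S_o+\mathfrak gp=V$ and $S_o\cap\mathfrak gp=\mathfrak hv_i$), under which the two Hessians are congruent. Sylvester's law then gives equal $m^-$.

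The more serious gap is the $(K)$-normality in Step~3. The $G$-saturation of the slice normal form is not the Euclidean normal form of $V_{(K)}$. The normal space of the $G$-stratum at $p$ is $N_p=(V^K+\mathfrak gp)^\perp$, not the saturation of $N'=S_o\cap(S_o^K+\mathfrak hv_i)^\perp$. Your proposed identity $V_{(K)}\cap\mathcal U=G(\theta_o+(S_o)_{(K)_H})$ also fails when non-$H$-conjugate subgroups of $H$ become conjugate in $G$; only its local version near each orbit holds. Your fallback, straightening to the exponential chart while keeping the Hessian nondegenerate, is the right move. However, you give no reason why nondegeneracy survives, and that reason is the missing idea. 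If $\nabla^2\varphi(p)$ had negative directions in $N_p$, the orbit would contribute more than $\pm(K)$. Compare $-\mathrm{Id}$ at the fixed point $0$ of a nontrivial representation, whose degree is a basic degree with lower-order terms. In that case the straight-line homotopy to the normal form would pass through a degenerate Hessian.

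Here is the argument. Set $B=\nabla^2\varphi(p)$. By $K$-invariance and Schur's lemma, $B$ is block diagonal for $V=V^K\oplus(V^K)^\perp$, and it kills $\mathfrak gp$. Since $T_pV_{(K)}=V^K+\mathfrak gp$, this gives $B=B|_{T_pV_{(K)}}\oplus B|_{N_p}$. Restricting the congruence $S_o/\mathfrak hv_i\cong V/\mathfrak gp$ to the nontrivial $K$-isotypic parts shows that $B|_{N_p}$ is congruent to $\nabla^2\varphi_o(v_i)|_{N'}=\mathrm{Id}$, hence positive definite. The normal form $\widetilde\varphi$ of Proposition~\ref{prop: alpha_normal} has the same stratum restriction and Hessian $B|_{T_pV_{(K)}}\oplus\mathrm{Id}_{N_p}$. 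Therefore, along $(1-s)\varphi+s\widetilde\varphi$, the orbit $G(p)$ remains the only critical orbit in one fixed small tube, and it stays nondegenerate with constant $m^-$. Then $(\mathcal G_2)$ and $(\mathcal G_3)$ give the contribution $(-1)^{m^-}(K)_G=\Psi\bigl((-1)^{m^-}(K)_H\bigr)$, and your summation over orbits closes the proof. Your metric-change alternative would instead require a comparison between the Euclidean degree of Section~\ref{sec:orthogonalrep} and the Riemannian one of Section~\ref{sec:manifold}, which is not among the stated properties.
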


\subsection{\textbf{Methodology Through Symmetric Linear Isomorphisms and Basic Degree}}\label{sec:isomorphism}
As in the case of the equivariant Brouwer degree (see Appendix~\ref{app:Brouwer}), the equivariant gradient degree admits a linearization procedure that reduces its computation to basic degrees.

Let $L : V \to V$ be a $G$-equivariant, self-adjoint linear isomorphism. Such an operator arises, for example, as the Hessian of $\varphi\in C^2_G(V,\mathbb R)$ at a $G$-fixed nondegenerate critical point $\theta_o\in V^G$:
\[
L = D(\nabla\varphi)(\theta_o)=\nabla^2\varphi(\theta_o).
\]
%which is self-adjoint since $\varphi$ is $C^2$, and $G$-equivariant since $\nabla\varphi$ is $G$-equivariant and $\theta_o$ is $G$-fixed. 
Since $L$ is self-adjoint, it is the gradient of the $G$-invariant quadratic function
\[
\widetilde\varphi(\theta):=\tfrac12\langle L\theta,\theta\rangle, \qquad \nabla\widetilde\varphi(\theta)=L\theta,
\]
so that $\nabla_G\text{-deg}(L,B(V))$ is well defined. For sufficiently small \(\varepsilon>0\), the local degree at \(\theta_o\) satisfies
\[
\begin{aligned}
&\nabla_G\text{-}\deg
\bigl(\nabla\varphi,B_\varepsilon(\theta_o)\bigr)\\
&\quad=
\nabla_G\text{-}\deg
\bigl(\theta\mapsto L(\theta-\theta_o),B_\varepsilon(\theta_o)\bigr)\\
&\quad=
\nabla_G\text{-}\deg\bigl(L,B_\varepsilon(0)\bigr)
=
\nabla_G\text{-}\deg\bigl(L,B(V)\bigr).
\end{aligned}
\]
The first equality follows from local linearization, the second from the \(G\)-equivariant translation \(v=\theta-\theta_0\), and the last from excision. The translation is \(G\)-equivariant because \(\theta_0\in V^G\). Decompose $V$ into its isotypic components,
\[
V=\bigoplus_{i=0}^{r} V_i ,
\]
where \(V_i\) contains the copies of the irreducible representation \(\mathcal V_i\) (see Definition \ref{def:isotypic}). Write \(\delta(L)\) for the spectrum, \(\delta^-(L)\) its negative part, and \(E(\beta)\) the eigenspace of \(\beta\in \delta(L)\). An admissible equivariant gradient homotopy deforms \(L\) to \(-\mathrm{Id}\) on its negative eigenspace and to \(\mathrm{Id}\) on the positive one. For each negative eigenvalue, the number of copies of \(\mathcal V_i\) is given by
\[
\mu_i(\beta)
=
\frac{\dim(E(\beta)\cap V_i)}{\dim\mathcal V_i}.
\] 
Applying ($\mathcal G_2$) to this deformation and ($\mathcal G_5$) to the resulting factors yields 
\[ \nabla_G\text{-}\deg(L,B(V)) = \prod_{\beta\in\delta^-(L)} \prod_{i=0}^r \left[ \nabla_G\text{-}\deg \bigl(-\operatorname{Id},B(\mathcal V_i)\bigr) \right]^{\mu_i(\beta)}. \] 
\begin{definition}[Gradient $G$-equivariant Basic Degree] \label{de:basicequigradient}
The gradient basic degree associated with \(\mathcal V_i\) is the degree of the negative identity on its open unit ball:
\begin{equation}\label{eqn:basicdegree}
\nabla_G\text{-}\deg_{\mathcal V_i}
:=
\nabla_G\text{-}\deg\bigl(-\operatorname{Id},B(\mathcal V_i)\bigr)
\in U(G),
\end{equation}
and has expansion
\begin{equation}
\nabla_G\text{-}\deg_{\mathcal V_i}
=\sum_{j=1}^{m} n_j\,(K_j),
\qquad (K_j)\in\Phi(G)\text{ and } n_j\in\mathbb Z .
\end{equation}
with only finitely many nonzero coefficients.
\end{definition}\vs
With this notation, the degree of \(L\) is the product of the basic degrees, each raised to its negative multiplicity:
\begin{equation}
\nabla_G\text{-}\deg(L,B(V)) = \prod_{\beta\in\delta^-(L)} \prod_{i=0}^r \left[ \nabla_{G}\mbox{\rm -}\deg_{\mathcal V_i} \right]^{\mu_i(\beta)}.
\end{equation}
\begin{proposition}
\label{prop:invertible} The basic degrees $\nabla_{G}\text{-}\deg_{\mathcal{V}_{i}}$ defined in \eqref{eqn:basicdegree} are invertible elements in $U(G)$.
\end{proposition}\vs
\noindent Appendix \ref{app:Brouwer} develops the corresponding counterpart Brouwer basic degrees and presents their construction, computational formulas, and algebraic properties.
\vskip.6cm 
\section{Equivariant Gradient Degree on $G$-Manifolds}\label{sec:manifold}
Although the isotypic decomposition of \(V\) developed in Section \ref{sec:orthogonalrep} provides a powerful computational tool, it relies essentially on the linear structure of \(V\). When \(V\) is a more general manifold, this approach is no longer applicable, and a different framework is required. This is precisely the setting addressed in this section.
\subsection{$G$-Equivariant Gradient Degree on Manifolds}\label{sec:degree_manifold}Let \((\nabla\varphi,\Omega)\) be an admissible gradient pair on a smooth \(n\)-dimensional \(G\)-manifold \(V\), where \(\Omega\) is invariant, open, and relatively compact. As in Table \ref{tab:notation}, \(\Phi(G,\Omega)\) records the orbit types occurring in this region. 
\begin{theorem}
The $G$-equivariant (manifold) gradient degree of $\nabla\varphi$ on $\Omega$ takes the form
\begin{equation}
\label{eq:grad-deg_manifold}\nabla_{G}^M\mbox{\rm -}\deg(\nabla\varphi,\Omega)=
\sum_{(K_{j})\in\Phi(G,\Omega)}n_{{j}}(K_{j})= n_{{1}}(K_{1})+\dots
+n_{{m}}(K_{m}),
\end{equation}
where %\textcolor{red}{$\nabla_{G}^M\mbox{\rm -}\deg(\nabla\varphi,\Omega)\in \mathcal A(G),$ a free abelian group generated by $\Theta_G(\Omega),$} and  
$n_{j}=\chi_{K_j}^G(\nabla\varphi_{j},V_{(K_j)}\cap\Omega),$ and it
is the standard Euler characteristic , which is also called the intersection number in \cite{hirsch2012differential}. Moreover, the resulting degree has the following properties:
\begin{enumerate}
    \item \textbf{(Existence)}
    Whenever \(n_{j}\neq0\) in \eqref{eq:grad-deg_manifold}, a critical point \(\theta_o\in\Omega\) exists with \((G_{\theta_o})\geq(K_j)\).
    \item \textbf{(Additivity)}
    If all critical points in \(\Omega\) lie in the union of two disjoint invariant open subsets, their degree contributions add: 
\[
    \nabla^M_{G}\mbox{\rm -}\deg
(\nabla\varphi,\Omega)=\nabla^M_{G}\mbox{\rm -}\deg
(\nabla\varphi,\Omega_1)
+\nabla^M_{G}\mbox{\rm -}\deg
(\nabla\varphi,\Omega_2).
\]
    \item \textbf{(Homotopy)}
    Let $\nabla \Phi_t:\overline\Omega\to TV, \;t\in[0,1]$, be an $\Omega$-admissible homotopy, then
$\nabla^M_{G}\mbox{\rm -}\deg
(\nabla \Phi_t,\Omega)$
is independent of $t\in [0,1].$
    \item \textbf{(Excision)}
    Let $\widetilde\Omega\subset\Omega$ be an open $G$-invariant subset and $(\nabla\varphi)^{-1}(0)\cap\Omega
        \subset
        \widetilde\Omega .$
    Then
\[
\nabla^M_{G}\mbox{\rm -}\deg
(\nabla\varphi,\Omega)
=\nabla^M_{G}\mbox{\rm -}\deg
(\nabla\varphi,\widetilde\Omega).
\]
\end{enumerate}
\end{theorem}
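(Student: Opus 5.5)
We take the construction as a stratum-by-stratum reduction to the classical (non-equivariant) Euler characteristic of a vector field. The definition of $\nabla^M_G\text{-}\deg$ needs no linear structure on $V$, so we fix the meaning of each coefficient first and then check the four properties. The Riemannian metric is $G$-invariant and $G$ is compact, so $V$ carries a smooth orbit-type stratification $V=\bigcup_{(K)}V_{(K)}$. Since $\overline\Omega$ is compact, only finitely many orbit types meet $\Omega$; this is why $\Phi(G,\Omega)$ is finite and the sum in \eqref{eq:grad-deg_manifold} is finite. Each $V_{(K)}\cap\Omega$ is an invariant submanifold, and $V_K$ is a free $W(K)$-manifold with $V_{(K)}\cong G\times_{N(K)}V_K$.

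The order of steps is as follows.
\begin{enumerate}
\item[(i)] Replace $\nabla\varphi$, by an $\Omega$-admissible gradient homotopy, with a special $\Omega$-Morse representative in the sense of Section~\ref{sec:recursive}. We take its existence, with admissible connecting homotopies, from \cite{mayer1989g} and assume it works verbatim on manifolds via the exponential map of the invariant metric.
\item[(ii)] For this representative, $(K_j)$-normality gives $\varphi=\varphi_0\circ\pi+\tfrac12\|y\|^2$ near $V_{(K_j)}$, so its zeros near the stratum are exactly the zeros of $\nabla\varphi_j:=\nabla\varphi_0$, tangent to $V_{(K_j)}$. These zeros lie in a compact part of $V_{(K_j)}\cap\Omega$, and $n_j$ is the intersection number of $\nabla\varphi_j$ with the zero section, computed orbitwise as $\chi^G_{K_j}$.
\item[(iii)] Show that $n_j$ does not depend on the representative. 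Given two special Morse representatives and a homotopy between them, proceed by downward induction on $(K)$. On the maximal strata, which are closed in $\Omega$, the homotopy restricts to an admissible homotopy of tangent fields. There the classical homotopy invariance of the intersection number \cite{hirsch2012differential} applies, after passing to the quotient $V_{K}/W(K)$ or working $G$-orbitwise. For lower strata, normalize the homotopy near the closure of the higher strata and again use classical invariance on the complement of a small tube.
\end{enumerate}

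Homotopy invariance follows at once from (iii). Additivity and excision follow from the additivity and excision of the classical intersection number on each stratum, because restricting to $\Omega_1\sqcup\Omega_2$ or $\widetilde\Omega$ does not change the zero set.

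For existence, suppose $n_j\neq0$. A zero of the Morse representative then lies on $V_{(K_j)}\cap\Omega$. Now run the homotopy back to the original $\nabla\varphi$ along a sequence of approximations $\nabla\varphi^{(k)}\to\nabla\varphi$. This gives zeros $\theta_k$ with $(G_{\theta_k})=(K_j)$. By compactness of $\overline\Omega$ and admissibility, a subsequence converges to $\theta_o\in\Omega$ with $\nabla\varphi(\theta_o)=0$. Upper semicontinuity of isotropy (slice theorem) gives $(G_{\theta_o})\ge(K_j)$.

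The main obstacle is step (iii). The strata $V_{(K)}$ are not closed, and zeros may escape to lower boundary strata along a homotopy. Controlling this requires the normal-form near the closure of higher strata, which is exactly where the $(K)$-normality is needed. I would first try to handle it through a Mayer-type "normalization along the stratum closure" lemma, proved by an equivariant partition of unity on the tubes.
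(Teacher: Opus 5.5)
Your plan is essentially the paper's own argument: a downward induction over orbit types in which the field, and for invariance any connecting homotopy, is put into the normal form $\varphi_0(\pi(y))+\tfrac12\|y\|^2$ near a maximal stratum by the cutoff construction of Proposition~\ref{prop: alpha_normal} and its homotopy analogue, after which each $n_j$ is the classical intersection number of the tangential field on $V_{(K_j)}\cap\Omega$, counted on the orbit manifold $V_{(K_j)}/G\cong V_{K_j}/W(K_j)$ (commit to this quotient version in step (iii), since the count upstairs on $V_{(K_j)}$ weights each orbit by $\chi(G/K_j)$); the differences are only organizational, since you pass first to a global special Morse representative and derive the properties from the stratumwise formula, whereas the paper normalizes one stratum at a time and quotes the properties from \cite{gkeba1997degree}. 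Your compactness-and-semicontinuity proof of existence is a correct addition that explains why only $(G_{\theta_o})\ge(K_j)$ can be asserted, and the zeros you worry about escaping can only reach the frontier of $V_{(K)}$, which consists of strata of strictly larger isotropy type already treated earlier in the induction, so normalizing near the closure of the higher strata suffices.
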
\vskip0.3cm
\subsubsection{\textbf{Computations of the Gradient \texorpdfstring{$G$}{G}-Equivariant Degree on $G$-Manifolds}} 
A general $G$-manifold $V$ does not admit the isotypic decomposition used in Section \ref{sec:orthogonalrep}. Instead, the degree is computed recursively over the orbit-type strata of $V$, starting from a maximal orbit type. The normal bundle $\mathcal N^{(K)}$, the normal disk bundles $\mathcal N(U_0,\varepsilon)$, and the notions of $(K)$-normal sets and $(K)$-normal vector fields from Section \ref{sec:recursive} extend directly to here. However, $(K)$-normality here is imposed on one stratum at a time, in order to isolate the contribution of each orbit type. We recall the localization results needed for this procedure (see \cite{gkeba1997degree} for details). For a closed subgroup $K\le G$, let $\Omega_{(K)}=V_{(K)}\cap\Omega$.
 
\begin{proposition}\label{prop: alpha_normal}
Choose invariant open sets \(U_0,U_1\subset V_{(K)}\) and \(\varepsilon>0\) satisfying
\[
(\nabla\varphi)^{-1}(0)\cap\Omega_{(K)}
\subset U_0\subset\overline{U_0}\subset U_1,
\]
with $\Omega_0=\exp\bigl(\mathcal N(U_1,2\varepsilon)\bigr)\subset\Omega.$
Set \(U=\exp(\mathcal N(U_0,\varepsilon))\), and choose a smooth invariant cutoff \(\eta:V\to[0,1]\) that equals one near \(U\) and has support contained in \(\Omega_0\). Define
\[
\widetilde\varphi(\exp\widetilde \theta)
=
\varphi(\pi(\widetilde \theta))
+\frac12\|\widetilde \theta\|^2,
\]
and use the cutoff to obtain
\[
\varphi_0(\theta)=
\begin{cases}
\eta(\theta)\widetilde\varphi(\theta)+(1-\eta(\theta))\varphi(\theta),
&\theta\in\Omega_0,\\
\varphi(\theta),&\theta\notin\Omega_0.
\end{cases}
\]
The resulting gradient is \((K)\)-normal and agrees with the original field on \(V_{(K)}\) and outside \(\Omega_0\).
\end{proposition}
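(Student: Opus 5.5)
We must verify three things about the modified function $\varphi_0$ in Proposition \ref{prop: alpha_normal}: it is an admissible $G$-invariant function, it agrees with $\varphi$ on $V_{(K)}$ and outside $\Omega_0$, and its gradient is $(K)$-normal with base function $\varphi|_{U_0}$. The plan is to check these directly from the construction, using equivariance of the exponential map on the normal disk bundle, invariance of $\eta$, and the fact that $\nabla\varphi$ is tangent to the stratum $V_{(K)}$ along it.

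\textbf{Invariance and agreement.} The map $\exp:\mathcal N(U_1,2\varepsilon)\to\Omega_0$ is a $G$-equivariant diffeomorphism, since the metric is invariant and $\varepsilon$ is small enough that $\mathcal N(U_1,2\varepsilon)\subset W$. The projection $\pi$ and the norm $\|\cdot\|$ are $G$-equivariant and $G$-invariant respectively, so $\widetilde\varphi$ is a $G$-invariant $C^2$ function on $\Omega_0$. Because $\eta$ is invariant, $\varphi_0$ is a convex combination of invariant functions and is therefore invariant. It is $C^2$ because $\operatorname{supp}\eta\subset\Omega_0$, so the two branches glue smoothly. On $V_{(K)}\cap\Omega_0$ we have $\widetilde\theta=0$ and $\pi(\widetilde\theta)=\theta$, hence $\widetilde\varphi=\varphi$ there and $\varphi_0=\varphi$. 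Outside $\Omega_0$ the agreement holds by definition.

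\textbf{Agreement of gradients on the stratum.} Here we use the principle of symmetric criticality. At $\theta\in V_{(K)}$ the normal fiber $\mathcal N^{(K)}_\theta$ is a $G_\theta$-representation with no nonzero $G_\theta$-fixed vectors, since directions fixed by $G_\theta$ stay in the stratum. Both $\nabla\varphi(\theta)$ and $\nabla\widetilde\varphi(\theta)$ are $G_\theta$-fixed, so their normal components vanish. Their tangential components coincide because $\varphi=\widetilde\varphi$ on $V_{(K)}$. The term $\nabla\eta\,(\widetilde\varphi-\varphi)$ vanishes on $V_{(K)}$ since $\widetilde\varphi-\varphi=0$ there. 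Hence $\nabla\varphi_0=\nabla\varphi$ on $V_{(K)}$.

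\textbf{$(K)$-normality.} On $U$ we have $\eta\equiv1$, so $\varphi_0(\exp y)=\varphi(\pi(y))+\tfrac12\|y\|^2$. This is condition (c) with $\varphi_0|_{U_0}=\varphi|_{U_0}$, which is smooth and invariant. Condition (b) holds since $U\subset\Omega_0\subset\Omega$. For condition (a), the zeros of $\nabla\varphi_0$ on $\Omega_{(K)}$ coincide with those of $\nabla\varphi$ on $\Omega_{(K)}$ by the previous step, and these lie in $U_0$ by hypothesis.

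\textbf{Admissibility.} Since $\overline{\Omega_0}\subset\Omega$, the relatively compact $\Omega$ is unaffected near $\partial\Omega$, and $\varphi_0=\varphi$ on a neighborhood of $\partial\Omega$, so $(\nabla\varphi_0,\Omega)$ remains admissible. The inclusion $\overline{\Omega_0}\subset\Omega$ can be arranged by shrinking $\varepsilon$, if necessary, using the relative compactness of $U_1$. The linear homotopy $\varphi_t=(1-t)\varphi+t\varphi_0$ is then $\Omega$-admissible as well, so no degree is lost.

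The main obstacle is the step showing agreement of gradients on the stratum, namely the fixed-vector argument on the normal fiber. I would justify it via the slice theorem. The slice at $\theta$ decomposes as $T_\theta V_{(K)}\cap S\oplus\mathcal N_\theta$, where $T_\theta V_{(K)}\cap S=S^{G_\theta}$, and $\mathcal N_\theta$ is the orthogonal complement of $S^{G_\theta}$, which has trivial $G_\theta$-fixed part.
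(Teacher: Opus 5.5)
Your verification is correct. The paper gives no proof of this proposition and only cites \cite{gkeba1997degree}; your argument is the standard direct check behind that construction: tube coordinates, invariance of the cutoff, and tangency of invariant gradients to the stratum via $\mathcal N_p^{G_p}=0$.

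One step needs a line of justification. You assert that $\widetilde\theta=0$ for every $\theta\in V_{(K)}\cap\Omega_0$, i.e.\ that the tube meets the stratum only along its zero section. Your claim that $\varphi_0$ and $\nabla\varphi_0$ agree with the original data on $V_{(K)}$ depends on this. It follows from your own observation $\mathcal N_p^{G_p}=0$. Take $p\in U_1$ and $0\neq y\in\mathcal N_p$ with $\|y\|<2\varepsilon$. Equivariance and injectivity of $\exp$ on the tube give $G_{\exp y}=(G_p)_y\subsetneq G_p$. No proper closed subgroup of $G_p$ is conjugate in $G$ to $G_p$ (compare dimensions and numbers of components). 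Hence $\exp y\notin V_{(K)}$.

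Separately, the detour through $\overline{\Omega_0}\subset\Omega$ is unnecessary. It also relies on relative compactness of $U_1$, which is not assumed, and on shrinking $\varepsilon$, which would change data fixed in the statement. Since $\operatorname{supp}\eta$ is closed and contained in $\Omega_0\subset\Omega$, it is disjoint from $\partial\Omega$. So $\varphi_0=\varphi$ on a neighborhood of $\partial\Omega$, and admissibility of $\nabla\varphi_0$ is automatic.
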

\begin{proposition}
    Fix a maximal orbit type \((K)\in\Phi(G,\Omega)\). An admissible gradient homotopy \(\Phi\) admits a \((K)\)-normal modification \(\Phi'\) supported in an invariant open subset \(\Omega_0\subset\Omega\). For every \(t\in[0,1]\), this modification agrees with \(\Phi\) both on \(V_{(K)}\) and outside \(\Omega_0\):
\[
\Phi'(t,\theta)=\Phi(t,\theta)
\qquad
\bigl(\theta\in V_{(K)}\cup(V\setminus\Omega_0)\bigr).
\]
\end{proposition}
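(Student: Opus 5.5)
The plan is to build the modification explicitly by a parametric version of the construction in Proposition~\ref{prop: alpha_normal}, applied with the time variable as an extra parameter. Write $\Phi(t,\cdot)=\varphi_t$, with $\nabla\varphi_t$ nonvanishing on $\partial\Omega$ for all $t$. Since $(K)$ is maximal in $\Phi(G,\Omega)$, the stratum $\Omega_{(K)}$ is closed in $\Omega$: any limit point of $\Omega_{(K)}$ in $\Omega$ has isotropy conjugate to a group containing $K$, and maximality forces it to be $(K)$. It is also a smooth invariant submanifold. Let $Z=\{(t,\theta)\in[0,1]\times\Omega_{(K)}:\nabla\varphi_t(\theta)=0\}$. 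By admissibility and relative compactness, $Z$ is compact and stays away from $[0,1]\times\partial\Omega$. Its projection $Z_V$ to $V_{(K)}$ is therefore a compact invariant subset of $\Omega_{(K)}$.

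First, choose invariant open sets $U_0\subset\overline{U_0}\subset U_1\subset V_{(K)}$ with $Z_V\subset U_0$ and $\overline{U_1}$ compact in $\Omega_{(K)}$. Using the equivariant tubular neighborhood theorem (the exponential map on the normal bundle $\mathcal N^{(K)}$) and compactness of $\overline{U_1}$, pick $\varepsilon>0$ so that $\Omega_0:=\exp(\mathcal N(U_1,2\varepsilon))\subset\Omega$ is an embedded invariant tube. Choose a smooth invariant cutoff $\eta$ equal to $1$ near $U=\exp(\mathcal N(U_0,\varepsilon))$ and supported in $\Omega_0$; averaging over the compact group $G$ gives invariance. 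Then set
\[
\varphi'_t=\eta\,\widetilde\varphi_t+(1-\eta)\varphi_t,\qquad \widetilde\varphi_t(\exp y)=\varphi_t(\pi(y))+\tfrac12\|y\|^2 ,
\]
and $\Phi'(t,\cdot)=\varphi'_t$. Because every ingredient is $G$-invariant and $\pi$, $\exp$ are equivariant, each $\nabla\varphi'_t$ is an equivariant gradient field, continuous in $t$. The field is $(K)$-normal on $U$ by construction, with $\varphi_0=\varphi_t|_{U_0}$. On $V_{(K)}$ we have $y=0$, so $\widetilde\varphi_t=\varphi_t$ there, and $\Phi'=\Phi$ both on $V_{(K)}$ and outside $\Omega_0$.

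The main obstacle is admissibility: the new field must not create zeros on $\partial\Omega$. Note that $\partial\Omega\cap\Omega_0=\emptyset$ and $\Phi'=\Phi$ off $\Omega_0$, so boundary admissibility is automatic. The remaining issue is whether the homotopy property is meaningful, which requires that the zeros of $\nabla\varphi'_t$ in the stratum $(K)$ stay inside $U_0$. This should follow from the fact that $\nabla\varphi'_t$ agrees with $\nabla\varphi_t$ along $V_{(K)}$; I will check this carefully, since the gradient of the cutoff term $\nabla\eta\,(\widetilde\varphi_t-\varphi_t)$ vanishes on the zero section. That holds because $\widetilde\varphi_t-\varphi_t$ vanishes there to first order in the normal directions. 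Indeed, equivariance forces $\nabla\varphi_t(\theta)\in T_\theta V_{(K)}$ for $\theta\in V_{(K)}$, since at $\theta\in V_K$ it lies in $(T_\theta V)^{K}$, which is tangent to the stratum. Hence the normal derivatives of $\varphi_t$ and $\widetilde\varphi_t$ agree along $V_{(K)}$.

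If one also wants the zeros of $\nabla\varphi'_t$ near the stratum to lie in $U$, shrink $\varepsilon$ using uniform continuity on the compact set $[0,1]\times\overline{\Omega_0}$. The tangential component then stays close to $\nabla\varphi_t\circ\pi$, which is nonzero on $\overline{U_1}\setminus U_0$ uniformly in $t$, while on $U$ the normal component equals $y\neq0$ off the zero section.
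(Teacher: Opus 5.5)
Your construction is correct and follows the route the paper intends. The paper states this proposition only as the $t$-parametric version of the cutoff construction in Proposition~\ref{prop: alpha_normal} and defers the details to \cite{gkeba1997degree}. The essential points are the ones you isolate: choosing $U_0$ uniformly in $t$ around the compact set $Z_V$, and the equivariance fact $\nabla\varphi_t(\theta)\in (T_\theta V)^{G_\theta}\subset T_\theta V_{(K)}$, which makes $\varphi_t$ and $\widetilde\varphi_t$ agree to first order along the zero section.

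Your optional last paragraph is false, however. The shell over $U_0$ where $0<\eta<1$ is covered by neither of your two estimates, and zeros are in general forced there for every $\varepsilon$. For example, let $\mathbb Z_2$ act on $\mathbb R^2$ by $(x,y)\mapsto(x,-y)$ and take $\varphi=\tfrac12(x^2-y^2)$ on the unit disk. Then $\nabla\varphi'$ has only the origin in $U$, now of index $+1$, and no zeros outside $\Omega_0$. Yet its Brouwer degree is still $-1$, so a free orbit of zeros must lie in $\Omega_0\setminus U$. This is harmless for the proposition, since these zeros lie off the stratum.
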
\vskip0.3cm
Begin with a maximal orbit type \((K_0)\). Proposition \ref{prop: alpha_normal} supplies a normal replacement \(\nabla\varphi_0\), differing from \(\nabla\varphi\) only within a small invariant tubular neighborhood \(\Omega_0\) and agreeing with it on \(V_{(K_0)}\).  Moreover,
\[
(\nabla\varphi_0)^{-1}(0)\cap\Omega
    \subset
    \Omega_{(K_0)}\cup \Omega_0^{'}.
\]
Place the remaining zeros in an invariant open set \(\Omega'_0\) whose closure avoids \(V_{(K_0)}\). The tubular neighborhood can be chosen disjoint from this remainder. Homotopy invariance, additivity, and excision then separate the contribution of \((K_0)\) from the residual degree, giving
\begin{align}\label{eq:recursive_formula}
\nabla_G^M\text{-}\deg(\nabla\varphi,\Omega)
&=\nabla_G^M\text{-}\deg(\nabla\varphi_0,\Omega)\\\notag
&=\nabla_G^M\text{-}\deg(\nabla\varphi_0,\Omega_0)+\nabla_G^M\text{-}\deg(\nabla\varphi_0,\Omega_0^{'})\\\notag
&=\nabla_G^M\text{-}\deg(\nabla\varphi_0,\Omega_{(K_0)})+\nabla_G^M\text{-}\deg(\nabla\varphi_0,\Omega_0^{'})\\\notag
&=\nabla_G^M\text{-}\deg\Big((\nabla \varphi)|_{V_{(K_0)}},\Omega_{(K_0)}\Big)+\nabla_G^M\text{-}\deg(\nabla\varphi_0,\Omega_0^{'})\\\notag
 &= n_{0}(K_0)
+\nabla_G^M\text{-}\deg(\nabla\varphi_0,\Omega_0^{'}).\\\notag
\end{align}
We then repeat the same construction on the remaining invariant set $\Omega_0^{'}.$ More precisely, for the next orbit type $(K_1),$ we choose an invariant neighborhood $\Omega_1$ of $\Omega_{(K_1)}$ inside $\Omega_0^{'}$ and construct an $(K_1)$-normal replacement $\nabla \varphi_1$ of $\nabla \varphi_0.$ Notice that, by construction, the induced gradient vector field on $V_{
(K_1)}$ remains unchanged. Namely, the restricted gradients of $\varphi_1,\varphi_0$ and $\varphi$ coincide on $V_{(K_1)}.$ Applying the same homotopy, additivity, and excision arguments gives a second contribution $n_{1},$ together with a new remainder. Iterating this procedure over all orbit types in $\Phi(G,\Omega)$, we obtain
\[
\nabla_{G}^M\mbox{\rm -}\deg(\nabla\varphi,\Omega)=
\sum_{(K_j)\in\Phi(G,\Omega)}n_{{j}}(K_{j}),
\]
where \((K_0),\ldots,(K_m)\) are ordered by successively selecting a maximal orbit type among those remaining. Here $n_{j}
    =
    \chi^G_{K_j}
    \bigl(\nabla\varphi_{j},\Omega_{(K_j)}\bigr)$ denotes the standard Euler-characteristic associated with the orbit type $(K_j).$
If the critical points of \(\varphi_j|_{V_{(K_j)}}\) in \(\Omega_{(K_j)}\) are
isolated and nondegenerate, then this contribution is given by the Morse-index formula
\[
n_{j}
=
\sum_{\theta_o\in
\Omega_{(K_j)}}
(-1)^{m^-(\theta_o)}.
\]
Here \(m^-(\theta_o)\) denotes the dimension of the negative eigenspace of the Hessian of \(\varphi_j|_{V_{(K_j)}}\) at \(\theta_o\).\vskip0.6cm

\section{Examples}\label{sec:example}
We now compute the symmetry index for three concrete neural-network models.
\subsection{A Softplus Teacher--Student MLP with Quartic Regularization}\label{sec:teacherstudent} In this section, we consider a two-layer teacher--student model with softplus
activation, fixed unit output weights, and Gaussian inputs
\(x\sim\mathcal N(0,I_k)\). The student network has $k$ weight vectors
\(\theta_1,\ldots,\theta_k\in\mathbb R^k\), and the teacher weights are the
standard basis vectors \(e_1,\ldots,e_k\). Writing \(\theta\in V:=\mathbb R^{k\times k}\) for the matrix with rows
\(\theta_i^T\), and \(\|\cdot\|\) for the Frobenius norm, the loss function is given by
\begin{equation}
\label{eq:softplus-loss}
\mathscr{L}(\theta)
=
\frac12\,
\mathbb{E}_{x\sim\mathcal{N}(0,I_k)}
\left[\left(
\sum_{i=1}^k \sigma(\theta_i^T x)
-
\sum_{j=1}^k \sigma(e_j^T x)\right)^2
\right]
+
\frac{\lambda}{4}\|\theta\|^4,
\end{equation}
where \(\sigma\) denotes the softplus activation
\[
\sigma(z)=\frac{1}{\beta}\log\bigl(1+e^{\beta z}\bigr),
\]
and \(\lambda>0\) is the regularization parameter. Let
\[
s(z):=\sigma'(z)=\frac{1}{1+e^{-\beta z}},
\qquad
r(x;\theta):=\sum_{i=1}^{k}\sigma(\theta_i^{T}x)-\sum_{j=1}^{k}\sigma(e_j^{T}x).
\]
A direct computation shows that the gradient
\(\nabla_{\theta}\mathscr{L}:V\to V\) has the form
\[
\nabla_{\theta}\mathscr{L}(\theta)=
\begin{bmatrix}
\nabla_{\theta_1}\mathscr{L}(\theta),\,
\nabla_{\theta_2}\mathscr{L}(\theta),\,
\ldots,\,
\nabla_{\theta_k}\mathscr{L}(\theta)
\end{bmatrix}^T,
\]
where, for \(i=1,\ldots,k\),
\begin{equation} \label{eq:lg}
\nabla_{\theta_i}\mathscr{L}(\theta)
=
\mathbb E_{x\sim\mathcal N(0,I_k)}
\bigl[r(x;\theta)\,s(\theta_i^{T}x)\,x\bigr]
+\lambda\|\theta\|^{2}\theta_i .
\end{equation}
Consider the action of \(G=S_k\times S_k\) on \(V\) given by
\[
(\tau,\gamma)(\theta_1,\ldots,\theta_k)^T
=
\bigl(\gamma\theta_{\tau^{-1}(1)},\ldots,\gamma\theta_{\tau^{-1}(k)}\bigr)^T,
\qquad (\tau,\gamma)\in G.
\]
Clearly, \(\nabla_\theta\mathscr{L}\) is \(G\)-equivariant, 
and the parameters fixed by \(G\) are precisely the constant matrices:
\[
V^G=\{cJ:c\in\mathbb{R}\},
\qquad
J=\mathbf{1}\mathbf{1}^T.
\]
\vskip0.3cm
\noindent Our goal is to compute the symmetry index associated with the critical points of \(\mathscr L\), which satisfy
\begin{equation}\label{eqn:critic}
\nabla \mathscr{L}(\theta)=0.
\end{equation}
To this end, we apply the linearization method of  \ref{sec:isomorphism}. 
First, an a priori estimate places all solutions of \eqref{eqn:critic} inside \(B_{\mathcal R}(0)\), where an admissible equivariant gradient homotopy connects \(\nabla\mathscr{L}\) to the identity map. We next identify a \(G\)-fixed nondegenerate critical point \(\theta_o\) and verify the invertibility of the \(G\)-equivariant Hessian
\[
L:=\nabla^2\mathscr L(\theta_o).
\]
In the translated coordinates \(v=\theta-\theta_o\), the gradient $v\longmapsto\nabla\mathscr L(\theta_o+v)$
is admissibly \(G\)-gradient homotopic to \(v\mapsto Lv\) on \(B_\varepsilon(0)\) for sufficiently small \(\varepsilon>0\). Choosing \(\varepsilon\) so that \(\overline{B_\varepsilon(\theta_o)}\subset B_R(0)\), we define
\[
\Omega:=B_\mathcal R(0)\setminus\overline{B_\varepsilon(\theta_o)}.
\]
Additivity and homotopy then give
\[
\begin{aligned}
\nabla_G\text{-}\deg(\nabla\mathscr{L},\Omega)
&=\nabla_G\text{-}\deg(\mathrm{Id},B_\mathcal R(0))
-
\nabla_G\text{-}\deg(\theta\mapsto L(\theta-\theta_o),B_\varepsilon(\theta_o))\\
&=\nabla_G\text{-}\deg(\mathrm{Id},B_\mathcal R(0))
-
\nabla_G\text{-}\deg(L,B_\varepsilon(0)).
\end{aligned}
\]\vskip0.3cm
For illustrative purposes, we consider the following particular setting where $k=4,\;\beta=1,\;\lambda=0.01.$ We emphasize, however, that the same approach extends to other widths $k$ and hyperparameters $\beta$ and $\lambda$ whenever the required admissibility and nondegeneracy conditions hold. By direct computation, one derives the critical point
\[
\theta_o=c_oJ\in V^G,
\]
where $c_o=0.26352$ and an \textit{a priori bound} given by $\mathcal R=10.$ (See appendix \ref{app:computations_1} for more details of the computations.)
%Appendix \ref{app:computations} derives the bound \(\mathcal R\) and establishes the admissible homotopy to the identity on \(B_{\mathcal R}(0)\). 
We now evaluate the Hessian at the invariant critical point
\[
\theta_o=0.26352J,
\quad
J=\mathbf{1}\mathbf{1}^{T}.
\]
Notice that at $\theta_o=c_oJ$, all preactivations coincide:
\[
(\theta_o)_i^Tx=c_o\mathcal S,
\;\; \text{ where }
\mathcal S:=\sum_{j=1}^{4}x_j,\;\; i=1,\ldots,4,
\]
and the residual becomes $r(x;\theta_o)
=
4\sigma(c_o\mathcal S)-\sum_{j=1}^{4}\sigma(x_j).$
Differentiating \eqref{eq:lg} at $\theta_o$, we obtain
$L=[L_{ij}]_{i,j=1}^{4},$
where
\[
\begin{aligned}
L_{ij}:=\nabla^2_{\theta_i\theta_j}\mathscr L(\theta_o)=&\mathbb E_{x\sim \mathcal N(0,I_k)}\!\left[s(c_o\mathcal S)^2\,xx^T\right]\\
&+\delta_{ij}\mathbb E_{x\sim \mathcal N(0,I_k)}\!\left[
r(x;\theta_o)s(c_o\mathcal S)\bigl(1-s(c_o\mathcal S)\bigr)\,xx^T
\right]\\
&+0.16c_o^2\delta_{ij}I_4+0.02c_o^2J,
\qquad \qquad i,j=1,\ldots,4.
\end{aligned}
\]
Here $\delta_{ij}$ denotes the Kronecker delta. Setting
\[
\begin{aligned}
\mathcal C&:=\mathbb E_{x\sim \mathcal N(0,I_k)}\!\left[s(c_o\mathcal S)^2\,xx^T\right],\\
\mathcal D&:=\mathcal C+\mathbb E_{x\sim \mathcal N(0,I_k)}\!\left[
r(x;\theta_o)s(c_o\mathcal S)\bigl(1-s(c_o\mathcal S)\bigr)\,xx^T
\right],
\end{aligned}
\]
the diagonal and off-diagonal blocks of $L$ are, respectively,
\[
\widetilde {\mathcal D}
=
\mathcal D+0.16c_o^2I_4+0.02c_o^2J,
\qquad
\widetilde{\mathcal C}
=
\mathcal C+0.02c_o^2J.
\]
Thus
\begin{equation}
\label{eq:softplus-hessian-blocks}
L=
\begin{pmatrix}
\widetilde{\mathcal D} & \widetilde{\mathcal C} & \widetilde{\mathcal C} & \widetilde{\mathcal C}\\
\widetilde{\mathcal C} & \widetilde{\mathcal D} & \widetilde{\mathcal C} & \widetilde{\mathcal C}\\
\widetilde{\mathcal C} & \widetilde{\mathcal C}& \widetilde{\mathcal D} & \widetilde{\mathcal C}\\
\widetilde{\mathcal C} & \widetilde{\mathcal C} & \widetilde{\mathcal C} & \widetilde{\mathcal D}
\end{pmatrix}
=
I_4\otimes(\widetilde{\mathcal D}-\widetilde{\mathcal C})
+J\otimes\widetilde{\mathcal C}.
\end{equation} We analyze its spectrum using the representation decomposition from Section 2 of \cite{liu2025leaky}. Let
\[
V_0=\operatorname{span}\{\mathbf1\},
\quad
V_\perp=\{x\in\mathbb R^4:\mathbf1^Tx=0\},
\]
where $\mathbf1=(1,1,1,1)^T.$
Then
$\mathbb R^4\cong V_0\oplus V_\perp$ and
\begin{align}\label{eq:V_dcom}
V:=\mathbb R^{4^2}&\cong({\mathbb R^{4}})^{\otimes 2}=(V_0\otimes V_0)\oplus (V_0\otimes V_{\perp})
\oplus (V_{\perp}\otimes V_0)\oplus (V_{\perp}\otimes V_{\perp}).
\end{align}
 Table \ref{tbl:characters_s4} records the characters \(\chi_j\) of the five irreducibles \(\mathcal V_j\).
 \begin{table}[htbp]
    \centering
    \caption{Character table of $S_4$.}
    \label{tbl:characters_s4}
    \begin{tabular}{|c|c|c|c|c|c|c|}
        \hline
        Rep. & Character
        & $(1)$ & $(1,2)$ & $(1,2)(3,4)$
        & $(1,2,3)$ & $(1,2,3,4)$ \\
        \hline
        $\mathcal{V}_0$ & $\chi_0$
        & $1$ & $1$ & $1$ & $1$ & $1$ \\
        $\mathcal{V}_1$ & $\chi_1$
        & $1$ & $-1$ & $1$ & $1$ & $-1$ \\
        $\mathcal{V}_2$ & $\chi_2$
        & $2$ & $0$ & $2$ & $-1$ & $0$ \\
        $\mathcal{V}_3$ & $\chi_3$
        & $3$ & $-1$ & $-1$ & $0$ & $1$ \\
        $\mathcal{V}_4$ & $\chi_4$
        & $3$ & $1$ & $-1$ & $0$ & $-1$ \\
        \hline
        $\mathbb{R}^4$ & $\chi_{\mathbb{R}^4}$
        & $4$ & $2$ & $0$ & $1$ & $0$ \\
        \hline
    \end{tabular}
\end{table}
 Since characters multiply under outer tensor products,
$\chi_{\mathcal V_i\otimes\mathcal V_j}(g,h)
=
\chi_i(g)\chi_j(h),$ therefore,
the decomposition of the parameter representation can be written as follows:

\[
V=\mathbb R^{16}=(\mathcal V_{0}\otimes \mathcal V_0)\oplus (\mathcal V_{0}\otimes \mathcal V_4)\oplus (\mathcal V_{4}\otimes \mathcal V_0)\oplus (\mathcal V_{4}\otimes \mathcal V_4).%\textcolor{red}{=\mathcal V_1\oplus \mathcal V_{12}\oplus \mathcal V_{15}\oplus \mathcal V_{25}.}
\] 
%\[
%\nabla\text{-deg}_{\mathcal W_{j}\otimes {\mathcal W_{j}}}=\nabla\text{-deg}_{\mathcal W_{j}}\cdot \nabla\text{-deg}_{\mathcal W_{j}}
%\]
Notice $\mathcal V_{0}$ can be identified with the trivial representation and $\mathcal V_{4}$ is the standard representation. %Similarly, $\mathcal W_{5}\cong S^{(3,2)}$ and $\mathcal W_{4}\cong S^{(3,1,1)}.$ 
By applying formula \eqref{eq:softplus-hessian-blocks}, one has the spectrum of the Hessian
\begin{equation}
\label{eq:spectrum}
\delta(L)=\begin{cases}
1.1472\qquad  \text{
with mult= }1\\
0.9486\qquad  \text{ with mult }3\\
-0.0452\quad \text{ with mult=} 3\\
-0.1128\quad \text{ with mult=} 9\\
\end{cases}
\end{equation}
and \[
E^-(L)=
(\mathcal V_4\otimes\mathcal V_0)
\oplus
(\mathcal V_4\otimes\mathcal V_4).
\]
Since $L=\nabla^2_{\theta} \mathscr{L}(\theta_o)$ is invertible, there exists $\varepsilon>0$
sufficiently small such that $\theta_o$ is the only zero of $\nabla \mathscr{L}$ in
$\overline{B_\varepsilon(\theta_o)}$ and
$\nabla\mathscr{L}$
is admissibly $G$-homotopic on $B_\varepsilon(\theta_o)$ to its linearization.
Then
\begin{align*}
\label{eq:annulus-degree}
\nabla_G\text{-}\deg(\nabla_{\theta}\mathscr{L},\Omega)
&=
\nabla_G\text{-}\deg(\nabla_{\theta}\mathscr{L},B_{\mathcal R}(0))
-
\nabla_G\text{-}\deg(\nabla_{\theta}\mathscr{L},B_\varepsilon(\theta_o))\\
&=\nabla_G\text{-}\deg(\operatorname{Id},B_{\mathcal R}(0))
-
\nabla_G\text{-}\deg(L,B_\varepsilon(0))\\
&=(G)- \nabla_G\text{-}\deg_{\mathcal V_4\otimes \mathcal V_0}*\nabla_G\text{-}\deg_{\mathcal V_4\otimes \mathcal V_4}.
\end{align*}
For the finite group \(G=S_4\times S_4\), we have \(\Phi(G)=\Phi_0(G)\), and the two basic degrees are given by
\begin{align*}
\nabla_G\text{-}\deg_{\mathcal V_{4}\otimes \mathcal V_0}&=(S_4\times S_4)-(\mathbb Z_1\times S_4)+3(D_1\times S_4)-\textcolor{red}{(D_2\times S_4)}-\textcolor{red}{2(D_3\times S_4)}\\
\nabla_G\text{-}\deg_{\mathcal V_{4}\otimes \mathcal V_4}&=(S_4\times S_4)+(\mathbb Z_1\times \mathbb Z_1)-(\mathbb Z_2^{\mathbb Z_1}\times^{\mathbb Z_1}\mathbb Z_2)+2(\mathbb Z_1\times D_1)+2(D_1\times\mathbb Z_1)-3(D_1^{\mathbb Z_1}\times ^{\mathbb Z_1}D_1)-(\mathbb Z_3^{\mathbb Z_1}\times ^{\mathbb Z_1}\mathbb Z_3)\\
&-(D_2^{\mathbb Z_1}\times ^{\mathbb Z_1} D_2)+(D_2^{\mathbb Z_1}\times ^{\mathbb Z_1}V_4)-(V_4^{\mathbb Z_1}\times ^{\mathbb Z_1}V_4)-(D_2\times \mathbb Z_1)+(V_4^{\mathbb Z_1}\times^{\mathbb Z_1}D_2)+(D_2^{D_1}\times ^{\mathbb Z_1}D_1)\\
&-5(D_1\times D_1)-(D_1^{\mathbb Z_1}\times ^{D_1}D_2)-(\mathbb Z_1\times D_2)-(D_3\times \mathbb Z_1)-(\mathbb Z_1\times D_3)+4(D_3^{\mathbb Z_1}\times ^{\mathbb Z_1}D_3)\\&+(D_2^{D_1}\times ^{D_1}D_2)-\textcolor{red}{(D_4^{\mathbb Z_1}\times ^{\mathbb Z_1}D_4)}+(D_4^{D_2}\times ^{\mathbb Z_1}\mathbb Z_2)+2(D_1\times D_2)+2(D_2\times D_1)+(\mathbb Z_2^{\mathbb Z_1}\times ^{D_2}D_4)\\&+3(D_4^{\mathbb Z_1}\times ^{\mathbb Z_1}D_4)+3(D_1\times D_3)+3(D_3\times D_1)-\textcolor{red}{(D_2^{D_1}\times ^{D_2}D_4)}-\textcolor{red}{(D_4^{D_2}\times ^{D_1}D_2)}-\textcolor{red}{2(S_4^{\mathbb Z_1}\times ^{\mathbb Z_1}S_4)}\\
&-\textcolor{red}{(D_2\times D_3)}-\textcolor{red}{(D_3\times D_2)}-\textcolor{red}{(D_4^{D_2}\times ^{D_2}D_4)}-\textcolor{red}{2(D_3\times D_3)}.
\end{align*}

%where $\boldsymbol{\alpha_1},\boldsymbol{\alpha_2}$ denote the submaximal in the Euler ring.
\noindent Therefore,
\begin{align*}
\nabla_G\text{-}\deg(\nabla_{\theta}\mathscr{L},\Omega)
&=-(\mathbb Z_1\times \mathbb Z_1)+(\mathbb Z_2^{\mathbb Z_1}\times^{\mathbb Z_1}\mathbb Z_2)+2(\mathbb Z_1\times D_1)-(\mathbb Z_2^{\mathbb Z_1}\times ^{\mathbb Z_1}D_1)+2(D_1\times\mathbb Z_1)-(D_2^{\mathbb Z_1}\times ^{\mathbb Z_1}\mathbb Z_2)\\&+(D_1^{\mathbb Z_1}\times ^{\mathbb Z_1}D_1)
-(\mathbb Z_3^{\mathbb Z_1}\times ^{\mathbb Z_1}\mathbb Z_3)
+(V_4^{\mathbb Z_1}\times ^{\mathbb Z_1}V_4)-(V_4^{\mathbb Z_1}\times^{\mathbb Z_1}D_2)-(\mathbb Z_2^{\mathbb Z_1}\times ^{D_1}D_2)-(D_2^{D1}\times ^{\mathbb Z_1}D_1)\\
&-5(D_1\times D_1)-(\mathbb Z_1\times D_2)-(D_3\times \mathbb Z_1)-(\mathbb Z_1\times D_3)+\textcolor{red}{(D_4^{\mathbb Z_1}\times ^{\mathbb Z_1}D_4)}\\
&-(D_4^{D_2}\times ^{\mathbb Z_1}\mathbb Z_2)+(D_2^{\mathbb Z_1}\times ^{D_2}D_4)+2(D_1\times D_2)+(D_2\times D_1)-3(D_4^{\mathbb Z_1}\times ^{\mathbb Z_1}D_4)\\
&+3(D_1\times D_3)+3(D_3\times D_1)-(D_2\times D_2)-(D_2^{D_1}\times ^{D_2}D_4)+\textcolor{red}{(D_4^{D_2}\times ^{D_1}D_2)}+\textcolor{red}{2(S_4^{\mathbb Z_1}\times ^{\mathbb Z_1}S_4)}\\
&+(\mathbb Z_1\times S_4)-(D_2\times D_3)-(D_3\times D_2)+\textcolor{red}{(D_4^{D_2}\times^{D_2}D_4)}-2(D_3\times D_3)-3(D_1\times S_4)\\
&+\textcolor{red}{(D_2\times S_4)}+\textcolor{red}{2(D_3\times S_4)}.
\end{align*}
The symmetry index for $(\mathscr{L},\Omega)$ is therefore given by
\begin{equation}\label{ind:softplus}
\begin{aligned}
\mathcal{I}_G(\mathscr{L},\Omega)
=
\Big\{
&\bigl((D_2\times S_4),1\bigr),
\bigl((D_3\times S_4),2\bigr),\bigl((D_4^{\mathbb Z_1}\times ^{\mathbb Z_1} D_4),1\bigr),\bigl((D_4^{D_2}\times ^{D_2} D_4),1\bigr),\\
&\bigl((D_4^{D_2}\times ^{D_1} D_2),1\bigr), \bigl((S_4^{\mathbb Z_1}\times ^{\mathbb Z_1} S_4),2\bigr)
\Big\}.
\end{aligned}
\end{equation}
Figure~\ref{fig:placeholder} illustrates the patterns of the critical matrices associated with the six orbit types detected in the symmetry index \eqref{ind:softplus}, each of which substantially reduces the number of free parameters.
\vskip0.6cm
\begin{figure}[H]
    \centering
    \includegraphics[width=0.95\linewidth]{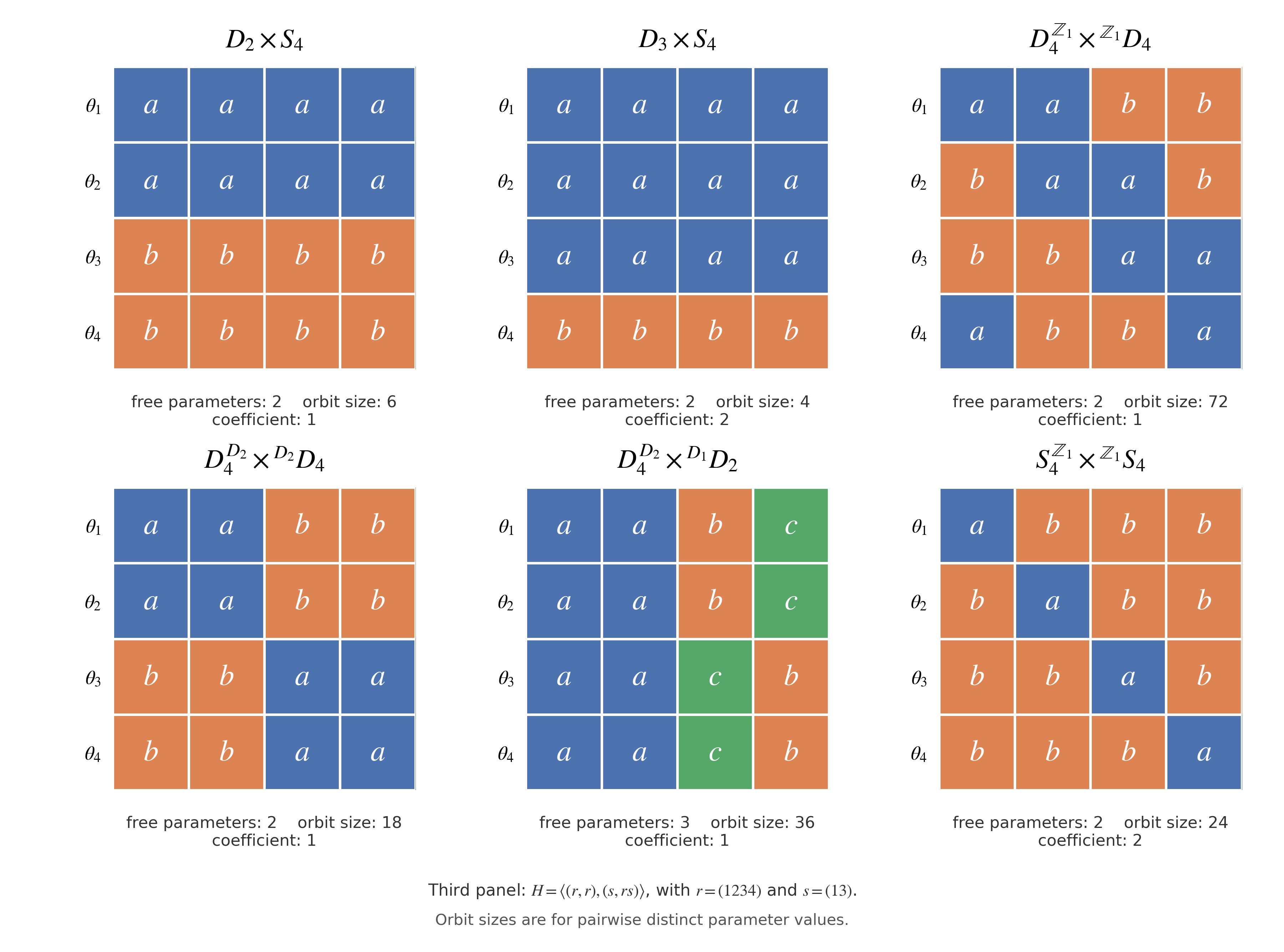}
    \caption{Patterns of Critical Matrix detected by $I_{G}(\mathscr L,\Omega)$}
    \label{fig:placeholder}
\end{figure}\vskip0.6cm
\subsection{A Steerable ENN with
\texorpdfstring{$O(3)\times S_4$}{O(3) x S4} Parameter Symmetry}
\label{sec:steerable-slice}
We construct a toy $O(3)$ steerable network with a smooth invariant regularizer.
Let \(I:\mathbb R^3\to\mathbb R\) be an integrable scalar input field,
with the \(O(3)\)-action
\[
(\omega\cdot I)(x):=I(\omega^{-1}x),
\qquad \omega\in O(3).
\]
Define the kernel \(\kappa(x):=e^{-\|x\|^2}x\) and the associated convolution $z_I:\mathbb R^3\to \mathbb R^3$ by
\[
z_I(x):=(\kappa*I)(x)
=\int_{\mathbb R^3}\kappa(x-y)I(y)\,dy.
\]
Since \(\kappa(\omega x)=\omega\kappa(x)\), the change of variables \(y=\omega u\) gives
\[
\begin{aligned}
\bigl(\kappa*(\omega\cdot I)\bigr)(x)
&=\int_{\mathbb R^3}\kappa(x-y)I(\omega^{-1}y)\,dy=\int_{\mathbb R^3}\kappa(x-\omega u)I(u)\,du\\
&=\omega\int_{\mathbb R^3}\kappa(\omega^{-1}x-u)I(u)\,du\\
&=\omega z_I(\omega^{-1}x)=(\omega\cdot z_I)(x),
\end{aligned}
\]
where \((\omega\cdot z_I)(x):=\omega z_I(\omega^{-1}x)\) is the standard action on vector fields. Thus, \(I\mapsto z_I\) is an
\(O(3)\)-equivariant linear operator.\vskip0.3cm 
Let $V=\R^{3\times4}\oplus\R^{3\times4}$ be the parameter space, equipped with the product Frobenius inner product, and write $\theta=(W,U)\in V$. For
\(\alpha=1,2,3\) and \(i=1,\ldots,4\), define
\[
h_{\alpha i}^{(1)}[I](x):=W_{\alpha i}z_I(x),
\;\;
h_{\alpha i}^{(2)}[I](x):=U_{\alpha i}z_I(x),
\]
and the output matrix field
\[
F_\theta[I](x)=
\begin{pmatrix}
F_\theta^{(1,1)}[I](x) & F_\theta^{(1,2)}[I](x)\\
F_\theta^{(2,1)}[I](x) & F_\theta^{(2,2)}[I](x)
\end{pmatrix}
\in\mathbb R^{8\times8},
\]
where, for \(r,s\in\{1,2\}\),
\begin{equation}\label{eq:enn-output-blocks}
\bigl(F_\theta^{(r,s)}[I](x)\bigr)_{ij}
=
\sum_{\alpha=1}^{3}
h_{\alpha i}^{(r)}[I](x)\cdot h_{\alpha j}^{(s)}[I](x).
\end{equation}
A direct computation gives
\[
F_\theta[I](x)=\|z_I(x)\|^2
\begin{pmatrix}
W^TW & W^TU\\
U^TW & U^TU
\end{pmatrix}.
\]
It follows that
\[
F_\theta[\omega\cdot I](x)=F_\theta[I](\omega^{-1}x),
\qquad \omega\in O(3).
\]
i.e. for each fixed \(\theta\), the network is spatially
\(O(3)\)-equivariant. We now choose input $I_0(x)=e^{-\|x\|^2}$ and the target field
\[
 Y(x)=\|z_{I_0}(x)\|^2
\begin{pmatrix}\mathcal C&0\\0&\mathcal D\end{pmatrix},
\]
where 
\begin{align*}
 \mathcal C=I_4-\mathbf{a}\mathbf{a}^{\mathsf T},\quad
 \mathcal D&=2\mathcal C+3\mathbf{a}\mathbf{a}^{\mathsf T}=2I_4+\mathbf{aa}^{\mathsf T}
 \end{align*}
 with $ \mathbf{a}=\tfrac12(1,1,1,1)^{\mathsf T}.$ Notice that
\[
 z_{I_0}(x)=\frac12\left(\frac\pi2\right)^{3/2}
                 x e^{-\|{x}\|^2/2},
\]
so $c:=\int_{\R^3}\|z_{I_0}(x)\|^4\,dx>0.$  The loss function $\mathscr{L}:V\to\R$ is therefore given by
\begin{align}
\mathscr L(W,U)
&=\frac{1}{4c}\int_{\mathbb R^3}
  \|F_\theta[I_0](x)-Y(x)\|_F^2\,dx \notag\\
&=\frac14\|W^TW-\mathcal C\|_F^2
 +\frac12\|W^TU\|_F^2
 +\frac14\|U^TU-\mathcal D\|_F^2.
\label{eq:enn-base-loss}
\end{align}
We refer to \(\mathscr L\) as the base loss.
Let \(P_\gamma\) denote the permutation matrix
associated with \(\gamma\in S_4\). The
group $G = O(3) \times S_4$ acts orthogonally on the parameter space by
\[
(\tau,\gamma)\cdot(W,U)
=(\tau WP_\gamma^T,\tau UP_\gamma^T).
\]
Since \(\mathcal C\) and \( \mathcal D\) commute with every \(P_\gamma\),
the base loss is \(G\)-invariant. We consider the regularized loss
\begin{equation}
\label{eq:enn-regularized-loss}
\widetilde{\mathscr L}(W,U)
=
\mathscr L(W,U)+\eta(q(W,U))Q(W,U),
\end{equation}
where \(Q:V\to[0,\infty)\) is a \(G\)-invariant polynomial,
specified in \eqref{eq:enn-regularizer} below, and
\[
q(W,U):=\|W^TW-C\|_F^2+\|U\|_F^2.
\]
For \(b>0\), to be chosen sufficiently small below, take
\(\eta\in C^\infty([0,\infty);[0,1])\) such that
\[
\eta(s)=
\begin{cases}
    1,\quad \text{ for }0\leq s\leq b/4\\
    0,\quad \text{ for }s\geq b/2
\end{cases}
\]
This regularizer changes the local degree near \(q^{-1}(0)\) while
preserving the loss outside \(\{q<b/2\}\).
In the remainder of this section, we study critical points of
\(\widetilde{\mathscr L}\):\vskip0.3cm

The gradient of the base loss
$\nabla \mathscr{L}:V\to V$ takes the form
\begin{equation}
 \begin{aligned}
 \nabla_W \mathscr{L}(W,U)&=W(W^{\mathsf T}W-\mathcal C)+UU^{\mathsf T}W,\\
 \nabla_U\mathscr{L}(W,U)&=U(U^{\mathsf T}U-\mathcal D)+WW^{\mathsf T}U.
 \end{aligned}
 \label{eq:gradient}
\end{equation}
%For further details regarding the equivariant convolution and tensor-product building blocks, one is referred to 
%\cite{GeigerSmidt,e3nnTP,Villar}. 
Consider $\theta_o=(T,0),$ where
\begin{equation}
 T=\frac12\begin{pmatrix}
 1&1&-1&-1\\
 1&-1&1&-1\\
 1&-1&-1&1
 \end{pmatrix}.
 \label{eq:T}
\end{equation}
A direct calculation shows that \(\theta_o\) is a critical point of \(\mathscr{L}\). Moreover, the isotropy of $\theta_o$ is
\begin{equation}
G_{\theta_o}
   =\Delta S_4=\{(\rho(\gamma),\gamma):\gamma\in S_4\}\cong S_4,
 \label{eq:isotropy}
\end{equation}
where $
\rho(\gamma):=TP_\gamma T^{T}\in O(3).$
In this section, we apply the slice principle \ref{thm:slice}, developed in Section~\ref{sec:slice},
to reduce the computation of the local $G$-equivariant gradient degree of $\nabla\widetilde{\mathscr{L}}$
near the critical orbit $G(\theta_o)$ to the computation of $\Delta S_4$-equivariant degree on the slice. More precisely, the tangent space to the orbit at \(\theta_o\), denoted by \(T_{\theta_o}G(\theta_o)\), is generated by the infinitesimal \(O(3)\)-action, and
\[
T_{\theta_o}G(\theta_o)
=
\{(AT,0):A\in\mathfrak{so}(3)\}.
\]
Choosing the standard basis
\[
J_1=
\begin{pmatrix}
0&0&0\\
0&0&-1\\
0&1&0
\end{pmatrix},
\quad
J_2=
\begin{pmatrix}
0&0&1\\
0&0&0\\
-1&0&0
\end{pmatrix},
\quad
J_3=
\begin{pmatrix}
0&-1&0\\
1&0&0\\
0&0&0
\end{pmatrix}
\]
of \(\mathfrak{so}(3)\), we obtain a basis $\{(J_1T,0),(J_2T,0),(J_3T,0)\}$
for \(T_{\theta_o}G(\theta_o)\).
Consequently, the slice at \(\theta_o\) is given by
\[
S_{\theta_o} = \{(A,B)\in V : \langle A, J_iT\rangle_F = 0,\ i=1,2,3\} = \{(A,B)\in V : AT^\top = TA^\top\}.
\]
Under the identification \(\Delta S_4\cong S_4\), the character of \(V\) has the values displayed below. Comparing these values with Table \ref{tbl:characters_s4} gives
\begin{center}
\label{tbl:characters}
\begin{tabular}
[c]{|c|c|c|c|c|c|c|}\hline
Rep. & Character & (1) & (1,2) & (1,2)(3,4) & (1,2,3) & (1,2,3,4)\\\hline
$V$ & $\chi_V$ & $24$ & $4$ & $0$ & $0$ &
$0$\\\hline
\end{tabular}
\end{center}
and 
\[
V\cong
2\mathcal V_0\oplus2\mathcal V_2
\oplus2\mathcal V_3\oplus4\mathcal V_4.
\] 
Moreover, since
\[
\dim T_{\theta_o}G(\theta_o)=\dim G(\theta_o)=\dim G-\dim G_{\theta_o}=3
\]
and the tangent space is isomorphic to $\mathcal V_3,$ we conclude that slice $S_{\theta_o}$ admits the following isotypic decomposition
\[
S_{\theta_o}=V_0\oplus V^*_0\oplus V_2\oplus V^*_2\oplus V_3\oplus V_4\oplus V^*_4\oplus V^{**}_4\oplus V^{***}_4,\quad\quad  
\] 
where $V_j=\mathcal V_j, (j=0,2,3,4)$ and $V_i, V_i^*\; (i=0,2)$ are equivalent, $V_4, V_4^*, V_4^{**}, V_4^{***}$ are equivalent.
\vskip0.3cm
We next compute the Hessian $
\nabla^2\mathscr{L}(\theta_o):V\to V$ and determine its eigenspaces. More precisely, 
\begin{equation}
 \nabla^2\mathscr{L}(\theta_o)=
 \begin{pmatrix}
  \mathcal M&0\\
  0&-(I_4+\mathbf{aa}^{\mathsf T})\otimes I_3
 \end{pmatrix},
 \qquad
 \mathcal M_{ij}=\delta_{ij}I_3+t_jt_i^{\mathsf T},
 \label{eq:hessian-matrix}
\end{equation}
where $t_j\in\mathbb{R}^3$ denotes the $j$-th column of $T,$ so that each $\mathcal M_{ij}$ is a
$3\times3$ block. A direct calculation gives the eigenvalues of \(\nabla^2\mathscr{L}(\theta_o)\) together with their multiplicities and associated eigenspaces, as listed in Table \ref{tab:hessian-slice}. 
\begin{table}[ht]
\centering
\renewcommand{\arraystretch}{1.25}
\begin{tabular}{c l c}
\hline
Eigenvalue & Eigenspace & Dimension \\
\hline
$2$  & $\mathcal{V}_0$ & $1$ \\
$2$  & $\mathcal{V}_2$ & $2$ \\
$2$  & $\mathcal{V}_4$ & $3$ \\
$1$  & $\mathcal{V}_4^{*}$ & $3$ \\
$-1$ & $\mathcal{V}_0^{*}$ & $1$ \\
$-1$ & $\mathcal{V}_2^{*}$ & $2$ \\
$-1$ & $\mathcal{V}_3$ & $3$ \\
$-1$ & $\mathcal{V}_4^{**}$ & $3$ \\
$-2$ & $\mathcal{V}_4^{***}$ & $3$ \\
$0$  & $\mathcal{V}_3^{*}$ & $3$ \\
\hline
\end{tabular}
\caption{$\delta\big(\nabla^2\mathscr{L}(\theta_o)\big)$ and the
corresponding eigenspaces}
\label{tab:hessian-slice}
\end{table}\vskip0.3cm
Notice that the restricted Hessian
\( L\big|_{S_{\theta_o}}:=\nabla^2\mathscr L(\theta_o)\big|_{S_{\theta_o}}:S_{\theta_o}\to S_{\theta_o}\)
is invertible and
\[
E^{-}\bigl( L\big|_{S_{\theta_o}}\bigr)
\cong\mathcal V_0\oplus\mathcal V_2\oplus\mathcal V_3\oplus 2\mathcal V_4 .
\]
We now specify the polynomial \(Q\) in
\eqref{eq:enn-regularized-loss}.
On \(\mathbb R^{4\times4}\), let \(S_4\) act by
\(X\mapsto P_\gamma XP_\gamma^T\).
For \(j=2,3\), denote the corresponding isotypic orthogonal
projection by
\begin{equation}
\label{eq:enn-projectors}
\Pi_jX
=
\frac{d_j}{24}
\sum_{\gamma\in S_4}
\chi_j(\gamma)\,P_\gamma XP_\gamma^T,
\qquad
d_j=\dim\mathcal V_j.
\end{equation}
On the class representatives listed in Table \ref{tbl:characters_s4},
\[
\chi_2=(2,0,2,-1,0),
\quad
\chi_3=(3,-1,-1,0,1).
\]
Each of these two representations occurs once in
\(\mathbb R^{4\times4}\) under the conjugation action. Set
\begin{equation}
\label{eq:enn-regularizer}
Q(W,U)
:=
\|\Pi_2(W^TU)\|_F^2
+
\|\Pi_3(W^TU)\|_F^2.
\end{equation}
The projections commute with the conjugation action, and
\(W^TU\) transforms as \(P_\gamma W^TU P_\gamma^T\).
Hence \(Q\) is nonnegative and \(G\)-invariant.
Together with the invariance of \(q\), this proves that
\(\widetilde{\mathscr L}\) is a smooth \(G\)-invariant loss.
Moreover, the function \(q\) is proper, \(q^{-1}(0)=G(\theta_o)\), and
\(D^2q(\theta_o)\) is positive definite on \(S_{\theta_o}\).
For sufficiently small \(r>0\), the sublevel set
\[
\mathcal U_r
:=
\{\theta\in V:q(\theta)<r\}
\]
is therefore a bounded \(G\)-invariant tubular neighborhood of
\(G(\theta_o)\). Fix \(b>0\) sufficiently small that
\(\overline{\mathcal U_b}\) contains no critical points of
\(\mathscr L\) outside \(G(\theta_o)\), and use this value of \(b\)
in the cutoff in \eqref{eq:enn-regularized-loss}.\vskip.3cm

Since \(\nabla Q(\theta_o)=0\), \(\theta_o\) remains a critical point
of \(\widetilde{\mathscr L}\). The only nonzero eigenvalue of
\(\nabla^2Q(\theta_o)\) is \(2\), with eigenspace \(\mathcal V_2^*\oplus\mathcal V_3\),
so regularization changes the Hessian eigenvalue on this subspace
from \(-1\) to \(1\). Therefore, for the invertible regularized Hessian $\widetilde L_{S_{\theta_o}}
:=
\left.
\nabla^2\widetilde{\mathscr L}(\theta_o)
\right|_{S_{\theta_o}},$ one has
\begin{equation}
\label{eq:enn-modified-negative}
E^-(\widetilde L_{S_{\theta_o}})\cong
\mathcal V_0\oplus2\mathcal V_4.
\end{equation}
Choose \(0<a<b/4\) sufficiently small that
\(\overline{\mathcal U_a}\) contains no critical points
of \(\widetilde{\mathscr L}\) outside \(G(\theta_o)\), and put
\begin{equation}
\label{eq:enn-annular-domain}
\Omega
=
\mathcal U_b\setminus\overline{\mathcal U_a}
=
\{\theta\in V:a<q(\theta)<b\}.
\end{equation}
Thus \(\Omega\) is a \(G\)-invariant annular region and \(G(\theta_o)\cap\Omega=\varnothing\). Since \(\eta(q)=0\) near \(\partial\mathcal U_b\), the homotopy
\[
\nabla\bigl(\mathscr L+t\eta(q)Q\bigr),
\qquad t\in[0,1],
\]
is an admissible \(G\)-gradient homotopy on \(\mathcal U_b\).
By additivity, homotopy, and the slice principle
with local linearization, we obtain, for sufficiently small \(r_0>0\),
\begin{align}
\nabla_G\text{-}\deg
(\nabla\widetilde{\mathscr L},\Omega)
&=
\nabla_G\text{-}\deg
(\nabla\widetilde{\mathscr L},\mathcal U_b)
-
\nabla_G\text{-}\deg
(\nabla\widetilde{\mathscr L},\mathcal U_a)
\notag\\
&=
\nabla_G\text{-}\deg
(\nabla\mathscr L,\mathcal U_b)
-
\nabla_G\text{-}\deg
(\nabla\widetilde{\mathscr L},\mathcal U_a)
\notag\\
&=
\Psi\!\left(
\nabla_{\Delta S_4}\text{-}\deg
(L_{S_{\theta_o}},B_{r_0}(S_{\theta_0})\right)
-
\Psi\!\left(\nabla_{\Delta S_4}\text{-}\deg
(\widetilde L_{S_{\theta_o}},B_{r_0}(S_{\theta_0})\right),
\label{eq:enn-degree-difference}
\end{align}
where $\Psi: U(\Delta S_4)\to U(O(3)\times S_4)$ is the ring homomorphism induced by
generators $\Psi(K)=(K)$, $(K)\in\Phi(\Delta S_4).$ Notice that $ S_4$ is discrete, and hence $\Phi(\Delta S_4)=\Phi_0(\Delta S_4).$ It then follows from Lemma \ref{le:involutive} that 
\begin{align}
\nabla_{\Delta S_4}\text{-}\deg\bigl(L_{S_{\theta_o}},B_{r_0}(S_{\theta_o})\bigr)
&=\prod_{\beta\in\delta^-(\nabla^2\mathscr{L}(\theta_o))}
  \Big(\prod_{i=0}^4\nabla_{\Delta S_4}\text{-}\deg_{\mathcal V_i}\Big)\notag\\
&=\nabla_{\Delta S_4}\text{-}\deg_{\mathcal V_0^*}*\nabla_{\Delta S_4}\text{-}\deg_{\mathcal V_2^*}*\nabla_{\Delta S_4}\text{-}\deg_{\mathcal V_3}*\nabla_{\Delta S_4}\text{-}\deg_{\mathcal V_4^{**}}*\nabla_{\Delta S_4}\text{-}\deg_{\mathcal V_4^{***}}\notag\\
&=\nabla_{\Delta S_4}\text{-}\deg_{\mathcal V_0}*\nabla_{\Delta S_4}\text{-}\deg_{\mathcal V_2}*\nabla_{\Delta S_4}\text{-}\deg_{\mathcal V_3}*\bigl(\nabla_{\Delta S_4}\text{-}\deg_{\mathcal V_4}\bigr)^2\notag\\
&=\nabla_{\Delta S_4}\text{-}\deg_{\mathcal V_0}*\nabla_{\Delta S_4}\text{-}\deg_{\mathcal V_2}*\nabla_{\Delta S_4}\text{-}\deg_{\mathcal V_3},
\label{eqn:gradientdegree}\\
\intertext{and}
\nabla_{\Delta S_4}\text{-}\deg\bigl(\widetilde{L}_{S_{\theta_o}},B_{r_0}(S_{\theta_o})\bigr)
&=\prod_{\beta\in\delta^-(\nabla^2\widetilde{\mathscr{L}}(\theta_o))}
  \Big(\prod_{i=0}^4\nabla_{\Delta S_4}\text{-}\deg_{\mathcal V_i}\Big)\notag\\
&=\nabla_{\Delta S_4}\text{-}\deg_{\mathcal V_0^*}*\nabla_{\Delta S_4}\text{-}\deg_{\mathcal V_4^{**}}*\nabla_{\Delta S_4}\text{-}\deg_{\mathcal V_4^{***}}\notag\\
&=\nabla_{\Delta S_4}\text{-}\deg_{\mathcal V_0}*\bigl(\nabla_{\Delta S_4}\text{-}\deg_{\mathcal V_4}\bigr)^2\notag\\
&=\nabla_{\Delta S_4}\text{-}\deg_{\mathcal V_0}.
\label{eqn:gradientdegree_1}
\end{align}
where
\begin{align*}
\nabla_{\Delta  S_4}\text{-}\deg_{\mathcal V_0}&=-(\Delta S_4)\\
\nabla_{\Delta S_4}\text{-}\deg_{\mathcal V_2}&=(\Delta S_4)+(\Delta V_4)-\textcolor{red}{2(\Delta D_4)}\\
\nabla_{\Delta S_4}\text{-}\deg_{\mathcal V_3}&=(\Delta S_4)-\textcolor{red}{(\Delta D_1)}-\textcolor{red}{(\Delta \mathbb Z_3)}-\textcolor{red}{(\Delta \mathbb Z_4)}+(\Delta \mathbb Z_1).
\end{align*}
Notice, all the maximal orbit types are highlighted in red. Therefore, 
\[
\nabla_{G}\text{-}\deg
\bigl(\nabla \widetilde{\mathscr{L}},\Omega\bigr)=(\Delta\mathbb Z_2)-(\Delta D_1)+\textcolor{red}{(\Delta\mathbb Z_3)}-(\Delta V_4)-(\Delta\mathbb Z_4)+\textcolor{red}{2(\Delta D_4)},
\]
and the symmetry index for $(\widetilde{\mathscr{L}}, \Omega)$ is given by
\begin{equation}\label{eq:ENN_index}
\mathcal{I}_G(\widetilde{\mathscr{L}},\Omega)
=
\Big\{
\bigl((\Delta\mathbb Z_3),1\bigr),
\bigl((\Delta D_4),2\bigr)
\Big\}.
\end{equation}
Figure~\ref{fig:ENN} illustrates the patterns of the critical matrices $U,W$ associated with the orbit types in the symmetry index \eqref{eq:ENN_index}.
\begin{figure}[H]
    \centering
    \includegraphics[width=0.8\linewidth]{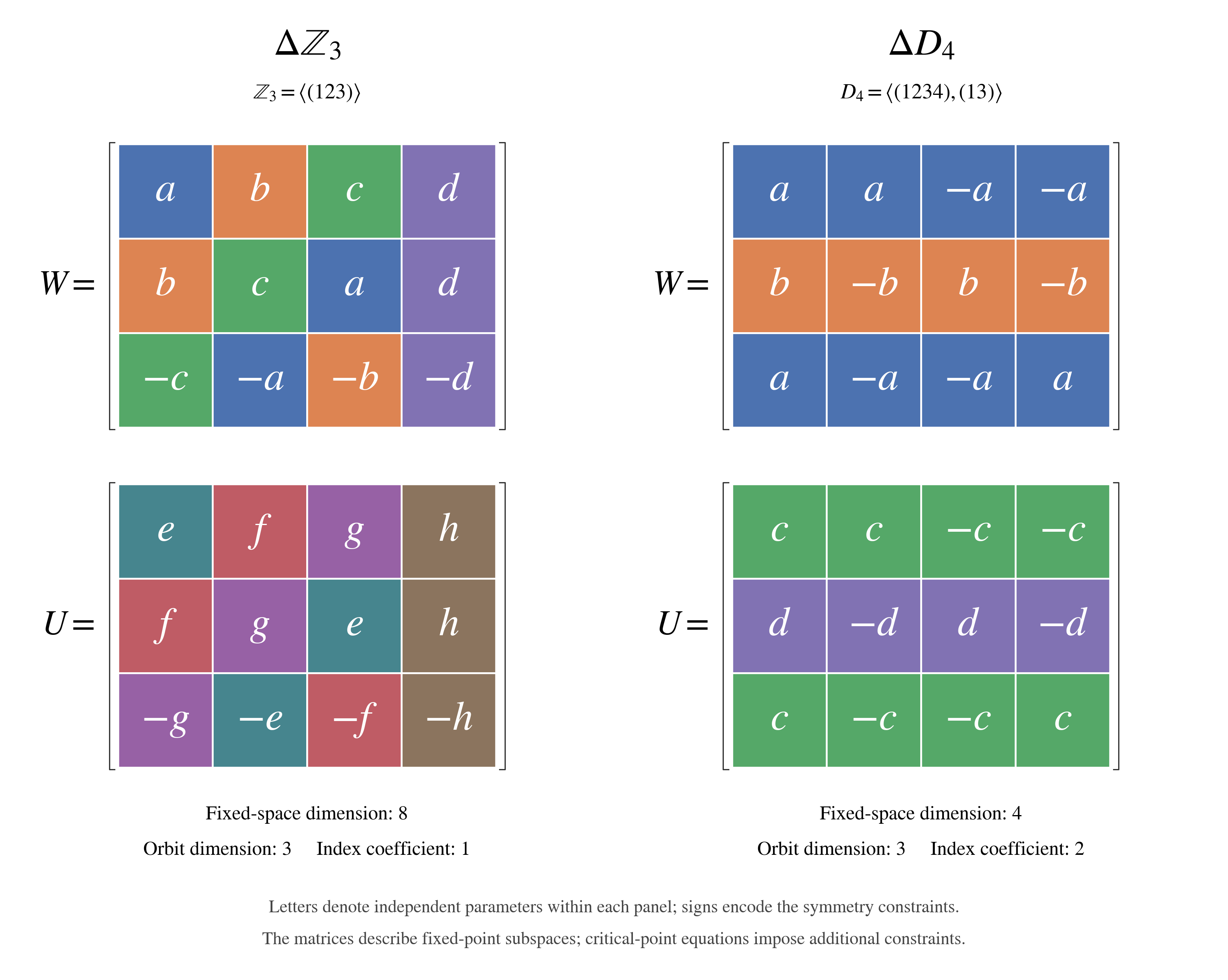}
    \caption{Patterns of Critical Matrices $U,W$ in $\mathcal I_{G}(\widetilde{\mathscr L},\Omega)$}
    \label{fig:ENN}
\end{figure}

\vskip0.3cm

\subsection{Transformer via the Recursive Formula}
\label{subsec:fmtr-recursive}
In this section, we construct a single-layer, single-head toy transformer with regularized loss to illustrate the recursive method in Section \ref{sec:recursive} for the study of \textbf{degenerate} critical points. Let $V=(\mathbb{R}^{3\times 3})^{\oplus 5}$ be equipped with the product Frobenius inner product. For
\[
\theta=(Q,K_e,B,O,M)\in V,
\]
where $Q$, $K_e$, $B$, $O\in \mathbb R^{3\times 3}$ are the query, key, value, and output projection matrices, and $M\in \mathbb R^{3\times 3}$ is the input weight of MLP. Set input sequence $X=(I_3,-I_3)^T\in\mathbb{R}^{6\times 3}$ and define the transformer architecture
\begin{equation}
\begin{aligned}
&S_\theta = \operatorname{softmax}\!\Big(\frac{XQK_e^TX^T}{\sqrt{3}}\Big)\\
&Y_\theta = X+S_\theta XBO\\
&F_\theta(X) = Y_\theta+\tanh(Y_\theta M), 
\end{aligned}
\end{equation}
with \(\operatorname{softmax}\) applied row by row and \(\operatorname{tanh}\) entrywise.
Let loss function be
\[
\mathscr{L}(\theta)= \tfrac12\,\|F_\theta(X)-X\|_F^2 .
\]
For $P\tau, P_\gamma$ the permutation matrices associated with $\tau,\gamma \in S_3,$ the group $G=S_3\times S_3$ acts orthogonally on $V$ by
\begin{equation}\label{eq:transformer_action}
(\tau,\gamma)\cdot\theta=(QP_\tau^T,\,K_eP_\tau^T,\,BP_\gamma^T,\,P_\gamma O,\,M),
\end{equation}
which preserves $QK_e^T$ and $BO$, and hence $F_\theta(X)$. Clearly, $\mathscr{L}$ is $G$-invariant.\vskip0.3cm

The loss $\mathscr{L}$ admits a family of trivial zero-loss critical points
\[
\left\{(Q, 0, B, 0, 0) : Q, B \in \mathbb R^{3\times 3}\right\}.
\]
Indeed, \(K_e=O=M=0\) implies \(Y_\theta=X\) and \(F_\theta(X)=X\). We introduce coordinates adapted to the $G$-action by separating the mean-zero parts of the first rows of $Q$ and $B$ from the remaining parameters. More precisely, let
\[
P=I_3-\frac13\mathbf{1}\mathbf{1}^T,
\;
E=\mathbf{1}^\perp\subset\mathbb R^3
\]
where $\mathbf{1}=(1,1,1)^T,$ so that \(P\) is the orthogonal projection onto the mean-zero subspace \(E\).
With \(e_1=(1,0,0)^T\), set
\[
v=PQ^Te_1,
\qquad
w=PB^Te_1,
\qquad
u=\theta-(e_1v^T,0,e_1w^T,0,0).
\]
Thus \(v,w\in E\) are the centered first rows of \(Q\) and \(B\),
respectively. The remaining component
\(u\) belongs to the orthogonal complement \(Z\), giving the orthogonal decomposition
\[
V\cong E\oplus E\oplus Z,
\]
where $\dim E=2,\;\dim Z=41.$ The first and second factors of \(G\) act on \(v\) and \(w\),
respectively, and
\(\|\theta\|^2=\|v\|^2+\|w\|^2+\|u\|^2\). Moreover, at $u = 0$, where $K_e = O = M = 0$, one has
\begin{equation}\label{eq:hahahah}
\begin{aligned}
&\mathscr L(v, w, 0) = 0, \\
&\nabla \mathscr L(v, w, 0) = 0. 
\end{aligned}
\end{equation}\vskip0.3cm
To define the regularizer, identify $E$ isometrically with \(\mathbb C\) through
$\zeta(v)
=
\sqrt{\frac32}\,v_1
+\frac{i}{\sqrt2}(v_2-v_3),$ and set
\[
\chi(t)=
\begin{cases}
0,
& t\le1/64,
\\[2pt]
\exp\!\left(-\dfrac1{t-1/64}\right),
& t>1/64,
\end{cases}
\]
and 
\[
q(v)=|\zeta(v)^3-1|^2.
\]
Notice \(S_3=\langle (123),(23)\rangle,\) and the generators act as multiplication
by \(e^{2\pi i/3}\) and complex conjugation, respectively. Thus $q$ is \(S_3\)-invariant, and the potential
\begin{equation}
\phi(v):=\int_0^{q(v)}\chi(t)\,dt.
\label{eq:phi}
\end{equation}
is smooth, nonnegative and $S_3$-invariant. Its flat region 
$\mathcal{A}=\{q\le 1/64\}$
consists of three disjoint closed topological disks, and $\phi$ is critical exactly on 
\[
\{0\}\cup\mathcal{A}.
\]
We refer the reader to
Appendix \ref{app:admissibility} for details of the auxiliary-potential construction and the characterization of its critical
set. \vskip0.3cm
We now use \(\phi\) to define a \(G\)-invariant regularized loss
and derive a priori estimates for its critical points to select
an admissible domain \(\Omega\). By the smoothness of \(\mathscr L\), there exists \(\mathcal C\ge0\) such that
\begin{equation}\label{eq:lowerbound}
\bigl\langle\nabla^2_{\theta}\mathscr L(\theta)\xi,\xi\bigr\rangle
\ge -\mathcal C\|\xi\|^2
\qquad
\text{for all }\|\theta\|\le 5/4,\quad \xi\in V.
\end{equation}
Choose \(\lambda>\mathcal C\) and define the regularized loss
\begin{equation}
\widetilde{\mathscr L}(\theta)
=
\mathscr L(\theta)
+\frac{\lambda}{2}\|u\|^2
+\phi(v)+\phi(w).
\label{eq:regularized-loss}
\end{equation}
Every nonzero critical point of \(\widetilde{\mathscr L}\) in
\(\overline{B_{5/4}(0)}\) satisfies the \textit{a priori} bounds
\[
\frac12
<
(7/8)^{1/3}
\le \|\theta\|
\le (9/8)^{1/3}
<
\frac54.
\]
We refer the reader to Appendix~\ref{app:admissibility}
for the proof of these estimates.
They lead us to choose the \(G\)-invariant annulus
\[
\Omega
=
B_{5/4}(0)\setminus\overline{B_{1/2}(0)},
\]
and
the pair \((\nabla\widetilde{\mathscr L},\Omega)\) is admissible. The critical family in this region is exactly
\begin{equation}
\bigl\{\theta\in\overline\Omega:
\nabla\widetilde{\mathscr L}(\theta)=0\bigr\}
=
(\mathcal A\times\{0\}\times\{0\})
\cup
(\{0\}\times\mathcal A\times\{0\})
\subset\Omega.
\label{eq:critical-family}
\end{equation}
Every critical point in this family has a four-dimensional
Hessian kernel, and no critical \(G\)-orbit is isolated.
The characterization in \eqref{eq:critical-family}
and these degeneracy properties are established
in Appendix~\ref{app:admissibility}.
\vskip0.3cm

We next apply the recursive formula~\eqref{eqn:recur} to compute the gradient degree $\nabla_{G}\text{-}\deg
\bigl(\nabla \widetilde{\mathscr{L}},\Omega\bigr)$. Write $\mathbb{Z}_3=\langle(123)\rangle$, $D_1=\langle(23)\rangle$ and $\mathbb{Z}_1=\{e\}$. By the $G$-action~\eqref{eq:transformer_action}, the stabilizers of the three column pairs of $(Q,K_e)$, and the three column--row pairs of $(B,O)$, are conjugate to $S_3$, $D_1$ or $\mathbb{Z}_1$. Note that $\mathbb{Z}_3$ does not occur as an isotropy, as it would degenerate to $S_3$ otherwise. Therefore, the orbit types in $V$ under $S_3\times S_3$ takes the form of direct products
\[
(K_1\times K_2),
\]
where $K_1,K_2\in\{S_3,\,D_1,\,\mathbb{Z}_1\}.$ By~\eqref{eq:critical-family}, every critical point of $\widetilde{\mathscr{L}}$ in $\Omega$ has one of the coordinates $v,w$ in $\mathcal{A}$ and the other at $0$. We therefore compute the local degrees of $\nabla\phi$ near $0$ and near $\mathcal{A}$. For $K\le S_3$, let $E^K$ be the $K$-fixed point subspace and define
\[
(\nabla\phi)^K:=\nabla\phi\big|_{E^K}\colon E^K\to E^K.
\]
Set
\[
b_K=\deg
\bigl((\nabla\phi)^K,\ B_{1/2}(0)\cap E^K\big),
\]
and, with $N^K=\{v\in E^K:\ q(v)<1/16\}$,
\[
a_K=\deg\big((\nabla\phi)^K,\ N^K\big).
\]
Both degrees are well defined, since $\nabla\phi$ has no zeros on $\partial B_{1/2}(0)$ or on $\{q=1/16\}$. Their values are given in Table \ref{tab:fixed-point-degrees}.
\begin{table}[h]
\centering
\begin{tabular}{cccc}
\hline
$K$ & $\dim E^K$ & $b_K$ & $a_K$\\
\hline
$S_3$ & $0$ & $1$ & $0$\\
$D_1$ & $1$ & $0$ & $1$\\
$\mathbb{Z}_1$ & $2$ & $-2$ & $3$\\
\hline
\end{tabular}
\caption{Fixed-point degrees of $\nabla\phi$ near $0$ and near $\mathcal{A}$.}
\label{tab:fixed-point-degrees}
\end{table}
Indeed, near $0$ the gradient is $-6\chi(1)\bar{z}^{2}$ to leading order ($b_{\mathbb Z_1}=-2$); on the reflection axis $E^{D_1}$ it is a negative multiple of $x^2$ ($b_{D_1}=0$). On each component of $N^{\mathbb{Z}_1}$, the gradient homotopy from $\phi$ to $q$ has no zeros on the boundary, and $q$ has a nondegenerate minimum there ($a_{\mathbb Z_1}=3$). Finally, each reflection axis meets exactly one such component ($a_{D_1}=1$).\vskip0.3cm
We now assemble these local degrees into the fixed-point degrees of $\nabla\widetilde{\mathscr{L}}$ on $\Omega$. For $K=K_1\times K_2\le G$, set
\[
d_K=\deg\big(\nabla\widetilde{\mathscr{L}}|_{V^K},\,\Omega^K\big),\qquad \Omega^K=\Omega\cap V^K .
\]
The two factors act separately on the two \(E\)-coordinates, while \(Z\) carries the induced \(G\)-action. Thus, one has $V^K=E^{K_1}\oplus E^{K_2}\oplus Z^K.$
Consider the homotopy
\[
\widetilde{\mathscr{L}}_t=t\,\mathscr{L}+\tfrac{\lambda}{2}\|u\|^2+\phi(v)+\phi(w),\qquad 0\le t\le 1,
\]
and the gradients \(\nabla\widetilde{\mathscr L}_t\) forms an admissible $G$-gradient homotopy (see Appendix \ref{app:admissibility}). At $t=0$,
\[
\nabla\widetilde{\mathscr{L}}_0(v,w,u)=\big(\nabla\phi(v),\,\nabla\phi(w),\,\lambda u\big).
\]
Fix $0<\rho<1/3$ and let
\[
\begin{aligned}
\mathcal U_1&=N\times B_{1/2}(0)\times B_\rho(0),\\
\mathcal U_2&=B_{1/2}(0)\times N\times B_\rho(0),
\end{aligned}
\]
where $N=\{v\in E: q(v)<1/16\}$. On $N$ we have $0.90<\|v\|<1.08$, so $\mathcal U_1,\mathcal U_2$ are disjoint subsets of $\Omega$ containing all zeros of $\nabla\widetilde{\mathscr{L}}_0$ in $\overline{\Omega}$. Writing $\mathcal U_i^K=\mathcal U_i\cap V^K$, and for $B_{1/2}(0)\subset E$, $B_\rho(0)\subset Z,$ by homotopy, additivity, product properties, we obtain
\begin{equation}\label{eq: dk}
\begin{aligned}
d_K
&=\deg\big(\nabla\widetilde{\mathscr{L}}_0|_{V^K},\,\Omega^K\big)\\
&=\deg\big(\nabla\widetilde{\mathscr{L}}_0|_{V^K},\,U_1^K\big)+\deg\big(\nabla\widetilde{\mathscr{L}}_0|_{V^K},\,\mathcal U_2^K\big)\\
&=a_{K_1}\,b_{K_2}\,\deg\big(\lambda\,\mathrm{Id},\,B_\rho(0)\cap Z^K\big)+b_{K_1}\,a_{K_2}\,\deg\big(\lambda\,\mathrm{Id},\,B_\rho(0)\cap Z^K\big)\\
&=a_{K_1}b_{K_2}+b_{K_1}a_{K_2}.
\end{aligned}
\end{equation}
Moreover, according to the conjugacy classes subgroup lattice of $S_3\times S_3$ (isotropies highlighted in red) in Figure \ref{fig:lattice}, one can derive the $|W_G(K)|$ and $n(K, H)$ required by recursive formula \eqref{eqn:recur}. See Tables \ref{tab:WeylS3} and \ref{tab:nHK} for details.
\begin{figure}
    \centering
\includegraphics[width=0.7\linewidth]{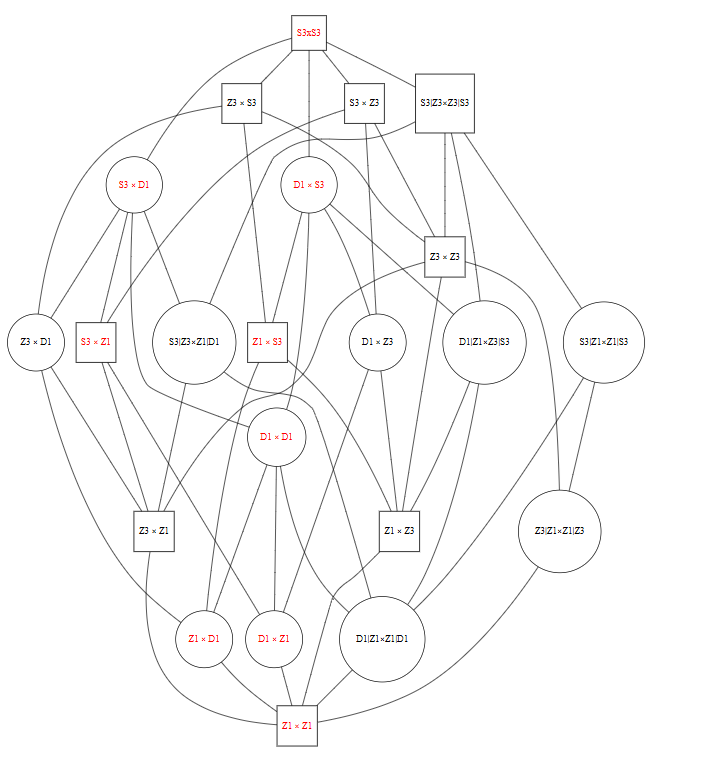}
    \caption{$S_3\times S_3$ Subgroup Lattice}
    \label{fig:lattice}
\end{figure}

\begin{table}[ht]
\centering
\renewcommand{\arraystretch}{0.7}
\begin{tabular}{lclc}
\hline
$K$ & $|K|$ & $W_G(K)$ & $|W_G(K)|$ \\
\hline
$\mathbb Z_1\times\mathbb Z_1$ & 1  & $S_3\times S_3$ & 36 \\
$\mathbb Z_1\times D_1$ & 2  & $S_3$ & 6 \\
$D_1\times\mathbb Z_1$ & 2  & $S_3$ & 6 \\
$D_1\times D_1$ & 4  & $\mathbb Z_1$   & 1 \\
$\mathbb Z_1\times S_3$ & 6  & $S_3$ & 6 \\
$S_3\times\mathbb Z_1$ & 6  & $S_3$ & 6 \\
$S_3\times D_1$ & 12 & $\mathbb Z_1$   & 1 \\
$D_1\times S_3$ & 12 & $\mathbb Z_1$   & 1 \\
$S_3\times S_3$ & 36 & $\mathbb Z_1$   & 1 \\
\hline
\end{tabular}
\caption{$W(K)$ in $S_3\times S_3$}
\label{tab:WeylS3}
\end{table}\vskip0.3cm
\begin{table}[ht]
\centering
\renewcommand{\arraystretch}{0.7}
\begin{tabular}{c|ccccccccc}
\hline
$K \backslash H$ & $\mathbb Z_1\times\mathbb Z_1$ & $\mathbb Z_1\times D_1$ & $D_1\times\mathbb Z_1$ & $D_1\times D_1$ & $\mathbb Z_1\times S_3$ & $S_3\times\mathbb Z_1$ & $S_3\times D_1$ & $D_1\times S_3$ & $S_3\times S_3$ \\
\hline
$\mathbb Z_1\times\mathbb Z_1$ & 1 & 3 & 3 & 9 & 1 & 1 & 3 & 3 & 1 \\
$\mathbb Z_1\times D_1$ & 0 & 1 & 0 & 3 & 1 & 0 & 1 & 3 & 1 \\
$D_1\times\mathbb Z_1$ & 0 & 0 & 1 & 3 & 0 & 1 & 3 & 1 & 1 \\
$D_1\times D_1$ & 0 & 0 & 0 & 1 & 0 & 0 & 1 & 1 & 1 \\
$\mathbb Z_1\times S_3$ & 0 & 0 & 0 & 0 & 1 & 0 & 0 & 3 & 1 \\
$S_3\times\mathbb Z_1$ & 0 & 0 & 0 & 0 & 0 & 1 & 3 & 0 & 1 \\
$S_3\times D_1$ & 0 & 0 & 0 & 0 & 0 & 0 & 1 & 0 & 1 \\
$D_1\times S_3$ & 0 & 0 & 0 & 0 & 0 & 0 & 0 & 1 & 1 \\
$S_3\times S_3$ & 0 & 0 & 0 & 0 & 0 & 0 & 0 & 0 & 1 \\
\hline
\end{tabular}
\caption{ The number of $n(K,H)$.}
\label{tab:nHK}
\end{table}\vskip0.3cm
Combining Tables \ref{tab:WeylS3}, \ref{tab:nHK} with \eqref{eqn:recur},
one has the orbit types and gradient degree coefficients listed in Table \ref{tab:dK-nK}.
\begin{table}[H]
\centering
\small
\setlength{\tabcolsep}{4pt}
\begin{tabular}{c|ccccccccc}
\hline
$K$ & $S_3\times S_3$ & $S_3\times D_1$ & $D_1\times S_3$ & $D_1\times D_1$ & $S_3\times\mathbb{Z}_1$ & $\mathbb{Z}_1\times S_3$ & $D_1\times\mathbb{Z}_1$ & $\mathbb{Z}_1\times D_1$ & $\mathbb{Z}_1\times\mathbb{Z}_1$\\
\hline
$d_K$ & $0$ & $1$ & $1$ & $0$ & $3$ & $3$ & $-2$ & $-2$ & $-12$\\
$|W_G(K)|$ & $1$ & $1$ & $1$ & $1$ & $6$ & $6$ & $6$ & $6$ & $36$\\\hline\hline
$n_K$ & $0$ & $1$ & $1$ & $-2$ & $0$ & $0$ & $0$ & $0$ & $0$\\
\hline
\end{tabular}
\caption{Fixed-point degrees $d_K$ and coefficients $n_K$ of the equivariant degree.}
\label{tab:dK-nK}
\end{table}
\noindent The gradient degree is given by 
\[
\nabla_G\text{-}\deg(\nabla\widetilde{\mathscr{L}},\Omega)=\textcolor{red}{(D_1\times S_3)}+\textcolor{red}{(S_3\times D_1)}-2\,(D_1\times D_1),
\]
and the symmetry index therefore is 
\begin{align*}
\mathcal{I}_G(\widetilde{\mathscr{L}},\Omega)
=
\Big\{
&\bigl((S_3\times D_1),1\bigr),
\bigl((D_1\times S_3),1\bigr)
\Big\}.
\end{align*}
Figure \ref{fig:transformer_symmetry} illustrates the matrix constraints imposed by the two detected symmetry types.
\begin{figure}[H]
    \centering
    \includegraphics[width=1\linewidth]{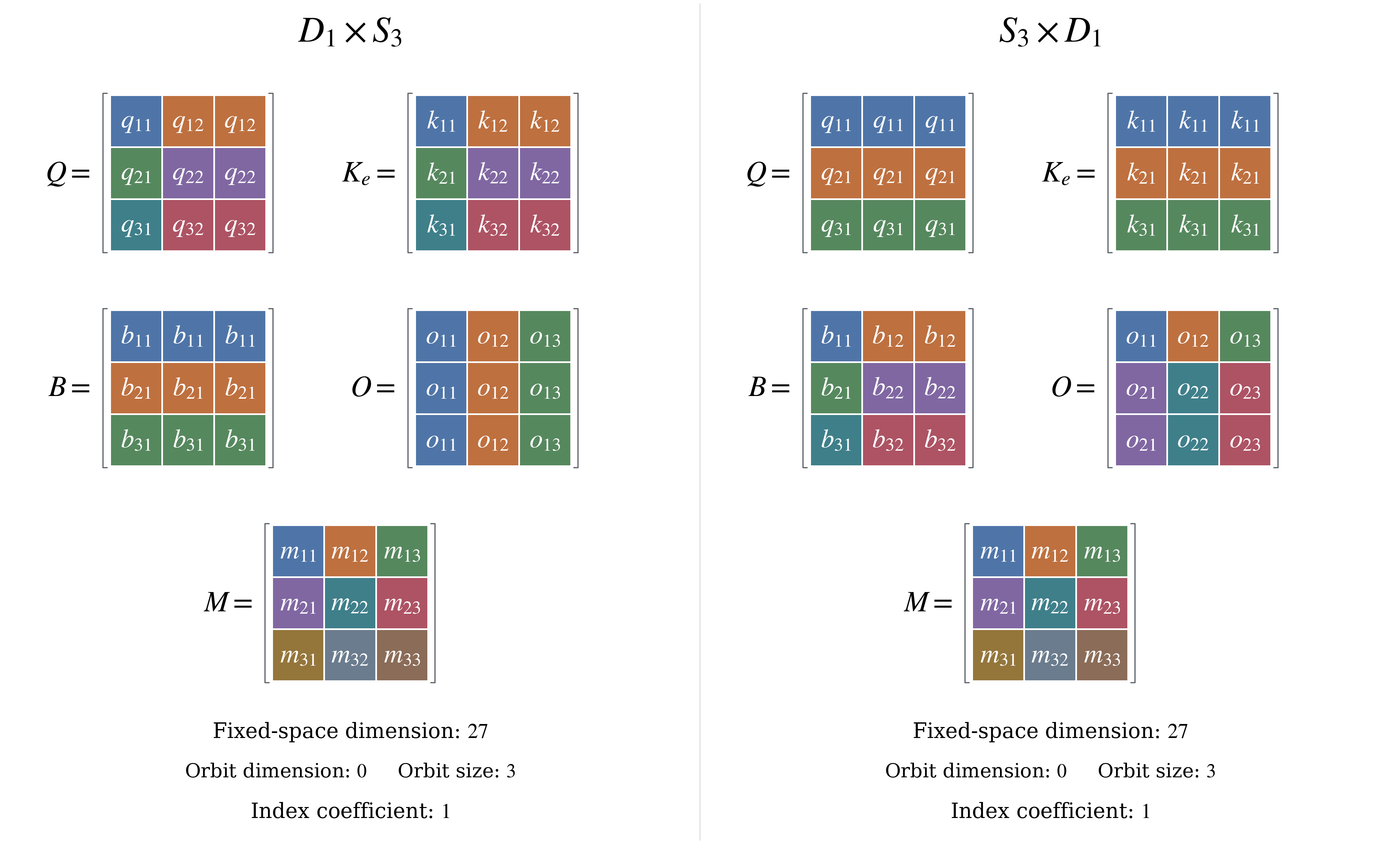}
    \caption{Matrix patterns in the fixed-point spaces \(V^{D_1\times S_3}\) and \(V^{S_3\times D_1}\)}
    \label{fig:transformer_symmetry}
\end{figure}

\vskip 0.6cm
\section{Future work}

In this work, we have developed topological symmetry index based on the family of Equivariant Degree Theories that yield topological invariants of the critical-point structure of loss functions for symmetric neural networks. The MLP, transformer, and ENN examples in Section \ref{sec:example} are toy models designed to illustrate the degree computations. Extending the methods to deeper MLPs, broader classes of ENNs, and multi-head, multi-layer transformers with layer normalization is a direction for future work.\\

Equivariant degree theory also offers tools beyond those used here, including \(S^1\)-equivariant and twisted equivariant degrees. Their potential applications to loss landscapes, particularly to continuous families of critical points and training dynamics, remain to be explored.\\

Finally, as mentioned in Section \ref{sec: terminology}, the equivariant degree framework assumes a compact Lie group. It therefore does not directly accommodate noncompact symmetries such as positive rescaling in ReLU networks or the attention symmetry \((Q,K)\mapsto(QA,KA^{-T})\), where \(A\) is invertible. Whether suitable normalized slices or quotient spaces can bring these symmetries within a compact-group framework remains an open question.\vskip0.6cm

\appendix
\section{Euler and Burnside Rings\label{app:Euler}} The additive structure of the Euler ring is $U(G)=\mathbb Z[\Phi(G)].$
Thus, its elements are finite integer linear combinations of conjugacy classes in \(\Phi(G)\). More precisely, every \(a\in U(G)\) has a unique expansion
\[
a=\sum_{(K)\in\Phi(G)}n_K(K),
\qquad n_K\in\mathbb Z,
\]
with only finitely many nonzero coefficients. We write
\[
\operatorname{coeff}^{K}(a):=n_K
\]
for the coefficient associated with \((K)\). Under the Euler-ring multiplication \(*\), the class \((G)\) serves as the identity:
\[
(G)*a=a
\qquad\text{for every }a\in U(G).
\]
\begin{lemma} Multiplication by an invertible element \(a\in U(G)\) preserves a nonzero coefficient at \((K)\) in the product \((K)*a\). In other words,
\[
\operatorname{coeff}^{K}\bigl((K)*a\bigr)\ne0
\qquad\text{for every }(K)\in\Phi(G).
\]
\end{lemma}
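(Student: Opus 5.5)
The plan is to turn the coefficient in question into a ring homomorphism $U(G)\to\mathbb Z$. For a fixed $(K)\in\Phi(G)$ define
\[
\mu_K\colon U(G)\to\mathbb Z,\qquad \mu_K(b):=\operatorname{coeff}^{K}\bigl((K)*b\bigr).
\]
If $\mu_K$ is multiplicative and $\mu_K((G))=1$, the lemma follows at once. For invertible $a$,
\[
1=\mu_K((G))=\mu_K(a*a^{-1})=\mu_K(a)\,\mu_K(a^{-1}),
\]
so $\mu_K(a)=\pm1\neq0$. The normalization is clear, since $(K)*(G)=(K)$ because $(G)$ is the identity.

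\textbf{Step 1 (support of products).} I would first show that $(S)*(H)$ is supported on classes $(L)$ with $(L)\le(S)$ and $(L)\le(H)$. In the formula \eqref{eq:cS}, the set $(G/S\times G/H)_L$ is empty unless $L$ is the isotropy of some point $(gS,g'H)$. That isotropy is $gSg^{-1}\cap g'Hg'^{-1}$, which is subconjugate to both $S$ and $H$.

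\textbf{Step 2 (no leakage from below).} Next I need that if $(S)\le(K)$ with $(S)\neq(K)$, then no term $(K)$ can appear in $(S)*c$. By Step 1, such a term would force $(K)\le(S)\le(K)$. Hence $gKg^{-1}\subseteq S\subseteq g'Kg'^{-1}$, which gives an inclusion $hKh^{-1}\subseteq K$. For a compact Lie group this inclusion must be an equality: conjugation preserves both the dimension and the number of connected components, and a closed subgroup of a compact Lie group with the same dimension and component count is the whole group. Therefore $(S)=(K)$, a contradiction. I expect this to be the main technical point, because it is exactly where compactness enters; in a general Lie group, a proper self-conjugate inclusion could occur.

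\textbf{Step 3 (multiplicativity).} Let $x=(K)*b=\sum_{(S)\le(K)}x_S(S)$, where the support restriction comes from Step 1 and $x_K=\mu_K(b)$. By associativity of $*$,
\[
\mu_K(b*c)=\operatorname{coeff}^K\bigl(x*c\bigr)=\sum_{(S)\le(K)}x_S\operatorname{coeff}^K\bigl((S)*c\bigr).
\]
By Step 2, every term with $(S)\neq(K)$ vanishes, leaving $x_K\operatorname{coeff}^K((K)*c)=\mu_K(b)\mu_K(c)$. Additivity of $\mu_K$ is immediate from bilinearity, so $\mu_K$ is a ring homomorphism.

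Combining Steps 1–3 with the opening argument gives $\mu_K(a)=\pm1$, which proves $\operatorname{coeff}^K((K)*a)\neq0$ for every $(K)\in\Phi(G)$. The argument does not even require computing the structure constants $c_S$ explicitly.
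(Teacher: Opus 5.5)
Your proof is correct and uses the same mechanism as the paper. Products are supported below each factor, so nothing strictly below $(K)$ can feed into the $(K)$-coefficient; the paper runs this as a contradiction on $\bigl((K)*a\bigr)*a^{-1}=(K)$, whereas you package it as the multiplicative functional $\mu_K$. Your version is slightly sharper, since it gives $\operatorname{coeff}^{K}\bigl((K)*a\bigr)=\pm1$, and it proves the antisymmetry of subconjugacy for compact Lie groups, which the paper's step ``classes below some $(H)<(K)$ lie strictly below $(K)$'' uses without comment.
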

\begin{proof} The multiplication rule ensures that every class appearing with a nonzero coefficient in \((K)*a\) lies below or equals \((K)\). We may therefore write
\[
(K)*a=\sum_{(H)\le(K)}m_H(H),
\]
where the sum has finite support.
Suppose that \(m_K=0\). Then every class contributing to this expansion satisfies \((H)<(K)\). Multiplying once more by \(a^{-1}\) can only produce classes lying below these \((H)\), so
\[
\bigl((K)*a\bigr)*a^{-1}
=
\sum_{(S)<(K)}c_S(S).
\]
However, associativity and invertibility give
\[
\bigl((K)*a\bigr)*a^{-1}
=(K)*(a*a^{-1})
=(K)*(G)
=(K).
\]
This contradicts uniqueness of the expansion in the basis \(\Phi(G)\). Hence \(m_K\ne0\). 
\end{proof}\vskip0.3cm
\noindent To describe the Burnside ring, consider
\[
\Phi_0(G)=\{(K)\in\Phi(G):\dim W(K)=0\},
\qquad
A(G)=\mathbb Z[\Phi_0(G)].
\]
For generators belonging to \(\Phi_0(G)\), the Burnside product retains precisely the \(\Phi_0(G)\)-components of their Euler-ring product:
\[
(K)\cdot(H)
=
\sum_{(S)\in\Phi_0(G)}n_S(S).
\]
Its coefficients are
\begin{equation}\label{eqn:coe}
n_S
=
\chi\!\left((G/K\times G/H)_S/N(S)\right)
=
\left|(G/K\times G/H)_S/N(S)\right|,
\end{equation}
where \(\chi\) denotes the ordinary Euler characteristic. Extending this multiplication bilinearly equips \(A(G)\) with the ring structure called the Burnside ring of \(G\).
The coefficients in \eqref{eqn:coe} can be evaluated recursively, beginning with the larger conjugacy classes:
\[
n_S=
\frac{
n(S,K)|W(K)|\,n(S,H)|W(H)|
-
\displaystyle\sum_{(\widetilde S)>(S)}
n(S,\widetilde S)\,n_{\widetilde S}|W(\widetilde S)|
}{
|W(S)|
}.
\]
All classes appearing in this formula belong to \(\Phi_0(G)\). The quantities involved are defined using the transporter
\[
N_G(S,K):=\{g\in G:gSg^{-1}\subseteq K\}.
\]
With this convention, its left coset spaces are
\[
K\backslash N_G(S,K)
=
\{Kg:g\in N_G(S,K)\},
\]
and
\[
n(S,K)
=
\left|N_G(K)\backslash N_G(S,K)\right|,
\]
where \(N_G(K)\) is the normalizer of \(K\).
The nilpotence behavior of the basis elements depends on the dimension of their Weyl groups, as recorded in the following result: %see [11, Proposition 1.14, p. 231].
\begin{proposition}\label{prop:ha} Let $\Phi_n(G)=\{(K)\in\Phi(G):\dim W(K)=n\}.$ For \((K)\in\Phi_n(G)\), the following alternatives hold:
\begin{enumerate}
\item If \(n>0\), the element \((K)\) is nilpotent in \(U(G)\); that is,\[
   (K)^k=0
   \quad\text{for some positive integer }k.
   \]
\item If \(n=0\), every positive power remains nonzero:\[
   (K)^k\ne0
   \quad\text{for all positive integers }k.
   \]
\end{enumerate}
\end{proposition}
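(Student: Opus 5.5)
The plan is to read off the ``leading coefficient'' of a product $(K)*(H)$ from the multiplication rule \eqref{eq:cS}, and then iterate. Set $X:=G/K\times G/H$. The isotropy group of $(gK,hK)$ is $gKg^{-1}\cap hHh^{-1}$. Hence every class $(S)$ with $c_S\neq0$ satisfies $(S)\le(K)$ and $(S)\le(H)$; this is the same fact used in the preceding lemma. The first step is to compute the coefficient at $(K)$ in $(K)*(K)$. A point $(gK,hK)$ has isotropy exactly $K$ only if $gKg^{-1}=K=hKh^{-1}$. Here I use that, for compact Lie groups, $K\subseteq hKh^{-1}$ forces equality, which follows by comparing dimension and number of components. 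So $X_K=W(K)\times W(K)$, and $N(K)$ acts diagonally through $W(K)$. This gives $X_K/N(K)\cong W(K)$, and therefore
\[
(K)*(K)=\chi_c(W(K))\,(K)+\sum_{(S)<(K)}c_S(S).
\]
If $\dim W(K)=0$, then $\chi_c(W(K))=|W(K)|>0$. If $\dim W(K)>0$, then $\chi_c(W(K))=0$, since a compact Lie group of positive dimension carries a free circle action and so has vanishing Euler characteristic.

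For part (2), I argue by induction on $k$. The same computation applied to $(K)^{k}=(K)^{k-1}*(K)$ shows that the coefficient of $(K)$ in $(K)^k$ equals $|W(K)|^{k-1}$. This uses that every other term of $(K)^{k-1}$ lies strictly below $(K)$, so its product with $(K)$ contributes only to classes strictly below $(K)$. Since $|W(K)|^{k-1}\neq0$, we get $(K)^k\neq0$.

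For part (1), the computation above only shows that $(K)^2$ is supported strictly below $(K)$. By itself this is not enough, because lower terms may have finite Weyl groups. My approach is to exploit a free circle action commuting with $G$. Choose a circle $T\subseteq W(K)$, which exists since $\dim W(K)>0$. Then $T$ acts freely on $G/K$ from the right by $gK\cdot nK=gnK$, and this action commutes with the left $G$-action. Let $T$ act diagonally on $Y=(G/K)^k$. Using the associativity of $*$ and the geometric description of iterated products via $Y$ (as in \cite{gkeba1997degree}), each coefficient of $(K)^k$ is $\chi_c(Y_S/N(S))$. The group $T$ acts on each space $Y_S/N(S)$, and by the localization formula $\chi_c(Z)=\chi_c(Z^{T})$ for circle actions, only the $T$-fixed part contributes. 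A point of $Y_S/N(S)$ is fixed by $T$ exactly when $T$ moves the $G$-orbit along itself. In that case the stabilizer $S$ acquires a positive-dimensional normalizing piece, that is, $\dim W(S)\ge1$.

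I then proceed by induction on the pair $(\dim K,\ \#\pi_0(K))$, ordered lexicographically; this order is well founded by the descending chain condition on closed subgroups. The induction shows that $(K)^k$ is supported on classes with positive-dimensional Weyl group that lie strictly below $(K)$ and become strictly lower as $k$ grows. The orbit types of the compact $G$-manifold $(G/K)^k$ form a finite set. Combined with the descending chain condition, this forces the support to become empty for $k$ large, i.e.\ $(K)^k=0$.

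The main obstacle is this last step: making the localization argument rigorous on the non-compact strata $Y_S/N(S)$. Compactly supported Euler characteristics are needed there, so the equality $\chi_c(Z)=\chi_c(Z^{T})$ must be justified in that setting. One must also check that the $T$-fixed strata really correspond to classes $(S)$ with $\dim W(S)>0$ that are themselves nilpotent. I would first try to reduce this to the standard statement in tom Dieck's theory of the Euler ring: for $\dim W(K)>0$, the class $(K)$ lies in the kernel of every ``fixed-point character'' $a\mapsto\chi_c$ of $L$-fixed strata, and an element killed by all these characters is nilpotent.
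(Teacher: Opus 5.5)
The paper states this proposition without proof, so your argument has to stand on its own. Part (2) is correct. The identification $X_K/N(K)\cong W(K)$ gives $\chi_c(W(K))$ as the coefficient of $(K)$ in $(K)*(K)$, and triangularity of $*$ then gives coefficient $|W(K)|^{k-1}$ in $(K)^k$.

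In part (1) your individual ingredients are also correct. First, $(K)$ drops out of $(K)^2$ because $\chi(W(K))=0$. Second, the circle localization excludes classes with finite Weyl group; this holds granting the localization on the strata, or you can replace it by $\pi_0((K)^2)=\pi_0((K))^2=0$. But the conclusion does not follow as you wrote it.

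The claim that the support of $(K)^k$ ``becomes strictly lower as $k$ grows'' is not a consequence of either fact. The first concerns only the single class $(K)$ in $(K)^2$. The second concerns only classes in $\Phi_0(G)$. Nothing shows that a class $(S)$ with $\dim W(S)>0$ occurring in $(K)^k$ is absent from $(K)^{k+1}$. Nor is ``finitely many orbit types in each $(G/K)^k$ plus the descending chain condition'' a termination argument by itself. These finite sets grow with $k$, and the descending chain condition does not bound the length of descending chains.

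Either of two fixes works.

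\textbf{(a) Close the induction at $k=2$.} Let your well-founded induction on $(\dim K,\#\pi_0(K))$ do the work. By the two facts above, $(K)^2$ is a finite integer combination of classes $(S)<(K)$ with $\dim W(S)>0$. These are nilpotent by the induction hypothesis. A finite sum of nilpotent elements of the commutative ring $U(G)$ is nilpotent; prove this directly by the binomial argument rather than citing the paper's corollary, which is deduced from this proposition. Hence $(K)^2$, and so $(K)$, is nilpotent. In this route the localization step is indispensable.

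\textbf{(b) Prove the strict descent directly.} The coefficient of $(S)$ in $(S)*(K)$ is
$\chi_c\bigl((G/S\times G/K)_S/N(S)\bigr)=\chi\bigl((G/K)^S\bigr)$.
This vanishes because the free right action of $W(K)$ on $G/K$ preserves $(G/K)^S$. So $(S)*(K)$ is supported strictly below $(S)$ for every $(S)$. Well-founded induction on $(S)$ then gives $(S)*(K)^{m_S}=0$, with $m_S=1+\max m_T$ over the finitely many $(T)$ in the support of $(S)*(K)$ (and $\max\emptyset=0$). Taking $(S)=(K)$ gives $(K)^{m_K+1}=0$.

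Route (b) needs no localization on non-compact strata, which removes the obstacle you flagged. By contrast, your fallback to tom Dieck's fixed-point characters essentially quotes the result rather than proving it.
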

\begin{corollary} 
Every finite integer linear combination
\[
\alpha=\sum_{j=1}^{k}n_j(L_j),
\qquad
\dim W(L_j)>0\quad (j=1,\ldots,k),
\]
is nilpotent in \(U(G)\). Thus, \(\alpha^n=0\) for some positive integer \(n\).
\end{corollary}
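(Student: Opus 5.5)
The plan is to deduce the corollary from Proposition~\ref{prop:ha} by the standard fact that, in a commutative ring, a finite sum of nilpotent elements is nilpotent. The argument has three steps: show that each generator $(L_j)$ is nilpotent, check that $U(G)$ is commutative and associative so the multinomial expansion is valid, and choose the exponent $n$ by a pigeonhole count.

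\textbf{Nilpotency of the generators.} Since $\dim W(L_j)>0$, each class $(L_j)$ lies in $\Phi_{n_j}(G)$ for some $n_j>0$. Part~(1) of Proposition~\ref{prop:ha} then gives an integer $k_j\ge1$ with $(L_j)^{k_j}=0$ in $U(G)$.

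\textbf{Ring structure of $U(G)$.} I would next check that $U(G)$ is commutative, since this is what legitimizes the multinomial expansion. The swap map $G/K\times G/H\to G/H\times G/K$ is a $G$-equivariant homeomorphism. It therefore carries the stratum $(G/K\times G/H)_S$ onto $(G/H\times G/K)_S$ and commutes with the $N(S)$-action. Passing to the quotients, the compactly supported Euler characteristics in \eqref{eq:cS} agree, so $(K)*(H)=(H)*(K)$, and commutativity extends bilinearly. Associativity, with $(G)$ as the identity, is already used in the preceding lemma, so I take it as part of the ring structure. With these in hand, $\alpha^n$ expands by the multinomial theorem as
\[
\alpha^n=\sum_{e_1+\cdots+e_k=n}\binom{n}{e_1,\ldots,e_k}\,n_1^{e_1}\cdots n_k^{e_k}\,(L_1)^{e_1}*\cdots*(L_k)^{e_k}.
\]

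\textbf{Choice of $n$.} Take $n:=\sum_{j=1}^k(k_j-1)+1$. In each monomial the exponents satisfy $e_1+\cdots+e_k=n$, so some $e_j\ge k_j$. Otherwise $e_j\le k_j-1$ for every $j$, and the exponents would sum to at most $n-1$. For that index $j$, commutativity lets us factor $(L_j)^{e_j}=(L_j)^{k_j}*(L_j)^{e_j-k_j}=0$, so the whole monomial vanishes. Hence $\alpha^n=0$.

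The only point that is not pure bookkeeping is the commutativity of the Euler-ring product. I expect it to follow directly from the symmetry of the defining formula \eqref{eq:cS} as sketched above, so I do not anticipate a genuine obstacle.
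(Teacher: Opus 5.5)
Your proposal is correct and is essentially the paper's argument: the paper inducts on the number of summands, applying the binomial theorem to $\alpha + n_{k+1}(L_{k+1})$ with exponent $m \ge p+q$, whereas you do the same thing in one step via the multinomial theorem and the pigeonhole choice $n = \sum_j (k_j - 1) + 1$. Your swap-map argument for the commutativity of $U(G)$ supplies a point the paper only asserts; do avoid reusing $n_j$ both for the coefficients of $\alpha$ and for $\dim W(L_j)$.
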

\begin{proof} 
We proceed by induction on the number of summands. For a single summand, Proposition \ref{prop:ha} gives an integer \(q\ge1\) such that \((L_1)^q=0\), and consequently
\[
\bigl(n_1(L_1)\bigr)^q=n_1^q(L_1)^q=0.
\]
Assume the assertion holds for sums of \(k\) terms, and write a sum of \(k+1\) terms as
\[
\beta=\alpha+n_{k+1}(L_{k+1}),
\qquad
\alpha=\sum_{j=1}^{k}n_j(L_j).
\]
By the induction hypothesis and Proposition \ref{prop:ha}, there exist positive integers \(p,q\) satisfying
\[
\alpha^p=0,
\qquad
(L_{k+1})^q=0.
\]
Since \(U(G)\) is commutative, the binomial expansion yields
\[
\beta^m
=
\sum_{\ell=0}^{m}
\binom{m}{\ell}
\alpha^\ell
n_{k+1}^{\,m-\ell}
(L_{k+1})^{m-\ell}.
\]
Choose \(m\ge p+q\). In each summand, either \(\ell\ge p\), so that \(\alpha^\ell=0\), or \(\ell<p\), in which case \(m-\ell\ge q\) and \((L_{k+1})^{m-\ell}=0\). Every term therefore vanishes, proving that \(\beta^m=0\). 
\end{proof}
\vskip.6cm 

\section{$G$-Equivariant Brouwer Degree}\label{app:Brouwer}

In this section, we introduce the Equivariant Brouwer Degree, which provides the foundation for the gradient-related degree theories presented in sections \ref{sec:orthogonalrep} and \ref{sec:manifold}. Our exposition takes a computational approach and is structured around the various methods of computing the equivariant Brouwer degree. (For a more classical and comprehensive treatment of the theory, including its formal development and further theoretical details, we refer the readers to \cite{balanov2025degree}). We first introduce its definition and fundamental properties from the viewpoint of its general construction, together with a simplification by passing to a subgroup/ normal slice. We then present two concrete computational approaches: one based on reduction to regular normal maps, and the other based on linear \(G\)-isomorphisms and basic degrees. Before proceeding to the main discussion, we first introduce the setting as follows:\vskip 0.3cm
For each finite-dimensional orthogonal representation \(V\), let \(\mathcal M^G(V,V)\) contain the admissible pairs \((f,\Omega)\). Their union over all such representations is denoted by
\[
\mathcal M^G=\bigcup_V\mathcal M^G(V,V).
\]\vskip0.3cm
\begin{definition}[Fundamental Domain]
Suppose the action on \(V\) is free, and write \(\pi:V\to V/G\) for the quotient map. A submanifold \(D_0\) defines a fundamental domain \(D=\overline{D_0}\) when its distinct translates are disjoint, its closure meets every orbit, and its dimension equals that of the quotient:
\begin{align*}
&gD_0\cap hD_0=\varnothing\quad(g\neq h),\\
&\pi(\overline{D_0})=V/G,\\
&\dim D_0=\dim(V/G).
\end{align*}
\end{definition}
\noindent For an invariant open region \(\Omega\), each orbit-type subset \(\Omega_{(K)}\) is an invariant submanifold. At a point \(\theta\) of this stratum, the ambient representation splits into tangent and normal directions:
\[
V=T_\theta\Omega_{(K)}\oplus N_\theta\Omega_{(K)}.
\]
\begin{definition}[Normal map]
An admissible equivariant map \(f\) is normal in \(\Omega\) if it agrees locally with the identity in directions normal to the orbit-type stratum at each zero. Explicitly, for every \(\theta\in f^{-1}(0)\cap\Omega_{K}\), some \(\delta_\theta>0\) satisfies
\[
f(\theta+v)=v
\qquad
\bigl(v\in N_\theta\Omega_{(K)},\ \|v\|<\delta_\theta\bigr).
\]
\end{definition}
\begin{definition}[Regular normal map] A normal map is regular normal when it is \(C^1\) and the fixed-point restriction is regular at every zero of each occurring isotropy type. Thus, for each \((K)\in\Phi(G,f^{-1}(0)\cap\Omega)\), zero must be a regular value of $\left.f^K\right|_{\Omega_K},$
where $f^K=\left.f\right|_{V^K}.$
\end{definition}\vskip0.3cm
 
\subsection{Recursive-Construction Methodology (Based on Definition and Fundamental Properties)}\label{sec:rec}

\begin{definition}[Equivariant Brouwer Degree]
\label{thm:GpropDeg} The equivariant Brouwer degree $G\text{-}\deg:\mathcal M^G\longrightarrow A(G)$ associates a Burnside-ring element with every admissible pair, and the expansion is given by
\begin{equation}
\label{eq:G-deg0}
G\text{-}\deg(f,\Omega)
=
\sum_{(K_j)\in\Phi_0(G)} n_{j}(K_j)
=
n_{1}(K_1)+\cdots+n_{m}(K_m),
\end{equation}
with coefficients
\begin{equation}\label{eq:recurence}
n_{j}
=
\frac{
\deg\bigl(f^{K_j},\Omega^{K_j}\bigr)
-
\displaystyle\sum_{(H)>(K_j)}
n_H\,n(K_j,H)\,|W(H)|
}{
|W(K_j)|
},\quad j=1,\cdots, m.   
\end{equation}
Here $\deg\bigl(f^{K_j},\Omega^{K_j}\bigr)$ denotes the local Brouwer degree (the non-equivariant counterpart). One is referred to \cite{balanov2025degree} Chapter 2 for the details of local Brouwer degree and its computation.
\end{definition}\vs 
We next present the properties that uniquely determine the equivariant Brouwer degree, together with several additional properties relevant to its computation.
\begin{theorem}
The following three axioms determine \(G\text{-}\deg:\mathcal{M}^{G}\longrightarrow A(G)\) uniquely:
\begin{enumerate} \item[\textnormal{($\mathcal{P}_1$)}]
    \textbf{Additivity.}
    Let $\Omega_1,\Omega_2\subset\Omega$ be $G$-invariant disjoint open subsets and $f^{-1}(0)\cap\Omega\subset\Omega_1\cup\Omega_2.$ Then
    \[
    G\text{-}\deg(f,\Omega)
    =
    G\text{-}\deg(f,\Omega_1)
    +
    G\text{-}\deg(f,\Omega_2).
    \]
\item[\textnormal{($\mathcal{P}_2$)}]
    \textbf{Homotopy.}
    Let $h_t: V\longrightarrow V$ be an $\Omega$-admissible $G$-equivariant homotopy. Then
    $G\text{-}\deg(h_t,\Omega)$ is a constant on $t\in[0,1]$.

    \item[\textnormal{($\mathcal{P}_3$)}]
    \textbf{Normalization.}
    Suppose that $f$ is regular normal in $\Omega$ and that its zero set
    consists of a single $G$-orbit, i.e. $f^{-1}(0)\cap\Omega=G(x_0).$
Then \[
G\text{-}\deg(f,\Omega)=n_0(K)
\]
where $K=G_{x_0}$ and $n_0
    =
    \operatorname{sign}\det Df^K(x_0).$
\end{enumerate}
\end{theorem}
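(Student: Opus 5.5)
The plan is to compare an arbitrary function $d:\mathcal M^G\to A(G)$ satisfying $(\mathcal P_1)$--$(\mathcal P_3)$ with $G\text{-}\deg$, and to show that $d(f,\Omega)$ is forced for every admissible pair $(f,\Omega)$. Existence of a function with these properties is taken from the recursive construction in Definition \ref{thm:GpropDeg}, so only uniqueness has to be argued. The argument has three steps: reduce to regular normal maps by homotopy, split the zero set into finitely many orbits by additivity, and evaluate each orbit by normalization.

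\emph{Step 1: the empty case.} Applying $(\mathcal P_1)$ with $\Omega_1=\Omega_2=\varnothing$ to a pair with no zeros gives $d(f,\Omega)=2\,d(f,\Omega)$. Hence $d(f,\Omega)=0$ whenever $f^{-1}(0)\cap\Omega=\varnothing$. The same additivity argument gives excision: $d(f,\Omega)=d(f,\Omega')$ for every invariant open $\Omega'\subset\Omega$ that contains all zeros.

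\emph{Step 2: reduction to regular normal maps.} I would show that every admissible $f$ is $\Omega$-admissibly $G$-homotopic to a regular normal map $\tilde f$. The construction is inductive over $\Phi(G,\Omega)$, starting from a maximal orbit type $(K)$, in the same spirit as Proposition \ref{prop: alpha_normal}.
\begin{enumerate}
\item On a small tubular neighborhood $\exp\mathcal N(U_0,\varepsilon)$ of $\Omega_{(K)}$, deform $f$ linearly, using an invariant cutoff, to the map $(\pi(y),y)\mapsto f(\pi(y))+y$. This is normal in the directions normal to the stratum.
\item On $\Omega_K$, where $W(K)$ acts freely, perturb $f^K|_{\Omega_K}$ so that $0$ becomes a regular value. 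Do this on the manifold $\Omega_K/W(K)$ using Sard's theorem, and lift the perturbation $W(K)$-equivariantly. Extend it $G$-equivariantly to $\Omega_{(K)}$ and then to a neighborhood by a cutoff and an equivariant Tietze/Gleason extension.
\item Keep every perturbation smaller than $\min_{\partial\Omega}\|f\|$, so the straight-line homotopy remains admissible.
\item Proceed to the next maximal remaining orbit type. Choose the modification supported away from the strata already treated, so that their regular normal structure is preserved.
\end{enumerate}
By $(\mathcal P_2)$ this gives $d(f,\Omega)=d(\tilde f,\Omega)$.

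\emph{Step 3: evaluation.} For a regular normal $\tilde f$, regularity of $\tilde f^K|_{\Omega_K}$ makes its zero set $0$-dimensional. Since $W(K)$ acts freely on $\Omega_K$, this set contains $W(K)$-orbits, so zeros can only occur for $(K)\in\Phi_0(G)$. Normality isolates each zero orbit transversally. Compactness of $\tilde f^{-1}(0)\cap\overline\Omega$ then yields finitely many orbits $G(x_1),\dots,G(x_p)$, which we enclose in disjoint invariant tubular neighborhoods $\Omega_i$. Repeated use of $(\mathcal P_1)$ together with Step 1 gives
\[
d(\tilde f,\Omega)=\sum_{i=1}^p d(\tilde f,\Omega_i),
\]
and $(\mathcal P_3)$ gives $d(\tilde f,\Omega_i)=\operatorname{sign}\det D\tilde f^{K_i}(x_i)\,(K_i)$. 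The right-hand side depends only on $\tilde f$, not on $d$. Since $G\text{-}\deg$ satisfies the same axioms, $d(f,\Omega)=G\text{-}\deg(f,\Omega)$.

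The main obstacle is Step 2: the inductive equivariant transversality and normalization, with compatibility across strata and control of admissibility. I would handle it by treating strata in decreasing order of orbit type and keeping each modification supported in a tubular neighborhood that avoids the closure of the lower strata.
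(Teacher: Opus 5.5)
Your proposal is correct and takes essentially the route the paper indicates in its regular-normal-map subsection: an $\Omega$-admissible straight-line homotopy to a regular normal approximation (which the paper cites from \cite{balanov2025degree} and you sketch in Step 2), then additivity to split the zero set into finitely many isolated orbits, then normalization on each orbit. Two small repairs are needed: in Step 1, $(\mathcal P_1)$ with $\Omega_1=\Omega_2=\varnothing$ gives $d(f,\Omega)=2\,d(f,\varnothing)$ rather than $2\,d(f,\Omega)$, so you must first get $d(f,\varnothing)=0$ (e.g.\ from $\Omega_1=\Omega$, $\Omega_2=\varnothing$); and in Step 2 you should first smooth $f$ (Haar-average a smooth approximation) and carry the stratum perturbation off $\Omega_{(K)}$ through the normal form $f(\pi(y))+y$, not through an arbitrary Tietze extension, since otherwise normality at the zeros is lost.
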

\begin{proposition} The axioms also yield the following consequences:
\begin{enumerate}
\item[\textnormal{($\mathcal{P}_4$)}]
\textbf{Existence.}
If, for $(K_j)\in\Phi_0(G)\;\; (j=1,\cdots,m)$, the corresponding coefficient $n_{j}\neq 0,$
then $f$ admits a zero $\theta_o$ in $\Omega$ whose isotropy group $(G_{\theta_o})\geq (K_j).$
\item[\textnormal{($\mathcal{P}_5$)}]
\textbf{Unitary.}
Let $\Omega\subset V$ be an open bounded $G$-invariant neighborhood
of the origin. Then 
\[
G\text{-}\deg(\operatorname{Id}_V,\Omega)=(G).
\]
\item[\textnormal{($\mathcal{P}_6$)}]
\textbf{Excision.}
Let open set $\Omega_0\subset\Omega$ be $G$-invariant and all zeros in $\Omega$ is a subset of $\Omega_0.$
Then
\[
G\text{-}\deg(f,\Omega)
=
G\text{-}\deg(f,\Omega_0).
\]

\item[\textnormal{($\mathcal{P}_7$)}]
\textbf{Product.}
For two admissible $G$-pairs $(f_1,\Omega_1)$ and $(f_2,\Omega_2)$, one has
\[
G\text{-}\deg
\bigl(f_1\times f_2,\Omega_1\times\Omega_2\bigr)
=
G\text{-}\deg(f_1,\Omega_1)
\cdot
G\text{-}\deg(f_2,\Omega_2).
\]
The product is taken in Burnside ring.

\item[\textnormal{($\mathcal{P}_8$)}]
\textbf{Suspension.}
Let
$U$ be an open bounded $G$-invariant neighborhood of the
origin of the orthogonal $G$-representation $W$. Then from unitary and product property, one has
\[
G\text{-}\deg
\bigl(f\times\operatorname{Id}_W,\Omega\times U\bigr)
=
G\text{-}\deg(f,\Omega).
\]
\end{enumerate}    
\end{proposition}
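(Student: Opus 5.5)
The plan is to derive all five consequences from the axioms ($\mathcal P_1$)--($\mathcal P_3$), together with the recursive description \eqref{eq:recurence} of the coefficients. The central device is the family of ``mark'' maps. For $(L)\in\Phi_0(G)$ and $a=\sum n_H(H)\in A(G)$ put
\[
\varphi_L(a):=\sum_{(H)\ge(L)} n_H\,n(L,H)\,|W(H)|.
\]
Formula \eqref{eq:recurence} is then exactly the statement
\[
\varphi_L\bigl(G\text{-}\deg(f,\Omega)\bigr)=\deg\bigl(f^L,\Omega^L\bigr)\quad\text{for all }(L)\in\Phi_0(G).
\]
Since the matrix $\bigl(n(L,H)|W(H)|\bigr)$ is triangular with nonzero diagonal entries $|W(L)|$, an element of $A(G)$ is determined by its marks.

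First, the degree of the empty set vanishes: apply ($\mathcal P_1$) with $\Omega_1=\Omega_2=\Omega=\varnothing$.

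\textbf{Excision} ($\mathcal P_6$). Apply ($\mathcal P_1$) with $\Omega_1=\Omega_0$ and $\Omega_2=\varnothing$.

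\textbf{Existence} ($\mathcal P_4$). I argue by contraposition. Suppose no zero in $\Omega$ has isotropy $\ge(K_j)$. Then for every $(H)\ge(K_j)$ the map $f^H$ has no zeros in $\Omega^H$, because a zero $\theta\in\Omega^H$ satisfies $G_\theta\ge H\ge K_j$. Hence $\deg(f^H,\Omega^H)=0$. A downward induction over the finitely many classes $(H)\ge(K_j)$ in $\Phi_0(G)$, using \eqref{eq:recurence}, gives $n_H=0$ for all of them, and in particular $n_j=0$.

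\textbf{Unitary} ($\mathcal P_5$). The identity has the single zero $0$, with isotropy $G$. It is trivially normal, and it is regular with $\det D\,\mathrm{Id}^G=1$. Normalization ($\mathcal P_3$) then gives $(G)$.

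\textbf{Product} ($\mathcal P_7$). For $K\le G$ we have $(V_1\oplus V_2)^K=V_1^K\oplus V_2^K$ and $(f_1\times f_2)^K=f_1^K\times f_2^K$. The multiplicativity of the ordinary Brouwer degree under products then gives
\[
\varphi_L\bigl(G\text{-}\deg(f_1\times f_2,\Omega_1\times\Omega_2)\bigr)=\varphi_L\bigl(G\text{-}\deg(f_1,\Omega_1)\bigr)\,\varphi_L\bigl(G\text{-}\deg(f_2,\Omega_2)\bigr).
\]
It therefore suffices to show that each $\varphi_L:A(G)\to\mathbb Z$ is a ring homomorphism; injectivity of the total mark map then yields the product formula.

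\textbf{Suspension} ($\mathcal P_8$). Combine ($\mathcal P_7$) with ($\mathcal P_5$), using that $(G)$ is the unit of $A(G)$.

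The main obstacle is the multiplicativity of the marks. I would identify $\varphi_L((K))=n(L,K)|W(K)|$ with the number of $L$-fixed points $|(G/K)^L|$ for finite Weyl groups. For a general compact $G$ one must take the Euler characteristic $\chi\bigl((G/K)^L\bigr)$ and argue that only the finitely many points with finite-Weyl-group isotropy contribute. Then $(G/K\times G/H)^L=(G/K)^L\times(G/H)^L$ gives multiplicativity, and decomposing $G/K\times G/H$ into orbit types shows that $\varphi_L\bigl((K)\cdot(H)\bigr)=\sum_S n_S\,\varphi_L((S))$ agrees with the same count. This last step is exactly the recursion stated for $n_S$ in the definition of the Burnside product, so the recursion is equivalent to $\varphi_L$ being multiplicative on generators.
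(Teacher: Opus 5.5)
Your argument is correct, and it is more explicit than the paper's. The paper gives no proof of ($\mathcal P_4$)--($\mathcal P_8$). It only asserts that they follow from ($\mathcal P_1$)--($\mathcal P_3$), deferring to \cite{balanov2025degree}, and indicates that ($\mathcal P_8$) comes from ($\mathcal P_5$) and ($\mathcal P_7$), which is exactly your last step.

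Your route differs for ($\mathcal P_4$) and ($\mathcal P_7$). Instead of working from the axioms alone (for instance via regular normal approximations), you read \eqref{eq:recurence} as the mark identity $\varphi_L\bigl(G\text{-}\deg(f,\Omega)\bigr)=\deg(f^L,\Omega^L)$. This reduces ($\mathcal P_7$) to two facts: the ordinary Brouwer product formula on fixed-point subspaces, and the multiplicativity of each $\varphi_L$. As you observe, the latter is literally the paper's recursion for the Burnside-product coefficients $n_S$, rewritten. So you may simply cite that recursion rather than re-derive it. The cost is that you lean on the recurrence, which is the paper's definition, rather than on ($\mathcal P_1$)--($\mathcal P_3$) alone. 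That is legitimate here, and it buys a short, purely algebraic proof.

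Two small repairs are needed.

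\textbf{Finiteness in ($\mathcal P_4$).} For a non-finite compact $G$, the set $\{(H)\in\Phi_0(G):(H)\ge(K_j)\}$ need not be finite. For example, in $O(2)$ every $(D_n)$ lies above $(D_1)$ and has finite Weyl group. So run the argument on the finite support of the degree instead. Take a maximal $(H)\ge(K_j)$ with $n_H\neq0$; then the recurrence forces
\[
n_H\,|W(H)|=\deg(f^H,\Omega^H)=0,
\]
a contradiction. The same maximal-element argument is what makes the total mark map injective when $\Phi_0(G)$ is infinite; ``triangular'' alone does not suffice there.

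\textbf{The Euler-characteristic step is unnecessary.} For $(L)\in\Phi_0(G)$, $N(L)$ acts on $(G/K)^L$ with finitely many orbits, and $L$ acts trivially, so this is an action of the finite group $W(L)$. Hence $(G/K)^L$ is finite and $|(G/K)^L|=n(L,K)\,|W(K)|$ directly. This also shows that every $(S)\ge(L)$ automatically lies in $\Phi_0(G)$, which is what your orbit decomposition of $(G/K\times G/H)^L$ implicitly uses.
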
\vskip0.3cm
\noindent These properties provide the main computational tools. Homotopy invariance permits a simpler representative, while nonzero coefficients certify zeros with corresponding isotropy. Additivity and excision isolate local contributions; product decompositions then allow calculations in terms of basic degrees. Section \ref{sec: rnm} explains the regular-normal construction behind the normalization axiom.\vskip0.3cm
We next localize the domain near an orbit and relate the calculation to its stabilizer. Given \(\theta_o\in V\), let \(\mathcal U\) be a neighborhood of \(G(\theta_o)\) such that \(f\) is \(\mathcal U\)-admissible. Our aim is to compute $G\text{-}\deg(f,\mathcal U). $
In practical applications, this computation can be reduced to a corresponding problem on a normal slice at \(\theta_o\), with the symmetry reduced to \(G_{\theta_o}\). To formulate this reduction, we first introduce the following ring homomorphism: 

To distinguish the ambient groups, write \((K)_{G'}\) and \((K)_G\) for the two conjugacy classes of \(K\leq G'\). The homomorphism \(\Theta:A(G')\to A(G)\) is specified on generators by 
$\Theta:A(G')\longrightarrow A(G)$
by 
\begin{equation}\label{eq:burnsideringhomomorhpism}
\Theta\bigl((K)_{G'}\bigr)
=
\begin{cases}
(K)_G, & \text{if } |W_G(K)|<\infty,\\[2mm]
0,   & \text{otherwise}.
\end{cases}
\end{equation}
\begin{theorem}
    Let $\theta_o\in V$ and 
$V_o:=T_{\theta_o}G(\theta_o)$ be
the tangent space to the orbit $G(\theta_o)$ at $\theta_o$. Define $S_o:=V_o^\perp$ the slice to $G(\theta_o)$ at $\theta_o$. Denote by \(P_o:V\to S_o\) the orthogonal projection. Choose \(\varepsilon>0\) sufficiently small that
\[
\mathcal U:=G\bigl(\theta_o+B_\varepsilon(S_o)\bigr)
\]
is a tubular neighborhood. Define
\[
f_o:S_o\to S_o,\qquad
f_o(v):=P_of(\theta_o+v),
\]
and assume that $f_o$ $B_\varepsilon(S_o)$ admissible, then
\[
G\text{-}\deg(f,\mathcal U)
=
\Theta\!\left(
G_{\theta_o}\text{-}\deg\bigl(f_o,B_\varepsilon(S_o)\bigr)
\right),
\]
where $\Theta$ is the homomorphism taken from \eqref{eq:burnsideringhomomorhpism}. 
\end{theorem}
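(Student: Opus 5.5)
The plan is to reduce the statement to a comparison of non-equivariant fixed-point degrees, and then use the fact that the recursive formula \eqref{eq:recurence} determines an element of $A(G)$ uniquely from the numbers $\deg(f^K,\Omega^K)$, $(K)\in\Phi_0(G)$. Write $H:=G_{\theta_o}$. I will show that both sides of the claimed identity have the same fixed-point degrees for every $(K)\in\Phi_0(G)$.

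\textbf{Step 1 (homotopy to a slice map).} Because $\mathcal U$ is a tube, every point of $\mathcal U$ can be written as $g(\theta_o+v)$ with $g\in G$ and $v\in B_\varepsilon(S_o)$, and this is unique up to $(g,v)\sim(gh^{-1},hv)$ for $h\in H$. The projection $P_o$ is $H$-equivariant. Hence the formula
\[
f_t\bigl(g(\theta_o+v)\bigr)=g\Bigl(P_of(\theta_o+v)+(1-t)(\mathrm{Id}-P_o)f(\theta_o+v)\Bigr),\qquad t\in[0,1],
\]
defines a $G$-equivariant homotopy on $\overline{\mathcal U}$ with $f_0=f$. I will check that it is $\mathcal U$-admissible. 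Suppose $f_t$ vanished at a boundary point $g(\theta_o+v)$ with $\|v\|=\varepsilon$. Then its $S_o$-component $f_o(v)=P_of(\theta_o+v)$ would vanish, which contradicts the $B_\varepsilon(S_o)$-admissibility of $f_o$. By ($\mathcal P_2$), $G\text{-}\deg(f,\mathcal U)=G\text{-}\deg(f_1,\mathcal U)$, and $f_1(g(\theta_o+v))=g f_o(v)$ is the equivariant extension of $f_o$.

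\textbf{Step 2 (fixed-point degrees of $f_1$).} Fix $(K)\in\Phi_0(G)$. Then
\[
\mathcal U^K=\bigcup\Bigl\{\,g\bigl(\theta_o+B_\varepsilon(S_o)^{g^{-1}Kg}\bigr):\ gH\in (G/H)^K\Bigr\}.
\]
Since $\dim W(K)=0$, the set $(G/H)^K$ is finite, because it is a union of finitely many orbits of $N(K)$ whose dimension is controlled by $\dim W(K)$. So $\mathcal U^K$ is a disjoint union of finitely many slice pieces. At these points the tangent space to the orbit has trivial $K$-fixed part, so no orbit direction enters $V^K$. On each piece, $f_1^K$ is conjugate by $g$ to $f_o^{K'}$ with $K'=g^{-1}Kg\le H$, and conjugation by $g$ preserves orientation of the fixed spaces. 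Therefore
\[
\deg\bigl(f_1^K,\mathcal U^K\bigr)=\sum_{gH\in(G/H)^K}\deg\bigl(f_o^{\,g^{-1}Kg},B_\varepsilon(S_o)^{g^{-1}Kg}\bigr).
\]

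\textbf{Step 3 (matching with $\Theta$, the main obstacle).} I expand $G_{\theta_o}\text{-}\deg(f_o,B_\varepsilon(S_o))=\sum_L m_L(L)_H$. The $H$-recursive formula expresses each $\deg(f_o^{K'},B^{K'})$ as $\sum_{(L)_H\ge (K')_H} m_L\, n_H(K',L)|W_H(L)|$.

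Substituting into Step 2 gives $\deg(f_1^K,\mathcal U^K)$ as a sum of terms $m_L$ times counts. I must show that this equals $\sum_{(L)_G\ge (K)_G}\Theta(\dots)$-coefficients times $n_G(K,L)|W_G(L)|$. The required counting identity is that, summed over the $H$-classes $(L)_H$ that fuse into a fixed $G$-class $(L)_G$, the contributions agree:
\[
\sum_{gH\in (G/H)^K} n_H(g^{-1}Kg,L)\,|W_H(L)| \;=\; n_G(K,L)\,|W_G(L)|.
\]
I would prove this by counting the finite set $\{(gH,L'): K\le L'\in(L)_G,\ g^{-1}L'g\le H\}$ in two ways, using the transporter description of $n(\cdot,\cdot)$.

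Two features make this the delicate step. First, distinct $H$-classes can merge into one $G$-class. Second, classes with $W_G(L)$ infinite must contribute zero; this is the $0$ branch of $\Theta$. For the latter I would argue that the corresponding $(G/H)^L$ has positive dimension, so these classes cannot appear in any finite count. Once the counting identity holds, both Burnside elements satisfy the same recursion, and ($\mathcal P_2$) together with the uniqueness in \eqref{eq:recurence} gives $G\text{-}\deg(f,\mathcal U)=\Theta\bigl(G_{\theta_o}\text{-}\deg(f_o,B_\varepsilon(S_o))\bigr)$.
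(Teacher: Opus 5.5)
The paper gives no proof of this slice principle; it quotes it from the equivariant-degree literature. There it is usually derived geometrically. One approximates $f_o$ by an $H$-regular normal map ($H:=G_{\theta_o}$) and extends it to $\mathcal U\cong G\times_H B_\varepsilon(S_o)$. Zero orbits are then compared one at a time via the analytic formula: $H(v)$ and $G(\theta_o+v)$ have the same isotropy and the same local sign, and orbit types with infinite $W_G$ do not appear in $A(G)$.

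Your argument is correct and takes a different route. You first deform $f$ to the induced map $g(\theta_o+v)\mapsto g f_o(v)$. The homotopy is well defined because $P_o$ commutes with $H$, and it is admissible exactly by the hypothesis on $f_o$. You then compare fixed-point Brouwer degrees. Steps 2--3 amount to the mark formula for induction, $|(G/L)^K|=\sum_{gH\in(G/H)^K}|(H/L)^{g^{-1}Kg}|$, together with $|(G/L)^K|=n_G(K,L)|W_G(L)|$ and $|(H/L)^{K'}|=n_H(K',L)|W_H(L)|$.

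What your approach buys is that it uses only the recursive characterization \eqref{eq:recurence}, which the paper takes as its definition, plus non-equivariant degrees. It needs no regular normal approximation, no fundamental domains, and no comparison of $H$-strata with $G$-strata. The fusion of several $H$-classes into one $G$-class, and the zero branch of $\Theta$, come out of the counting automatically. The geometric route, in exchange, makes the orbit-by-orbit meaning of $\Theta$ transparent.

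When you write it out, tighten four points.

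First, the finiteness of $(G/H)^K$ and the claim that no orbit direction lies in $V^K$ both follow from one fact. If $W_G(K)$ is finite and $K\le H'$, then $\mathfrak g^{K}=\mathrm{Lie}\,C_G(K)\subseteq\mathrm{Lie}\,N_G(K)=\mathfrak k\subseteq\mathfrak h'$, so $(\mathfrak g/\mathfrak h')^{K}=0$. For any closed $J$, taking $H'=gJg^{-1}$ shows that the tangent space $(\mathfrak g/\mathrm{Ad}_g\mathfrak j)^K$ of $(G/J)^K$ at $gJ$ vanishes, so $(G/J)^K$ is finite. Taking $H'=H$ and $K'=g^{-1}Kg$ gives $V_o^{K'}=0$, since $V_o\cong\mathfrak g/\mathfrak h$ as $H$-representations. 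Hence $V^{K'}=S_o^{K'}$.

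Second, the double count of pairs $(gH,L')$ as you describe it only yields an unweighted identity summed over all $H$-classes inside $(L)_G$. Since those classes carry different coefficients $m_L$, you need the displayed identity for each single $(L)_H$; the sentence before the display should not say it is summed over fusing classes. The identity follows directly by fibering $(G/L)^K\to(G/H)^K$, whose fibre over $gH$ is $g\,(H/L)^{g^{-1}Kg}$. Alternatively, refine your count by the $H$-class of $g^{-1}L'g$ and apply orbit--stabilizer.

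Third, the vanishing for infinite $W_G(L)$ is cleanest stated as follows. If a conjugate $K''$ of $K$ lies in $L$, then $W_G(L)\cong(G/L)^L\subseteq(G/L)^{K''}$, which is finite by the first point. So such an $L$ never contains a conjugate of $K$.

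Fourth, $\Phi_0(G)$ may have infinite ascending chains, for example $D_1<D_2<D_4<\cdots$ in $O(2)$. So justify uniqueness this way: the difference of the two elements is finitely supported with all fixed-point numbers zero, and its coefficient at a maximal class of the support is then forced to vanish.
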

\begin{remark}
The boundary condition holds for sufficiently small \(\varepsilon\) if \(0\) is an isolated zero of \(f_o\). If \(\dim G_{\theta_o}=\dim G\), then \(S_o=V\), \(P_o=\mathrm{Id}_V\), and \(f_o(v)=f(\theta_o+v)\)
\end{remark}
\vskip0.3cm
\subsection{Methodology Through Regular Normal Map}\label{sec: rnm} To compute $G\text{-}\deg(f,\Omega),$ we begin by fixing a bounded invariant open domain \(\Omega\) on which \(f\) is admissible. Based on the homotopy property introduced in ($\mathcal P_2$), the objective of this section is to reduce the degree computation for the original nonlinear map \(f\) to that of a simpler representative $f_o$. This raises the following several questions: What should such a representative be? How can one ensure the homotopy between \(f\) and $f_o$ \(\Omega\)-admissible? And why is $f_o$ a good representative and how can its degree be calculated? To address these questions, we need the following theorem (see \cite{balanov2025degree} for details):
\begin{theorem}
Given an admissible pair \((f,\Omega)\) and \(\varepsilon>0\), there exists an equivariant map \(f_o\), regular normal in \(\Omega\), whose uniform distance from \(f\) on \(\overline\Omega\) is less than \(\varepsilon\):
\[
\sup_{\theta\in\Omega}\|f(\theta)-f_o(\theta)\|<\varepsilon.
\]
\end{theorem}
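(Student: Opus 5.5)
The plan is to build $f_o$ in three stages: smoothing, making $f$ normal along each orbit-type stratum, and making it regular on each stratum by a free-action Sard argument. The strata are treated one orbit type at a time, starting from a maximal one, as in the recursive construction of Section~\ref{sec:manifold}. Throughout we split $\varepsilon$ as $\varepsilon=\varepsilon_0+\sum_{(K)}\varepsilon_{(K)}$ over the finitely many orbit types in $\Phi(G,\overline\Omega)$. We also set $\delta:=\min_{\partial\Omega}\|f\|>0$ and assume $\varepsilon<\delta/2$, so that every map within distance $\varepsilon$ of $f$ is still $\Omega$-admissible.

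\textbf{Step 1 (smoothing).} Take a $C^\infty$ map $\tilde f$ with $\sup_{\overline\Omega}\|\tilde f-f\|<\varepsilon_0$, for instance by Weierstrass approximation on a compact neighbourhood of $\overline\Omega$. Then average it over the normalized Haar measure,
\[
f_1(\theta)=\int_G g^{-1}\tilde f(g\theta)\,dg .
\]
Since the action is orthogonal and $f$ is equivariant, $\|f_1-f\|\le\sup\|\tilde f-f\|<\varepsilon_0$. Moreover $f_1$ is smooth and equivariant.

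\textbf{Step 2 (normality, by induction on orbit types).} Order $\Phi(G,\overline\Omega)$ as $(K_0),(K_1),\dots$ by repeatedly choosing a maximal remaining type. Suppose the current map is already regular normal near the zeros lying in the strata $\Omega_{(K_i)}$ with $i<j$. For the next type $(K_j)$ we proceed as follows.
\begin{enumerate}
\item[(a)] The set $\Omega_{(K_j)}$ is an invariant submanifold, so we use its invariant tubular neighbourhood $\exp\mathcal N(U_1,2r)$. In it write points as $\theta+v$ with $v\in N_\theta\Omega_{(K_j)}$.
\item[(b)] Replace the map by
\[
\eta\cdot\bigl(P_T f(\theta)+v\bigr)+(1-\eta)f .
\]
Here $P_T$ is the projection onto $T_\theta\Omega_{(K_j)}$, and $\eta$ is an invariant cutoff equal to $1$ near the zeros on the stratum. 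This is the Brouwer analogue of Proposition~\ref{prop: alpha_normal}.
\item[(c)] Equivariance holds because $P_T$, the normal splitting and $\eta$ are all equivariant. The tangential component $P_Tf(\theta)$ is exactly $f^{K}(\theta)$ along $V_K$, since $f(V^K)\subset V^K$ and $T_\theta V_{(K)}$ contains $T_\theta V^K$.
\item[(d)] The change is small once $r$ is small. On the zero set, the normal component of $f$ vanishes to first order, and $f(\theta+v)-f(\theta)=O(\|v\|)$ uniformly on compacts. Hence the modification is $O(r)$, and we choose $r$ so that it is below $\varepsilon_{(K_j)}/2$.
\end{enumerate}

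\textbf{Step 3 (regularity on $\Omega_{K_j}$).} Here $W(K_j)$ acts freely on $\Omega_{K_j}$, so we pick a fundamental domain $D$ as in the Definition of fundamental domain. We take a small vector $c\in V^{K_j}$ that is a regular value of $f^{K_j}|_{\Omega_{K_j}}$; such $c$ are dense by Sard. We subtract $c$ using a cutoff in a neighbourhood of $D$, extend the perturbation to $\Omega_{K_j}$ by $W(K_j)$-translation, and then extend it to $\Omega_{(K_j)}$ by $G$-translation. Freeness of the action guarantees that this extension is well defined. Finally we spread the perturbation into the tube using the normal form $P_Tf+v$, so normality is kept, and we take $\|c\|<\varepsilon_{(K_j)}/2$.

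\textbf{Main obstacle: compatibility between strata.} Later modifications must not destroy normality or regularity already achieved on higher strata. They must also not create new zeros on boundaries of strata. We handle this as follows.
\begin{enumerate}
\item[(i)] After stage $j$, the zeros of higher type are isolated orbits at which the map equals the identity in normal directions. Hence there is an invariant neighbourhood $\mathcal W_{<j}$ of them on which all zeros are exactly those orbits.
\item[(ii)] We therefore support every stage-$j$ perturbation in $\Omega\setminus\overline{\mathcal W'}$, where $\mathcal W'\Subset\mathcal W_{<j}$.
\item[(iii)] By the order chosen in Step 2, $\overline{\Omega_{(K_j)}}\setminus\Omega_{(K_j)}$ lies in strata of larger type, that is, in strata already processed. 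So the zeros of the current map in $\overline{\Omega_{(K_j)}}\setminus\mathcal W'$ form a compact subset of the open stratum, and the tubular-neighbourhood and cutoff arguments of Steps 2 and 3 apply there.
\item[(iv)] If a zero of type $(K_j)$ lay inside $\mathcal W'$, it would be a zero of the normal-form map in $\mathcal W_{<j}$, contradicting (i). So no such zeros are missed by the support restriction in (ii).
\end{enumerate}

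Summing the bounds from Steps 1–3 gives $\sup_{\overline\Omega}\|f-f_o\|<\varepsilon$. Because the final map is within $\varepsilon<\delta/2$ of $f$, it is still admissible. It is $C^1$, normal at every zero, and regular on each $\Omega_{K}$, i.e.\ it is regular normal in $\Omega$.
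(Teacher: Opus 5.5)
The paper gives no proof of its own; it defers to \cite{balanov2025degree}. Your architecture matches that standard construction: smooth and Haar-average, treat orbit types from a maximal one downward, put the map in the form $f(\theta)+v$ on a tube over a compact part of each stratum, and shield the already-normalized higher-type zeros by a neighbourhood $\mathcal W'$. Steps 1 and 2 and your compatibility argument are fine. The gap is Step 3, which is exactly where the free-action Sard argument has to do its work.

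First, when $\dim W(K_j)>0$, no zero of $f^{K_j}$ on $\Omega_{K_j}$ can be regular. If $f^{K_j}(\theta)=0$ and $\xi$ lies in the Lie algebra of $N(K_j)$ but not of $K_j$, differentiating $f^{K_j}(\exp(t\xi)\theta)=\exp(t\xi)f^{K_j}(\theta)=0$ gives $Df^{K_j}(\theta)\,\xi\theta=0$ with $\xi\theta\neq0$. So on these strata ``regular'' means ``zero-free'', and choosing $c$ as a regular value of $f^{K_j}|_{\Omega_{K_j}}$ does not achieve this. Take $S^1$ acting on $\mathbb C$ by multiplication, $f(z)=z(|z|^2-1)$ and $D=(0,\infty)$. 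Every small real $c\neq0$ is a regular value of $f$, yet subtracting $c$ on $D$ and translating gives $e^{i\phi}y\mapsto e^{i\phi}\bigl(y(y^2-1)-c\bigr)$, which vanishes on a whole circle. What you need is for the perturbed map to miss $0$ along a local cross-section to the $W(K_j)$-orbits. That cross-section has dimension $\dim V^{K_j}-\dim W(K_j)<\dim V^{K_j}$, so Sard supplies such perturbations (in the example, $c=i\epsilon$ works).

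Second, even for finite $W(K_j)$ your perturbation is not well defined. $D=\overline{D_0}$ meets its translates $wD$ along $\partial D_0$, so translates of any neighbourhood of $D$ overlap, and the translated cutoffs give conflicting values there. Freeness only makes $ws\mapsto w\,p(s)$ consistent on a tube $W(K_j)\cdot S\cong W(K_j)\times S$ over a local cross-section $S$. If you instead shrink the support into $D_0$, zeros lying on $\partial D_0$ are never perturbed, and no single slice-supported perturbation reaches all of them.

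The standard repair is as follows. Cover the compact part of the zero set in $\Omega_{K_j}$ by finitely many such tubes, and use perturbations $ws\mapsto w\,\psi_\alpha(s)c_\alpha$, then extend over $G$ as you do. Then either treat the tubes one at a time, using that regularity (or absence of zeros) on a compact set survives small perturbations, or apply parametric transversality to the whole finite family. Equivalently, apply transversality to the section of $\Omega_{K_j}\times_{W(K_j)}V^{K_j}\to\Omega_{K_j}/W(K_j)$ induced by $f^{K_j}$. With this equivariant genericity lemma for free actions in place of your Step 3, and the perturbation small enough that the new zeros stay inside $\{\eta=1\}$, the rest of your proof goes through.
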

\noindent Therefore, by considering the linear homotopy
\[
h(t,\theta)=t f_o(\theta)+(1-t)f(\theta), \quad t\in[0,1],\ \theta\in V, 
\]
\noindent one obtains a $G$ \(\Omega\)-admissible homotopy between the original map \(f\) and its regular normal approximation \(f_o\). By homotopy property ($\mathcal P_2$), $G\text{-}\deg(f,\Omega)=G\text{-}\deg(f_o,\Omega)$. Moreover, regular normal approximation make the zero set amenable to degree computation: Normality provides a canonical local structure near zeros of each orbit type, fundamental domains removes the redundancy arising from the \(G\)-action and reduces the problem to a local Brouwer degree computation. Regularity, together with the Inverse Function Theorem, further ensures that the zeros are isolated, allowing their local contributions to be computed individually and summed using additivity property ($\mathcal P_1$).\vskip0.3cm
We next present the analytic formula to compute the equivariant Brouwer degree.
\begin{proposition}[Analytic Formula]
Let $(f_o,\Omega)$ be an admissible $G$-pair, with $f_o$ regular normal in
$\Omega$. For each $(K)\in\Phi_0(G,\Omega)$, the zeros of the restriction
$f_o^K$ in $\Omega$ are isolated and can be represented by finitely many
$W(K)$-orbits,
\[
(f_o^K)^{-1}(0)\cap\Omega
=
W(K)(\theta_1)\cup\cdots\cup W(K)(\theta_n).
\]
Then 
\[
G\text{-}\deg(f_o,\Omega)
=
\sum_{(K)\in\Phi_0(G,V)} n_K(K),\qquad
n_K
=
\sum_{j=1}^{n}
\operatorname{sign}\det Df_o^K(\theta_j).
\]
\end{proposition}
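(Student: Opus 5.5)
The plan is to reduce the analytic formula to the three axioms ($\mathcal P_1$)--($\mathcal P_3$). First I show that the zero set of $f_o$ in $\Omega$ is a finite union of $G$-orbits. Then I split $\Omega$ into disjoint invariant neighborhoods, one around each orbit. Finally I read off each local contribution from the normalization axiom and add them.

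\textbf{Step 1: zeros are isolated modulo $G$, and lie only in finite-Weyl strata.} Fix $(K)\in\Phi(G,\Omega)$ and a zero $\theta\in f_o^{-1}(0)\cap\Omega_K$. Write $V=T_\theta\Omega_{(K)}\oplus N_\theta\Omega_{(K)}$.
\begin{itemize}
\item Normality gives $f_o(\theta+v)=v$ for small normal $v$. So near $\theta$, zeros can occur only along the stratum $\Omega_{(K)}$ itself; in particular $f_o$ has no zeros of strictly smaller orbit type accumulating at $\theta$.
\item Along the stratum, $0$ is a regular value of $f_o^K|_{\Omega_K}$. By the inverse function theorem, $\theta$ is an isolated zero of $f_o^K$ in $\Omega_K$.
\item If $\dim W(K)>0$, the map $f_o^K$ is $W(K)$-equivariant and vanishes on the positive-dimensional orbit $W(K)\theta\subset V_K$. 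Hence $Df_o^K(\theta)$ kills the tangent directions of this orbit, which contradicts regularity. So zeros occur only for $(K)\in\Phi_0(G)$.
\end{itemize}
Admissibility of $f_o$ makes $f_o^{-1}(0)\cap\Omega$ compact and disjoint from $\partial\Omega$. Combining compactness with the local isolation, I conclude that $(f_o^K)^{-1}(0)\cap\Omega_K$ is a finite union of $W(K)$-orbits $W(K)(\theta_1),\dots,W(K)(\theta_n)$. Here I use that $W(K)$ acts freely on $V_K$, so each such orbit is finite.

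\textbf{Step 2: matching $W(K)$-orbits with $G$-orbits.} I use the identity $G(\theta)\cap V_K=N(K)\theta$ for $\theta\in V_K$: if $g\theta\in V_K$, then $gKg^{-1}=K$. Consequently the zero $G$-orbits of orbit type $(K)$ are exactly $G(\theta_1),\dots,G(\theta_n)$, and these are pairwise distinct. I then choose pairwise disjoint small invariant tubular neighborhoods $\mathcal U_{K,j}\subset\Omega$ of the orbits $G(\theta_j)$, over all $(K)$ and $j$. Each $\mathcal U_{K,j}$ contains no other zeros and retains the regular normal structure. Additivity ($\mathcal P_1$), iterated finitely many times, gives
\[
G\text{-}\deg(f_o,\Omega)=\sum_{(K)}\sum_j G\text{-}\deg(f_o,\mathcal U_{K,j}).
\]

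\textbf{Step 3: normalization.} On $\mathcal U_{K,j}$ the map $f_o$ is regular normal with a single zero orbit $G(\theta_j)$, and $G_{\theta_j}=K$. Axiom ($\mathcal P_3$) therefore yields
\[
G\text{-}\deg(f_o,\mathcal U_{K,j})=\operatorname{sign}\det Df_o^K(\theta_j)\,(K).
\]
This sign is independent of the chosen representative in $W(K)\theta_j$. Indeed, $f_o^K$ is $W(K)$-equivariant, so moving the representative conjugates $Df_o^K$ by a linear map, and determinants are conjugation-invariant. Collecting the terms by $(K)$ gives the stated formula for $n_K$.

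\textbf{Expected obstacle.} I expect the main difficulty to be Step 1: showing that regularity plus normality really force the zeros to be isolated modulo $G$. The concern is zeros of lower orbit type accumulating at a zero of higher orbit type. This is exactly where the normal form $f_o(\theta+v)=v$ must be used. I would handle it by covering the compact zero set with finitely many of the neighborhoods $\theta+B_{\delta_\theta}(N_\theta)$ supplied by the definition of a normal map, translated along the orbits.
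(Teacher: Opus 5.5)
Your outline follows the route the paper sketches. The paper gives no detailed proof, only the remark that regularity plus the inverse function theorem isolates the zeros and that additivity sums the local contributions. Steps 2 and 3 are sound: the identity $G(\theta)\cap V_K=N(K)\theta$, disjoint invariant tubes with iterated $(\mathcal P_1)$, then $(\mathcal P_3)$ on each tube, with the sign independent of the representative by $W(K)$-equivariance. Your exclusion of strata with $\dim W(K)>0$ is also correct. Reading the zero set as $(f_o^K)^{-1}(0)\cap\Omega_K$ rather than $(f_o^K)^{-1}(0)\cap\,\Omega$ is the right interpretation, since zeros in $V^K$ with larger isotropy belong to other coefficients.

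\textbf{The gap: the first bullet of Step 1.} This is the point you flagged yourself.
\begin{enumerate}
\item[(i)] The normality identity $f_o(\theta+v)=v$ holds only on the affine normal disk $\theta+B_{\delta_\theta}(N_\theta)$ at the zero $\theta$ itself. It says nothing about points $\theta'+v$ with $\theta'\in\Omega_{(K)}$ near $\theta$ and $f_o(\theta')\neq0$. So ``zeros near $\theta$ lie on the stratum'' does not follow.
\item[(ii)] Your proposed remedy also fails. The set $G\bigl(\theta+B_{\delta_\theta}(N_\theta)\bigr)$ has dimension $\dim V-\dim V^K$, so it is not a neighborhood of $G(\theta)$: it misses the $V^K$-directions. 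Finitely many such sets need not cover a neighborhood of the zero set.
\item[(iii)] Consequently, isolation inside $\Omega_K$ plus compactness does not rule out zeros of smaller orbit type accumulating on $G(\theta)$.
\end{enumerate}

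\textbf{The fix: use normality infinitesimally.}
\begin{enumerate}
\item[(a)] Since $f_o$ is $C^1$, differentiating the normality identity gives $Df_o(\theta)v=v$ for $v\in N_\theta$.
\item[(b)] For $(K)\in\Phi_0(G)$ one has $\mathfrak g^K\subseteq\mathfrak n(K)=\mathfrak k$. Hence, by averaging over $K$, $(\mathfrak g\theta)^K=\mathfrak g^K\theta=0$, i.e.\ $\mathfrak g\theta\perp V^K$.
\item[(c)] This gives the orthogonal splitting $V=V^K\oplus\mathfrak g\theta\oplus N_\theta$, and the slice is $S_\theta=V^K\oplus N_\theta$.
\item[(d)] Because $f_o(V^K)\subseteq V^K$, the restriction $Df_o(\theta)|_{S_\theta}$ equals $Df_o^K(\theta)\oplus\mathrm{Id}_{N_\theta}$. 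This is invertible by regularity.
\item[(e)] Apply the inverse function theorem to $w\mapsto P_{S_\theta}f_o(\theta+w)$ on $S_\theta$. It makes $\theta$ the only zero in $\theta+B_\varepsilon(S_\theta)$.
\item[(f)] The slice theorem then shows that $G(\theta)$ is an isolated zero orbit in $V$.
\end{enumerate}
For finite $G$ this is simpler: apply the inverse function theorem to $f_o$ on all of $V$, since there $Df_o(\theta)=Df_o^K(\theta)\oplus\mathrm{Id}_{(V^K)^\perp}$. With isolation established, compactness gives finitely many zero orbits, and the rest of your argument goes through unchanged.
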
\vskip0.3cm
\subsection{Methodology Through Linear Isomorphism and Reduction to Basic Degree} 
Notice that both approaches discussed above may be difficult to carry out explicitly. The recursive construction in \ref{sec:rec} requires the computation of \(\deg(f^H,\Omega^H)\), while the regular-normal approach in \ref{sec: rnm} requires an explicit regular normal approximation and suitable fundamental domains. Therefore, in this section, we consider a linearization technique based on local or global linear approximation of the original nonlinear map, reducing the degree computation to that of a linear \(G\)-isomorphism. %Since \(V\) is an orthogonal \(G\)-representation, representation and spectral decompositions can then be applied, leading to a formulation in terms of basic degrees.
\vskip0.3cm
Consider $V$ as a $n$-dimensional orthogonal $G$-representation and let $L:V\to V$ be a $G$-equivariant linear isomorphism. 
\begin{definition}[$G$-Isotypic Decomposition]\label{def:isotypic} 
The finite dimensional $G$-representation $V$ admits a $G$-isotypic decomposition
\[
V=V_0\oplus V_1\oplus \cdots\oplus V_r,
\]
where each $V_i$ is modeled on the irreducible $G$-representation $\mathcal V_i,\;\; i=0,\cdots,r.$
\end{definition}\vskip0.3cm
\noindent Since $L$ is $G$-equivariant, then $L(V_i)\subseteq V_i.$ Now let $\delta (L)$ be the spectrum of $L.$ For $\beta\in \delta(L),$ denote by $E(\beta)$ the generalized eigenspace, then $E(\beta)$ is $G$-invariant. Define
$E_i(\beta):=E(\beta)\cap V_i,$
obviously, $L(E_i(\beta))=E_i(\beta).$ We further define
\[
E_i^-:=\bigoplus_{\beta\in \delta^-{(L)}}E_i(\beta),
\]
where $\delta^-{(L)}$ denotes negative spectrum. It follows
that $L(E_i^-)=E_i^-.$ Hence the restriction 
$L|_{E_i^-}:E_i^-\to E_i^-$
is a well-defined $G$-equivariant linear isomorphism. Notice that $L|_{E_i^-}$ is $G$-equivariantly homotopic to $-\operatorname{Id}|_{E_i^-},$ by defining $\mu_i(\beta)
=
\frac{\dim E_i(\beta)}{\dim \mathcal{V}_i}$ and 
$\mu_i
=
\sum_{\beta\in\delta^-(L)} \mu_i(\beta),$ we have
\[
G\text{-}\deg\left(
\left.L\right|_{E_i^-}, B(E_i^-)
\right)
=
\left(
G\text{-}\deg\left(
-\operatorname{Id}, B(\mathcal{V}_i)
\right)
\right)^{\mu_i}.
\]
This motivates the following definition of basic degree: 
\begin{definition}[Basic Degree] Define the $\cV_i$-basic degree (or simply {\it basic degree}) as following
\begin{equation} \label{eq:basicdeg}
\deg_{\cV_i} := G\text{-}\deg(-\id, B(\cV_i)).
\end{equation}
Moreover, the basic degree takes the form
\begin{equation}
\deg_{\mathcal{V} _{i}}=\sum_{(K)}n_{K}(K)    
\end{equation}
where
\begin{equation}\label{eq:formula}
n_{K}=\frac{(-1)^{\dim\mathcal{V} _{i}^{K}}- \sum_{(K)<(H)}{n_{H}\, n(K,H)\, \left|  W(H)\right|  }}{\left|  W(K)\right|  }.
\end{equation}
\end{definition}\vskip 0.3cm
\noindent Therefore, by homotopy ($\mathcal P_2$) and product (P7) properties, one has
\begin{equation}
    G\text{-}\deg(L,B(V))
=
\prod_{\beta\in\delta^-(L)}
\prod_{i=0}^{r}
\left(\deg_{\mathcal{V}_i}\right)^{\mu_i(\beta)}
=
\prod_{i=0}^{r}
\left(\deg_{\mathcal{V}_i}\right)^{\mu_i},
\end{equation}
where the product is taken in the Burnside ring $A(G).$
We next list some important properties of basic degrees.
\begin{definition}[Leading Coefficient]
Let $a=(G)+\sum_{j=1}^{m} n_j(K_j)\in A(G),$
where $n_j\neq 0,
\;
(K_j)<(G),
\; j=1,\ldots,m.$
Among the orbit types appearing with nonzero coefficients, suppose that
$(K_{j_o})$ is maximal with respect to the ordering of orbit types.
Then the coefficient $n_{j_o}$ associated with $(K_{j_o})$ is called
a \emph{leading coefficient} of $a$.
\end{definition}

\begin{lemma}\label{le:basic_coefficient}
Let $n_o$ denote the leading coefficient associated with the orbit type
$(K_o)$ in the basic degree $\deg_{\mathcal{V}_i}$. Then
$\dim \mathcal{V}_i^{K_o}$ is odd, and $n_o$ is determined as follows:
\[
n_o=
\begin{cases}
-2, & |W(K_o)|=1,\\[2mm]
-1, & |W(K_o)|=2.
\end{cases}
\]
\end{lemma}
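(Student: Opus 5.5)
The plan is to read the leading coefficient directly off the recursive formula \eqref{eq:formula} for the coefficients of $\deg_{\mathcal V_i}$. The key point is that maximality of $(K_o)$ kills every term in the correction sum except the one coming from $(G)$. I assume throughout, as the definition of leading coefficient requires, that $\deg_{\mathcal V_i}$ has the form $(G)+\sum_j n_j(K_j)$. This means $\mathcal V_i$ is a nontrivial irreducible representation, so $\mathcal V_i^G=\{0\}$. The formula \eqref{eq:formula} with $K=G$ then gives $n_G=(-1)^{0}=1$, consistent with that form. If $\mathcal V_i$ were trivial, we would have $\deg_{\mathcal V_i}=-(G)$, and there would be no leading coefficient to discuss.

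First, fix $(K_o)<(G)$ maximal among the orbit types with nonzero coefficient, and apply \eqref{eq:formula} with $K=K_o$. The correction sum runs over $(H)>(K_o)$. By maximality, every such $(H)\neq(G)$ has $n_H=0$, so only $H=G$ survives. Since $n(K_o,G)=1$ (the class $(G)$ has the single member $G$, which contains $K_o$) and $|W(G)|=1$, we obtain
\[
n_o=\frac{(-1)^{\dim\mathcal V_i^{K_o}}-1}{|W(K_o)|}.
\]
Here $K_o\in\Phi_0(G)$, because the coefficients of the Burnside-ring element live on $\Phi_0(G)$, so $|W(K_o)|$ is finite.

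Next, since $n_o\neq 0$ by assumption, the numerator must be nonzero. This forces $(-1)^{\dim\mathcal V_i^{K_o}}=-1$, so $\dim\mathcal V_i^{K_o}$ is odd, and then $n_o=-2/|W(K_o)|$. Finally, $n_o$ is an integer because $\deg_{\mathcal V_i}\in A(G)$ has integer coefficients. Hence $|W(K_o)|$ divides $2$, which leaves exactly the two cases $|W(K_o)|=1$, giving $n_o=-2$, and $|W(K_o)|=2$, giving $n_o=-1$.

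The main subtle step is the integrality argument. The recursion \eqref{eq:formula} is a priori just a rational expression, so I would justify integrality by appealing to the fact that the basic degree is a well-defined element of the free $\mathbb Z$-module $A(G)$. If one prefers a direct argument, I would instead use the analytic formula: $-\mathrm{Id}$ is homotopic to a regular normal map whose $K_o$-zeros form free $W(K_o)$-orbits in $\mathcal V_i^{K_o}\setminus\{0\}$, and this shows the coefficient is a signed count of orbits, hence an integer. Everything else is immediate bookkeeping with the subgroup lattice.
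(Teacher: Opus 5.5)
Your argument is correct. Maximality of $(K_o)$ leaves only the $(G)$-term in the recurrence \eqref{eq:formula}, with $n_G=1$, $n(K_o,G)=1$ and $|W(G)|=1$, so $n_o=\bigl((-1)^{\dim\mathcal V_i^{K_o}}-1\bigr)/|W(K_o)|$; nonvanishing then forces $\dim\mathcal V_i^{K_o}$ to be odd, and integrality in $A(G)$ forces $|W(K_o)|\in\{1,2\}$. The paper states this lemma without proof, as a known property of basic degrees, and your derivation is the standard argument for it.
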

\begin{lemma}\label{le:involutive}
Every basic degree $\deg_{\mathcal{V}_i}\in A(G)$ is involutive in Burnside ring. In particular,
\[
\left(\deg_{\mathcal{V}_i}\right)^2
=
\deg_{\mathcal{V}_i}\cdot\deg_{\mathcal{V}_i}
=
(G).
\]
\end{lemma}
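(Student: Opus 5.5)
The plan is to realise the square $(\deg_{\mathcal V_i})^2$ as the equivariant Brouwer degree of a single linear map on $\mathcal V_i\oplus\mathcal V_i$. Then I would deform that map equivariantly to the identity, and conclude by the unitary property ($\mathcal P_5$).

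First, by the product property ($\mathcal P_7$) applied to the admissible pair $(-\mathrm{Id},B(\mathcal V_i))$ taken twice, one gets
\[
\bigl(\deg_{\mathcal V_i}\bigr)^2
=G\text{-}\deg\bigl(-\mathrm{Id}_{\mathcal V_i\oplus\mathcal V_i},\,B(\mathcal V_i)\times B(\mathcal V_i)\bigr).
\]
Here $G$ acts diagonally on $W:=\mathcal V_i\oplus\mathcal V_i$. The only zero of $-\mathrm{Id}$ is the origin. So excision ($\mathcal P_6$) lets me replace $B(\mathcal V_i)\times B(\mathcal V_i)$ by any $G$-invariant open bounded neighbourhood $\Omega$ of $0$ in $W$, for instance the unit ball $B(W)$.

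The key step is an admissible equivariant homotopy from $-\mathrm{Id}_W$ to $\mathrm{Id}_W$. I would use the block rotation
\[
h_t(x,y)=\bigl(\cos(\pi t)\,x-\sin(\pi t)\,y,\;\sin(\pi t)\,x+\cos(\pi t)\,y\bigr),\qquad t\in[0,1],
\]
which satisfies $h_0=\mathrm{Id}_W$ and $h_1=-\mathrm{Id}_W$. Each $h_t$ mixes the two copies only by real scalars, while $G$ acts by the same linear map on both copies, so $h_t(g\cdot(x,y))=g\cdot h_t(x,y)$. Moreover each $h_t$ is an orthogonal isomorphism, hence it has no zeros on $\partial\Omega$, and the homotopy is $\Omega$-admissible. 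By ($\mathcal P_2$) and then ($\mathcal P_5$),
\[
\bigl(\deg_{\mathcal V_i}\bigr)^2=G\text{-}\deg(\mathrm{Id}_W,\Omega)=(G).
\]

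This is the point where the real $2\times2$ rotation trick is essential. $-\mathrm{Id}$ on a single copy of $\mathcal V_i$ need not be equivariantly homotopic to the identity through isomorphisms, since already for $G$ trivial the sign $(-1)^{\dim\mathcal V_i}$ obstructs it. The step I expect to need the most care is therefore only the bookkeeping in the first step: checking that the product in ($\mathcal P_7$) is taken over the diagonal $G$-action on the product representation, so that the product domain is a legitimate $G$-invariant neighbourhood of $0$ in $W$. The involutivity then follows, together with the invertibility in Proposition~\ref{prop:invertible} in the Burnside ring setting.
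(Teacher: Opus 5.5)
Your proof is correct. The paper states this lemma without proof, and your argument is the standard one. By ($\mathcal P_7$) the square is $G\text{-}\deg(-\mathrm{Id},B(\mathcal V_i)\times B(\mathcal V_i))$ on $\mathcal V_i\oplus\mathcal V_i$ with the diagonal action. The rotation $h_t$ is an admissible $G$-homotopy of orthogonal isomorphisms from $\mathrm{Id}$ to $-\mathrm{Id}$, and ($\mathcal P_2$) with ($\mathcal P_5$) finishes the argument. The excision step is unnecessary, since $B(\mathcal V_i)\times B(\mathcal V_i)$ is already an invariant bounded neighbourhood of $0$.

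The one thing to fix is your closing sentence. Proposition~\ref{prop:invertible} concerns the gradient basic degrees in $U(G)$ and plays no role here. In $A(G)$, invertibility is a consequence of what you proved, since each $\deg_{\mathcal V_i}$ is its own inverse.

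Note also that your argument is genuinely a Brouwer-degree argument. For $0<t<1$ the map $h_t$ is not self-adjoint, hence not a gradient, so nothing transfers to $U(G)$. In fact involutivity fails there for non-finite $G$: for $S^1$ acting on $\mathbb C$ by multiplication, $\nabla_{S^1}\text{-}\deg_{\mathbb C}=(S^1)-(\mathbb Z_1)$ squares to $(S^1)-2(\mathbb Z_1)$, because $(\mathbb Z_1)*(\mathbb Z_1)=0$ in $U(S^1)$.

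The paper's later use of the lemma for the gradient basic degrees of $S_4\times S_4$ and $\Delta S_4$ is legitimate only because these groups are finite. In that case $\pi_0$ is the identity, and ($\mathcal G_9$) identifies the gradient basic degrees with the Brouwer ones.
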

\vskip.3cm

%---

\section{Amalgamated Groups}\label{app:aml}
We describe the subgroups of \(S_k\times S_k\) using a notation based on Goursat’s lemma \cite{Goursat}. More generally, a subgroup \(\mathcal K\leq G_1\times G_2\) can be represented through subgroups \(K\leq G_1\), \(H\leq G_2\), and surjective homomorphisms onto a common quotient \(S\):
\[
\mathcal K
=
\{(k,h)\in K\times H:\varphi(k)=\psi(h)\}.
\]
We denote this fiber-product subgroup by 
\[
K^{\varphi}\times_S^{\psi}H.
\]
This construction is referred to as an amalgamated subgroup. Set $K_0=\ker\varphi,$ $H_0=\ker\psi.$
Once the identification \(K/K_0\cong H/H_0\cong S\) has been fixed, we abbreviate the subgroup as
\[
\mathcal K=K^{K_0}\times_S^{H_0}H.
\]
The chosen quotient identification is understood whenever this abbreviated notation is used.

\section{Some computations}\label{app:computations}
\subsection{Some computations for softplus teacher-student model in \ref{sec:teacherstudent}}\label{app:computations_1}
\leavevmode\par
\smallskip
\noindent \textbf{An a priori bound.} We work with $k=4$, $\beta=1$, and $\lambda=0.01$.
For $x\sim\mathcal N(0,I_4)$, write
\[
\sigma(z)=\log(1+e^z),
\;\;
s(z)=\sigma'(z),
\]
and set \[
A=\sum_{i=1}^4\sigma(\theta_i^Tx),\;\;B=\sum_{j=1}^4\sigma(x_j),\;\;
D=\sum_{i=1}^4s(\theta_i^Tx)\theta_i^Tx.
\]
Let $m=W(e^{-1}),\;\;
C_B=2\sqrt{\mathbb E[B^2]},$ where $W$ denotes the principal real branch of the Lambert
$W$ function, defined by $W(z)e^{W(z)}=z$.

\begin{theorem}
\label{thm:softplus-apriori}
Let $t_o$ be the unique positive root of
\begin{equation}
\label{eq:softplus-radial-polynomial}
p(t)=\frac{t^4}{100}
-\left(8m\sqrt{\frac{2}{\pi}}+C_B\right)t
-16m\log 2.
\end{equation}
Then 
\[
\langle\nabla\mathscr L(\theta),\theta\rangle>0
\]
for every $\|\theta\|>t_o.$ Moreover, $t_o\approx9.6103<10$, and every critical point
of $\mathscr L$ lies in $B_{10}(0)$.
\end{theorem}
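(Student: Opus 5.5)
We need to prove $\langle\nabla\mathscr L(\theta),\theta\rangle\ge p(\|\theta\|)$, or at least a bound of that shape. Once we have it, positivity for $\|\theta\|>t_o$ follows because $p$ is eventually increasing and has a unique positive root. Then every critical point satisfies $\|\theta\|\le t_o<10$.

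From \eqref{eq:lg},
\[
\langle\nabla\mathscr L(\theta),\theta\rangle=\mathbb E\bigl[(A-B)D\bigr]+\lambda\|\theta\|^4 ,
\]
with $\lambda\|\theta\|^4=t^4/100$ for $t=\|\theta\|$. We split $\mathbb E[(A-B)D]=\mathbb E[AD]-\mathbb E[BD]$ and bound each term from below.

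\textbf{Key pointwise inequality.} We use $\sigma(z)\ge z_+$ and $\sigma(z)>0$, together with the elementary bound $z\,s(z)\ge -m$, where $m=W(e^{-1})$. Indeed, $\min_z z/(1+e^{-z})$ is attained where $1+e^{-z}+ze^{-z}=0$. This gives $z=-1-W(e^{-1})$ and minimum value $-W(e^{-1})$ (to be checked). Also $z\,s(z)\le z_+$.

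\textbf{The term $\mathbb E[AD]$.} Write $z_i=\theta_i^Tx$ and
\[
AD=\sum_{i,j}\sigma(z_i)\,s(z_j)z_j .
\]
Terms with $z_j\ge 0$ are nonnegative. Terms with $z_j<0$ are at least $-m\,\sigma(z_i)$. So $AD\ge -m\cdot 4A$ roughly, and in fact $AD\ge -4mA$. Next, $\sigma(z)\le \log 2+z_+\le \log 2+|z|$, and $\mathbb E|z_i|=\sqrt{2/\pi}\,\|\theta_i\|$. Hence
\[
\mathbb E[A]\le 4\log 2+\sqrt{2/\pi}\sum_i\|\theta_i\|\le 4\log2+2\sqrt{2/\pi}\,t
\]
by Cauchy--Schwarz. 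This gives $\mathbb E[AD]\ge -16m\log 2-8m\sqrt{2/\pi}\,t$, which matches the constants in $p$.

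\textbf{The term $-\mathbb E[BD]$.} Since $B>0$, we use $D\le \sum_i (z_i)_+$, so $\mathbb E[BD]\le \sqrt{\mathbb E[B^2]}\,\sqrt{\mathbb E[(\sum_i (z_i)_+)^2]}$. Then
\[
\mathbb E\Bigl[\bigl(\textstyle\sum_i (z_i)_+\bigr)^2\Bigr]\le 4\sum_i\mathbb E[z_i^2]=4t^2 ,
\]
which gives $\mathbb E[BD]\le 2\sqrt{\mathbb E[B^2]}\,t=C_Bt$. Combining the two estimates yields $\langle\nabla\mathscr L(\theta),\theta\rangle\ge p(t)$.

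\textbf{Root and conclusion.} Since $p''>0$ for $t>0$, $p(0)<0$ and $p\to\infty$, there is a unique positive root $t_o$, and $p>0$ beyond it. So critical points lie in $\|\theta\|\le t_o$. For the numerics we need $\mathbb E[B^2]$. We have $\mathbb E[\sigma(x_1)^2]$ and $(\mathbb E\sigma(x_1))^2$ by independence, with
\[
\mathbb E[B^2]=4\,\mathbb E[\sigma^2]+12\,(\mathbb E[\sigma])^2 ,
\]
computed by one-dimensional quadrature. Substituting gives $t_o\approx 9.61$, and checking $p(10)>0$ rigorously suffices.

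The main obstacle is making the numerical step rigorous, i.e. certified bounds on $\mathbb E\sigma$ and $\mathbb E\sigma^2$. Verifying the exact minimizer constant $m$ is the other delicate point.
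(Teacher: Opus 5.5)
Your derivation of $\langle\nabla\mathscr L(\theta),\theta\rangle\ge p(\|\theta\|)$ is correct. It follows the paper's proof closely: the same decomposition $\mathbb E[AD]-\mathbb E[BD]+t^4/100$, the same intermediate bounds $\mathbb E[AD]\ge-4m\,\mathbb E[A]$ and $\mathbb E[BD]\le C_Bt$, and hence the same polynomial $p$.

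The one real difference is the $\mathbb E[AD]$ step.
\begin{itemize}
\item The paper uses $zs(z)\ge z-m$, so $D\ge v^Tx-4m$ with $v=\sum_i\theta_i$. It then needs Gaussian integration by parts and $\mathbb E[s(\theta_i^Tx)]=\tfrac12$ to show $\mathbb E[A\,v^Tx]=\tfrac12\|v\|^2\ge0$, and afterwards it discards that term.
\item Your pointwise bound $zs(z)\ge-m$ gives $D\ge-4m$ directly. That yields $AD\ge-4mA$ with no Stein-type identity, so it is shorter and reaches exactly the same constant.
\end{itemize}
The remaining differences are cosmetic. Your one-sided estimate $D\le\sum_i(z_i)_+$ and the paper's $\|D\|_{L^2}\le\sum_i\|\theta_i\|$ are interchangeable. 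Your convexity argument ($p''>0$) and the paper's ``decreasing then increasing'' both give uniqueness of $t_o$.

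The step you leave open is the one that turns $t_o\approx9.61$ into the claimed $t_o<10$, which is what places every critical point in $B_{10}(0)$. You need a certified upper bound on $C_B$, and cheap bounds are not enough.
\begin{itemize}
\item $p(10)>0$ is equivalent to $C_B<10-8m\sqrt{2/\pi}-1.6\,m\log2\approx7.91$.
\item The bound $\sigma(z)\le\log2+z_+$ that you used for $\mathbb E[A]$ only gives $\mathbb E[B^2]\le 20.45$ when applied to $B$, i.e.\ $C_B\le 9.04$, which is too weak.
\end{itemize}
The paper closes this without quadrature:
\begin{enumerate}
\item Write $\sigma(z)=\log2+\tfrac z2+\log\cosh\tfrac z2$ and use $\log\cosh u\le u^2/2$. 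This gives $0<B\le4\log2+\tfrac S2+\tfrac Q8$ with $S=\sum_jx_j$ and $Q=\sum_jx_j^2$.
\item The Gaussian moments are $\mathbb E[S]=\mathbb E[SQ]=0$, $\mathbb E[S^2]=\mathbb E[Q]=4$, $\mathbb E[Q^2]=24$. Hence $\mathbb E[B^2]\le16(\log2)^2+4\log2+\tfrac{11}{8}<12$, so $C_B<4\sqrt3<7$.
\item Since $0.3\,e^{0.3}>e^{-1}$ and $we^w$ is increasing, $m<0.3$.
\item Together these give $p(10)>100-94-4.8>0$.
\end{enumerate}
Supplying this estimate, or certified quadrature achieving $C_B<7.91$, completes your argument.
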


\begin{proof}
Put $t=\|\theta\|$ and $v=\sum_{i=1}^4\theta_i$.
Differentiating along the radial direction gives
\[
\langle\nabla\mathscr L(\theta),\theta\rangle
=\mathbb E[AD]-\mathbb E[BD]+\frac{t^4}{100}.
\]
Since
\[
\max_{a\ge0}\frac{a}{1+e^a}=m,
\]
we have $zs(z)\ge z-m$, and hence $D\ge v^Tx-4m$.
Gaussian integration by parts and the identity
$\mathbb E[s(\theta_i^Tx)]=\tfrac12$ yield
\[
\mathbb E[Av^Tx]
=\sum_{i=1}^4
\mathbb E[s(\theta_i^Tx)]\langle\theta_i,v\rangle
=\frac12\|v\|^2\ge0.
\]
Also, $\sigma(z)\le\log 2+|z|$ implies
\[
\mathbb E[A]
\le4\log 2+\sqrt{\frac{2}{\pi}}
\sum_{i=1}^4\|\theta_i\|
\le4\log 2+2\sqrt{\frac{2}{\pi}}\,t.
\]
Since $A\ge0$, it follows that
\[
\mathbb E[AD]
\ge-4m\,\mathbb E[A]
\ge-16m\log 2-8m\sqrt{\frac{2}{\pi}}\,t.
\]
On the other hand, $0<s<1$ gives
\[
\|D\|_{L^2}
\le\sum_{i=1}^4\|\theta_i^Tx\|_{L^2}
=\sum_{i=1}^4\|\theta_i\|
\le2t.
\]
Thus, by Cauchy--Schwarz,
\[
|\mathbb E[BD]|\le C_Bt,
\qquad
\langle\nabla\mathscr L(\theta),\theta\rangle\ge p(t).
\]
\noindent To bound $C_B$, use
\[
\sigma(z)
=\log 2+\frac z2+\log\cosh\frac z2
\le\log 2+\frac z2+\frac{z^2}{8}.
\]
Writing $S=\sum_{j=1}^4x_j$ and $Q=\sum_{j=1}^4x_j^2$,
we obtain
\[
\mathbb E[B^2]
\le
\mathbb E\!\left[
\left(4\log 2+\frac S2+\frac Q8\right)^2
\right]
=16(\log 2)^2+4\log 2+\frac{11}{8}<12,
\]
so $C_B<4\sqrt3<7$.
For numerical evaluation, independence gives
\[
\mathbb E[B^2]
=4\mathbb E[\sigma(Z)^2]
+12\bigl(\mathbb E[\sigma(Z)]\bigr)^2,
\qquad Z\sim\mathcal N(0,1).
\]
This yields $C_B\approx6.77694932$ and the stated
approximation of $t_o$. Finally, $p$ decreases and then increases to $+\infty$
on $[0,\infty)$. Since $p(0)<0$, it has exactly one
positive root and is positive thereafter.
Using $m<0.3$ and $C_B<7$, we also have
\[
p(10)>100-9.4\cdot10-4.8=1.2>0.
\]
\end{proof}
\subsection{Some computations for single-head single-layer transformer in \ref{subsec:fmtr-recursive}}\label{app:admissibility}
\leavevmode\par
\smallskip
\noindent \textbf{The auxiliary potential}.
Under the identification $E\cong\mathbb C$, the generators $(123)$
and $(23)$ act by multiplication by $e^{2\pi i/3}$ and complex
conjugation, respectively. Hence $q$, and therefore $\phi$, is
$S_3$-invariant. Writing $z=\zeta(v)$, we have
\[
\nabla q(z)=6\overline z^{\,2}(z^3-1),
\qquad
\nabla\phi(z)=\chi(q(z))\nabla q(z).
\]
Since $\chi(t)>0$ for $t>1/64$, critical set of $\phi$ is given by
\[
\{0\}\cup\mathcal A.
\]
The Hessian vanishes at the origin because the first nonconstant
term of $q$ is cubic, and on $\mathcal A$ by the flatness of $\chi$.
Thus
\begin{equation}
\nabla^2\phi=0
\qquad\text{on }\{0\}\cup\mathcal A.
\label{eq:potential-flatness}
\end{equation}

The disk $\{|\xi-1|\le 1/8\}$ avoids the branch value $0$ of
$z\mapsto z^3$, so its preimage $\mathcal A$ consists of three
disjoint closed topological disks. Moreover,
\begin{equation}
\frac12
<
\left(\frac78\right)^{1/3}
\le \|v\|
\le
\left(\frac98\right)^{1/3}
<
\frac54
\quad(v\in\mathcal A),
\qquad
\sqrt2\left(\frac78\right)^{1/3}>\frac54.
\label{eq:potential-radii}
\end{equation}

\noindent \textbf{Admissibility, homotopy, and degeneracy.} Consider
\[
\widetilde{\mathscr L}_t
=
t\mathscr L
+\frac{\lambda}{2}\|u\|^2
+\phi(v)+\phi(w),
\qquad 0\le t\le1.
\]
Using equation \eqref{eq:hahahah}, and the Hessian lower bound defining \(\mathcal C\) in \eqref{eq:lowerbound}, for $\|(v,w,u)\|\le5/4$ we obtain
\[
\begin{aligned}
\bigl\langle\nabla_u\widetilde{\mathscr L}_t(v,w,u),u\bigr\rangle
&=
\lambda\|u\|^2
+t\int_0^1
\bigl\langle
\nabla^2_{uu}\mathscr L(v,w,su)u,u
\bigr\rangle\,ds\\
&\ge
(\lambda-\mathcal C)\|u\|^2.
\end{aligned}
\]
Thus every critical point in this ball has $u=0$, and the remaining
equations reduce to
\[
\nabla\phi(v)=0,
\qquad
\nabla\phi(w)=0.
\]
The bounds in \eqref{eq:potential-radii} show that, for every
$t\in[0,1]$, the critical set in 
$\overline\Omega$
is the critical family in equation \eqref{eq:critical-family}, which lies entirely in $\Omega$.
Hence $\nabla\widetilde{\mathscr L}_t$ is an admissible $G$-gradient homotopy
on $\Omega$, and
\[
\nabla_G\text{-}\deg(\nabla\widetilde{\mathscr L},\Omega)
=
\nabla_G\text{-}\deg(\nabla\widetilde{\mathscr L}_0,\Omega).
\]

By \eqref{eq:hahahah}, the Hessian of $\mathscr L$ has zero blocks involving the two
$E$-coordinates at $u=0$. Its normal block is positive semidefinite
because $\mathscr L\ge0$ and
$\mathscr L(v,w,0)=0$.
Together with \eqref{eq:potential-flatness}, this gives, at every
critical point in $\Omega$,
\[
\nabla^2\mathscr{\widetilde{L}}
=
0_{E\oplus E}
\oplus
\left(
\lambda \mathbf{Id}+\nabla^2_{uu}\mathscr L
\right),
\qquad
\ker\nabla^2\mathscr{\widetilde{L}}
=
E\oplus E\oplus\{0\}.
\]
The Hessian kernel therefore has dimension four. By \eqref{eq:critical-family}, every critical point in \(\Omega\) is accumulated by other critical points. Since \(G\) is finite, no critical orbit in \(\Omega\) is isolated.\vskip1cm

\bibliographystyle{plain} 
\bibliography{mybib_minima}
\end{document}